\documentclass{article}

\usepackage{microtype}
\usepackage{graphicx}
\usepackage{subcaption}
\usepackage{booktabs} 

\usepackage{hyperref}

\usepackage[accepted]{icml2026}

\usepackage{amsmath}
\usepackage{amssymb}
\usepackage{mathtools}
\usepackage{amsthm}

\usepackage{bm,bbm}
\usepackage{xspace}
\usepackage{multirow}
\usepackage{cancel}
\usepackage[normalem]{ulem}
\usepackage{enumitem}
\usepackage{caption}
\usepackage{booktabs}

\usepackage{subcaption}
\usepackage{graphicx}
\usepackage{threeparttable, array, float}
\usepackage{wrapfig}

\usepackage{tabularx}
\usepackage[table]{xcolor}
\usepackage{array}
\usepackage{booktabs}
\usepackage{threeparttable}
\usepackage{circledsteps}

\usepackage{tocloft}

\usepackage[capitalize,noabbrev]{cleveref}
\usepackage{pifont}
\usepackage{tcolorbox}

\theoremstyle{plain}
\newtheorem{theorem}{\bf Theorem}
\newtheorem{lemma}{\bf Lemma}
\newtheorem{corollary}{\bf Corollary}

\theoremstyle{remark}

\newtheorem{remark}{\bf Remark}

\newtheorem{definition}{\bf Definition}

\newtheorem{assumption}{\bf Assumption}

\crefname{assumption}{Assumption}{Assumptions}
\Crefname{assumption}{Assumption}{Assumptions}

\Crefname{condition}{Condition}{Conditions}
\Crefname{proposition}{Proposition}{Proposition}

\usepackage[textsize=tiny]{todonotes}

\icmltitlerunning{A Tale of Two Problems: Multi-Task Bilevel Learning Meets Equality Constrained Multi-Objective Optimization}

\begin{document}

\twocolumn[
  \icmltitle{A Tale of Two Problems: Multi-Task Bilevel Learning Meets Equality Constrained Multi-Objective Optimization}



  \icmlsetsymbol{equal}{*}

  \begin{icmlauthorlist}
    \icmlauthor{Zhiyao Zhang}{osu}
    \icmlauthor{Myeung Suk Oh}{osu}
    \icmlauthor{Zhen Qin}{osu}
    \icmlauthor{Jiaxiang Li}{meta}
    \icmlauthor{Xin Zhang}{meta}
    \icmlauthor{Jia Liu}{osu}
  \end{icmlauthorlist}

  \icmlaffiliation{osu}{Department of Electrical and Computer Engineering, The Ohio State University, Columbus, OH, USA}
  \icmlaffiliation{meta}{Meta Platforms, Inc., Menlo Park, CA, USA}

  \icmlcorrespondingauthor{Zhiyao Zhang}{zhang.15178@osu.edu}
  \icmlcorrespondingauthor{Jia Liu}{liu@ece.osu.edu}

  \icmlkeywords{Machine Learning, ICML}

  \vskip 0.3in
]



\printAffiliationsAndNotice{}  

\begin{abstract}
In recent years, bilevel optimization (BLO) has attracted significant attention for its broad applications in machine learning.
However, most existing works on BLO remain confined to the single-task setting and rely on the lower-level strong convexity assumption, which significantly restricts their applicability to modern machine learning problems of growing complexity.
In this paper, we make the first attempt to extend BLO to the multi-task setting under a relaxed lower-level general convexity (LLGC) assumption.
To this end, we reformulate the multi-task bilevel learning (MTBL) problem with LLGC into an equality constrained multi-objective optimization (ECMO) problem.
However, ECMO itself is a new problem that has not yet been studied in the literature.
To address this gap, we first establish a new Karush–Kuhn–Tucker (KKT)-based Pareto stationarity as the convergence criterion for ECMO algorithm design.
Based on this foundation, we propose a weighted Chebyshev (WC)-penalty algorithm that achieves a finite-time convergence rate of $\mathcal{O}(ST^{-\frac{1}{2}})$ to KKT-based Pareto stationarity in both deterministic and stochastic settings, where $S$ denotes the number of objectives, and $T$ is the total iterations.
Moreover, by varying the preference vector over the $S$-dimensional simplex, our WC-penalty method systematically explores the Pareto front.
Finally, solutions to the ECMO problem translate directly into solutions for the original MTBL problem, thereby closing the loop between these two foundational optimization frameworks.
\end{abstract}

\section{Introduction}\label{sec:intro}

{\bf 1) Background and Motivation:}
As machine learning frameworks have grown increasingly complex in recent years, the demand for addressing learning problems with nested structures has become ever more compelling.
Such demands typically arise from two distinct perspectives: 1) multiple, potentially conflicting tasks often need to be considered, and 2) the learning of some tasks often depend on the outcome(s) of other tasks.
For instance, when aligning pre-trained large language models (LLMs) with human feedback, one needs to consider various human-aligned criteria on the one hand; on the other hand, many tasks (e.g., policy parameter optimization of LLM alignments and the actor-critic framework in reinforcement learning) often contain a subtask on reward model learning.
As a result, recent years have seen growing interests in the Multi-Task Bilevel Learning (MTBL) problems in the following form: \citep{ye2021multi,gu2023min,li2024bi,wang2024pareto,yang2024gradient,ye2024first,zhang2026multi}:
\begin{equation}\label{eq:MTBL}\tag{MTBL}
    \begin{aligned}
        & \min_{x,y} F(x,y) = \big[f_{1}(x,y), \dotsc, f_{S}(x,y)\big]^\top \\
    & \text{s.t. } y\in \mathcal{M}(x) := \arg\min_y g(x,y),
    \end{aligned}
\end{equation}
where $S$ denotes the number of tasks, and $x\in\mathbb{R}^p, y\in\mathbb{R}^q$.
For example, in the aforementioned LLM alignment, the upper-level (UL) problem corresponds to minimizing the validation loss with respect to multiple human-aligned metrics, such as \textit{helpfulness} and \textit{toxicity}, while the lower-level (LL) problem corresponds to a data weighting task, aiming to curate a high-quality training dataset.

Despite its significance, solving the MTBL problem is highly challenging due to the complex couplings between upper and lower levels and the trade-offs among multiple objectives. 
So far, most existing works on bilevel optimization (BLO) in the literature rely on the lower-level strong convexity (LLSC) assumption (see, e.g., \cite{ghadimi2018approximation,arbel2021amortized,ji2021bilevel,dagreou2022framework}).
Specifically, this widely adopted LLSC assumption requires that, for any given $x$, $g(x,\cdot)$ is strongly convex with respect to $y$.
It is worth noting that the LLSC assumption renders a much simplified and tractable BLO algorithm design and analysis, since the LLSC assumption i) ensures the existence of a unique solution $y^*(x)$ of the LL problem, and ii) implies a well-defined hyper-gradient $\nabla F(x,y^*(x))$ that requires non-singular Hessian $\nabla_y^2g(x,y^*(x))$~\citep{ghadimi2018approximation,ji2021bilevel}.
However, the LLSC assumption significantly restricts the applicability of BLO to modern machine learning problems of growing complexity.

While several recent works in the BLO literature  have attempted to relax the LLSC assumption to the lower-level general convexity (LLGC) assumption (i.e., the LL function $g(x,\cdot)$ is convex but may not be strongly convex with respect to $y$ for any $x$)~ \citep{sabach2017first,liu2023averaged,cao2023projection,jiang2023conditional,yao2024constrained,chen2024finding,lu2024first}, 
all existing works remain confined to the single-task setting, while the {\em multi-task} bilevel optimization problem under the LLGC assumption has yet to be explored.
A key challenge in solving MTBL problems under the LLGC assumption stems from the fact that, not only does the hyper-gradients of the MTBL problem become ill-defined due to the lack of LLSC condition, the optimality of the UL subproblem also needs to be re-interpreted in the Pareto equilibrium sense due to the trade-off among multiple tasks.
This renders most of the algorithmic techniques developed for single-task LLGC-BLO problems inapplicable.
The widening gap between the rapidly growing demand for addressing more general MTBL problems and the inherent limitations of existing BLO techniques motivates us, in this work, to investigate MTBL under the LLGC assumption.

\textbf{2) Overview of Our Proposed Approach:}
To address the ill-defined hyper-gradient challenge in the MTBL problem under the LLGC assumption, our key idea is to {\em indirectly}  solve the MTBL problem by transforming this problem into an equivalent {\em single-level constrained} multi-objective optimization that shares the same optimal solutions as the original problem.
To this end, we note that solving the lower-level problem in \ref{eq:MTBL} with $g(x,y)$ being convex for any 
$x$ is equivalent to solving its first-order stationarity condition $\nabla_y g(x,y) = 0$, which is both necessary and sufficient. 
This implies that we can reformulate the LLGC-MTBL problem as an equality constrained multi-objective (ECMO) problem as follows (also see Step~\Circled{1} in Fig.~\ref{fig:KKT}):
\begin{equation}\label{eq:ECMO}\tag{ECMO}
    \begin{aligned}
        & \min_{z\in\mathbb{R}^{k}} F(z) = \big[f_1(z), \dotsc, f_S(z)\big]^\top \\
        & \text{s.t. } h_i(z) = 0, i=1,\dotsc,q,
    \end{aligned}
\end{equation}
where $k:=p+q$, $z:=[x^{\top},y^{\top}]^{\top}$ and $h_i(z) := \nabla_{y_i} g(x,y) = 0$.
However, even after the ECMO reformulation, we remain far from resolving the MTBL problem, as the ECMO problem itself constitutes a {\em new} formulation that has not yet been examined in the literature.
Specifically, while multi-objective optimization (MOO) problems have been extensively studied (see, e.g., \cite{sawaragi1985theory,ehrgott2005multicriteria,desideri2012multiple,sener2018multi,momma2022multi,fernando2023mitigating}), the majority of existing works only considered unconstrained MOO. 
Meanwhile, constrained MOO problems, including ECMO, are still in their infancy. 
To date, although several heuristic algorithms have been proposed for ECMO and empirically validated \citep{qu2011constrained,yang2019multi,cuate2020benchmark,garcia2021coarse}, none of these existing works offers theoretical performance guarantees in terms of finite-time convergence rate or sample/iteration complexity. 
To establish the theoretical foundation for solving ECMO  (and thus for the LLGC-MTBL problem), there are two main technical challenges:
(1) \textbf{Lack of Pareto Optimality Condition Characterizations and Appropriate Convergence Metrics:} Unlike unconstrained MOO problems, where the Pareto stationarity can be conveniently employed as the necessary condition of the Pareto optimality for algorithm design, the characterization of Pareto stationarity for ECMO remains unclear. 
Consequently, the current literature lacks appropriate convergence metrics for solving the ECMO problem; 
(2) \textbf{Algorithm Design and Theoretical Analysis:} 
Even with the Pareto stationarity characterization and convergence metrics established for ECMO, developing algorithms that can handle the equality constraints in ECMO and enable convergence analysis remains challenging. 
We address these challenges with the following contributions:

\begin{figure}[t!]
    \centering
    \includegraphics[width=0.8\linewidth]{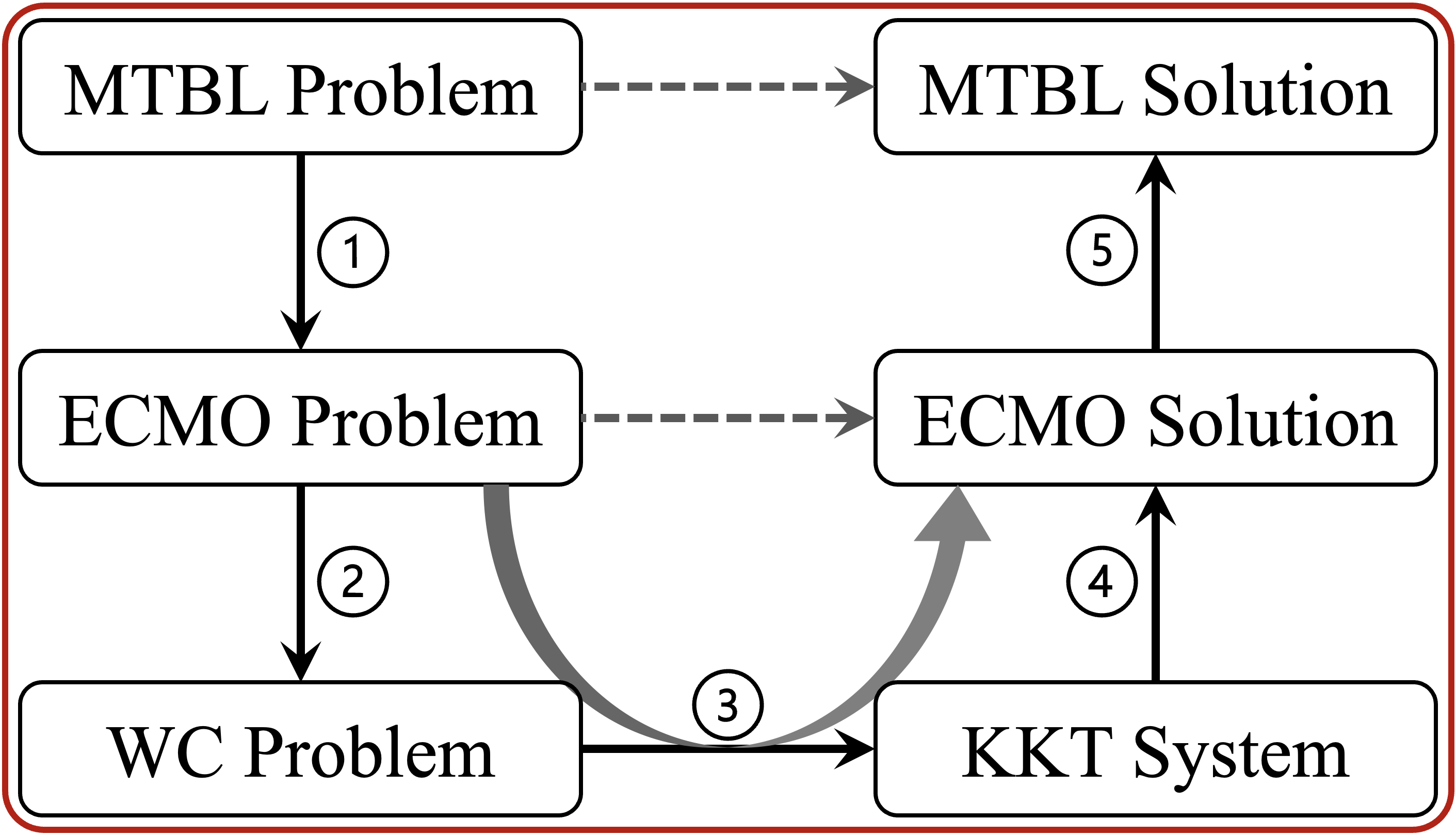}
    \vspace{-.02in}
    \caption{Roadmap of our proposed approach for solving the MTBL problem under the LLGC assumption.}
    \label{fig:KKT}
    \vspace{-.2in}
\end{figure}

\vspace{-1em}
\begin{list}{\labelitemi}{\leftmargin=1.5em \itemindent=-0.0em \itemsep=-.2em}
    \item To rigorously characterize the Pareto stationarity for ECMO problems, we leverage the weighted-Chebyshev (WC) scalarization technique by exploiting the one-to-one correspondence between the Pareto front of the ECMO problem and the set of solutions to the WC-scalarized problem under varying preference weights.
    This establishes a direct connection between the WC-scalarized problem and the original ECMO problem (cf. Step~\Circled{2} in \Cref{fig:KKT}). 
    Subsequently, this one-to-one correspondence allows us to employ the Karush–Kuhn–Tucker (KKT) conditions of the WC-scalarized problem as the necessary and sufficient condition of the Pareto stationarity for the ECMO problem, thereby resolving the challenge of characterizing the Pareto stationarity for ECMO.
    (cf. Step~\Circled{3} in \Cref{fig:KKT}).

    \item Based on the KKT-based Pareto stationarity for ECMO, we proposed a WC-Penalty algorithm to solve the ECMO problem, and establish its finite-time convergence rate guarantee (cf. Step~\Circled{4} in \Cref{fig:KKT}).
    Specifically, our WC-Penalty method achieves the KKT-based Pareto stationarity at a rate of $\mathcal{O}(ST^{-\frac{1}{2}})$, where $T$ denotes the total number of iteration steps.
    In addition, by varying the preference vector over the $S$-dimensional simplex, our WC-Penalty method systematically explores the Pareto front.

    \item Finally, solutions to the ECMO problem translate directly into solutions for the original MTBL problem, thereby closing the loop between these two foundational optimization frameworks (cf. Step~\Circled{5} in \Cref{fig:KKT}).
    In addition, we evaluate our approach on two multi-objective data weighting tasks: reward model training for RLHF, and LLM alignment. 
    Extensive numerical experiments further validate the efficiency of our proposed algorithms across diverse settings.
\end{list}
The remainder of this paper is organized as follows.
In Section~\ref{sec:related}, we provide an overview of closely related works.
In Section~\ref{sec:theory}, we focus on characterizing the Pareto stationarity and establishing convergence metrics for ECMO.
In Section~\ref{sec:alg}, we will present our WC-Penalty method for ECMO and its convergence rate analysis.
In Section~\ref{sec:MTBL}, we will close the loop between ECMO and MTBL by solving two MTBL problems through the lens of ECMO, and Section~\ref{sec:conclusion} concludes this paper.

\section{Related Work} \label{sec:related}

In this section, we provide a brief overview of two lines of research that are closely related to this work, thereby placing our contributions into a comparative perspective.

\textbf{1) Multi-Task Bilevel Learning (MTBL):} MTBL problems have received increasing attention in recent years \citep{ye2021multi,gu2023min,fernando2023mitigating,li2024bi,wang2024pareto,yang2024gradient,ye2024first}.
However, in contrast to the more mature bodies of work on MOO and BLO, the theoretical foundations of MTBL remain largely underdeveloped.
Among these works, \cite{yang2024gradient,ye2021multi} demonstrated that their proposed algorithms converge asymptotically, but without providing theoretical guarantees of finite-time convergence rate.
In contrast, \cite{fernando2023mitigating,ye2024first} proposed algorithms with a finite-time convergence rate of $\mathcal{O}(ST^{-\frac{1}{2}})$ and $\mathcal{O}(ST^{-\frac{1}{4}})$, respectively.
However, all of these works heavily depend on the LLSC assumption: not only is the algorithmic framework built upon the LLSC assumption, but the optimality criterion also relies on it.
Therefore, this significantly limits their applicability to complex real-world scenarios where the LLSC assumption is usually violated.

\textbf{2) Equality Constrained Multi-Objective (ECMO):} ECMO problems have found many applications across various fields, including resource allocation, scheduling optimization, and path planning, just to name a few \citep{liang2022survey,hao2024constrained}.
The most closely related works on ECMO problems are \citep{cuate2020benchmark,garcia2021coarse}. 
Both works proposed algorithmic solutions for ECMO and conduct numerical experiments to validate their methods. 
However, neither work provided any finite-time convergence guarantees, highlighting that the theoretical foundations for ECMO remain missing.
Due to space limitation, we relegate additional detailed comparison and other related work on closely related topics to \Cref{sec:app-rw}.

\section{ECMO: Characterizing Pareto Stationarity and Establishing Convergence Metrics}\label{sec:theory}

In this section, we will characterize the Pareto stationarity and establish the convergence metrics for the ECMO problem, which lays the foundation for the algorithmic design of ECMO and eventually solving MTBL in later sections.

\subsection{Pareto Stationarity for ECMO}

As in other multi-objective optimization problems, multiple objectives in an ECMO problem could be conflicting with each another.
Thus, in general, there does not exist a unique minimizer $z^*$ that simultaneously minimizes all $S$ objectives $f_s(z)$ in \ref{eq:ECMO}. 
Hence, an optimal solution to the ECMO problem needs to be interpreted in the Pareto sense.
\begin{definition}[Pareto Optimality]
    A solution $z$ dominates another solution $z'$ if and only if $f_s(z) \le f_s(z'), \forall s\in[S]$, and there exists at least one $s\in[S]$ such that the inequality holds strictly. 
    A feasible $\tilde{z}$ is Pareto optimal if and only if no other feasible $\hat{z}$ dominates $\tilde{z}$.
\end{definition}
\vspace{-0.5em}
Intuitively, Pareto optimality means that no objective can be improved without sacrificing at least one other objective. 
A weaker, yet useful, notion is the weak Pareto optimality:
\begin{definition}[Weak Pareto Optimality]
    A feasible $\tilde{z}$ is called weakly Pareto optimal if and only if no other feasible $\hat{z}$ satisfies: $f_s(\hat{z}) < f_s(\tilde{z}), \forall s\in[S]$.
\end{definition}
\vspace{-0.5em}
Clearly, Pareto optimality implies weak Pareto optimality, whereas the converse is not always true. 
In addition, the set of all (resp. weakly) Pareto optimal points is referred to as the (resp. weak) Pareto set and denoted as $X_{\text{P}}$ (resp. $X_{\text{WP}}$), and the (resp. weak) Pareto front is defined as $\{F(x):x\in X_{\text{P}}\}$ (resp. $\{F(x):x\in X_{\text{WP}}\}$).
Further, the (weak) Pareto optimality in ECMO is subject to feasibility, i.e., under the constraint $h(z)=\bf{0}$ ($h(z) := [h_1(z), \dotsc, h_q(z)]^\top$).

However, for nonconvex multi-objective optimization problems, finding (weakly) Pareto optimal solutions is NP-hard in general.
Thus, it is often of practical interest to find a Pareto-stationary solution instead, which is the necessary condition of a (weakly) Pareto optimal solution.
Intuitively, Pareto stationarity can be interpreted as no common descent direction exists locally. 
Note that for unconstrained MOO problems, Pareto stationarity can be defined as follows:
\begin{definition}[Pareto Stationarity for Unconstrained MOO]\label{def:PS-origin}
    For the unconstrained MOO problem $\min_z F(z)^\top = \left(f_1(z), \dotsc, f_S(z)\right)$, $\tilde{z}$ is a Pareto stationary point if and only if there does not exist a direction $d\in\mathbb{R}^{k}$, such that $\nabla f_s(\tilde{z})^\top d < 0, \forall s\in[S]$.
\end{definition}
\vspace{-0.5em}

\begin{figure}[t]
    \centering
    \includegraphics[width=0.68\linewidth]{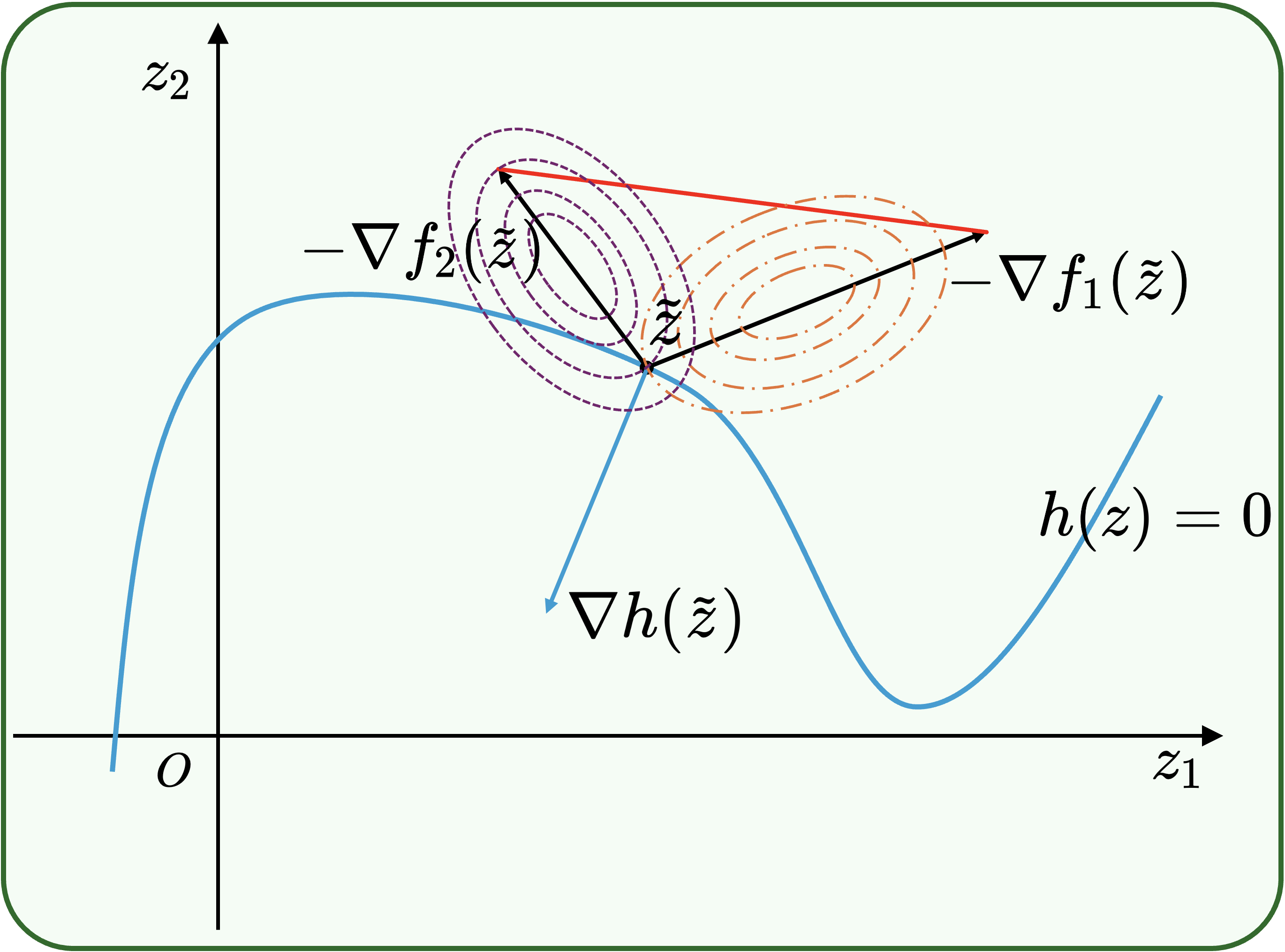}
    \vspace{-.02in}
    \caption{$\tilde{z}$ is Pareto stationary but violates \Cref{def:PS-origin}.}
    \label{fig:stationary}
    \vspace{-.1in}
\end{figure}
Moreover, the following equivalent Pareto stationarity characterization for unconstrained MOO is often used in practice, which is more amenable for algorithm design: $\tilde{z}$ is a Pareto stationary point if and only if $\exists \alpha\in\Delta_S^+$ ($S$-simplex) such that, $\left(\nabla f_1(\tilde{z}), \dotsc, \nabla f_S(\tilde{z})\right)\alpha=\bf{0}$ \citep{sener2018multi,lin2024smooth}. 
As a result, $\|\nabla F(z)\alpha\|_2^2$ can be used as a natural metric for Pareto stationarity in unconstrained MOO problems, i.e., if $\|\nabla F(\tilde{z})\alpha\|_2^2 \le \epsilon$ for some $\epsilon>0$, then $\tilde{z}$ is called an $\epsilon$-Pareto stationary solution.

Given the multi-objective nature of ECMO, one might be tempted to adopt the same Pareto stationarity definition as in unconstrained MOO. 
However, as we demonstrate through a counterexample, the Pareto stationarity definition for unconstrained MOO does not hold in the ECMO setting, thereby necessitating a {\bf new} characterization of Pareto stationarity.
Consider a two-objective ECMO problem as shown in \Cref{fig:stationary}.
On the one hand, based on the notion of ``common descent direction'', the point $\tilde{z}$ is Pareto stationary since any deviation from $\tilde{z}$ to another feasible point $\hat{z}$ on $h(z)=0$ in the local neighborhood must result in an increase in either $f_1(z)$ or $f_2(z)$. 
On the other hand, both \Cref{def:PS-origin} and its equivalent definition suggest that $\tilde{z}$ is not Pareto stationary, since the vector $-\big(\alpha\nabla f_1(\tilde{z}) + (1 - \alpha) \nabla f_2(\tilde{z})\big)$ for any $\alpha \in \Delta_2^+$, which can be represented by a point in the red line segment in \Cref{fig:stationary}, is a nonzero vector. 
A key reason that Definition~\ref{def:PS-origin} fails under ECMO is primarily due to a lack of feasibility consideration in Definition~\ref{def:PS-origin}. 
To address the limitation of Definition~\ref{def:PS-origin}, we extend the definition of Pareto stationarity to ECMO problems as follows:
\begin{definition}[Pareto Stationarity for ECMO]\label{def:PS-ECMO}
    For an \ref{eq:ECMO} problem with its \textit{tangent cone} $\mathcal{D}$, a direction $d$ at a solution $z$ is feasible if $z+\epsilon d \in \mathcal{D}$ for small enough $\epsilon>0$. 
    In ECMO, a feasible point $\tilde{z}$ is Pareto stationary if and only if there does not exist a feasible direction $d\in\mathbb{R}^{k}$ such that $\nabla f_s(\tilde{z})^\top d < 0, \forall s\in[S]$.
\end{definition}
It is clear that, under ECMO, the Pareto stationary point $\tilde{z}$ in Fig.~\ref{fig:stationary} satisfies Definition~\ref{def:PS-ECMO}.
To our knowledge, Definition~\ref{def:PS-origin} is a {\bf new} result in the literature.

Although Definition~\ref{def:PS-ECMO} is a proper Pareto stationarity definition for ECMO problems, it turns out that deriving an equivalent Pareto stationarity characterization similar to that under unconstrained MOO and more amenable for algorithm design remains nontrivial.
An intuitive guess of Pareto stationarity characterization for ECMO is to check if $\nabla F(\tilde{z})\alpha + \nabla h(\tilde{z})v = \bf{0}$ and $h(\tilde{z})=\bf{0}$ hold simultaneously for some $\alpha\in\Delta_S^+, v\in\mathbb{R}^q$. 
However, although this may appear plausible and aligns with the example shown in Fig.~\ref{fig:stationary}, it can be invalidated by counterexamples (cf. \Cref{sec:app-theory}).

This indicates that the characterization of Pareto stationarity for ECMO must be derived through a rigorous and systematic approach rather than relying on intuitive ``guesswork''.
To address this challenge, we first establish a result that paves the way to derive our Pareto stationarity characterization for subsequent algorithmic design for solving ECMO.
We say that a solution $\tilde{z}$ is a locally weakly Pareto optimal point for \ref{eq:ECMO} if there exists some $\delta >0$, such that $\tilde{z}$ is weakly Pareto optimal within the feasible region $\mathcal{D}(\tilde{z},\delta):=\mathcal{D}\cap N_\delta(\tilde{z})$, where $\mathcal{D}$ is the tangent cone and $N_\delta(\tilde{z}) = \{z\in\mathbb{R}^k: \|z-\tilde{z}\|_2 \le \delta\}$.
With the notion of locally weak Pareto optimality, we have the following theorem that reveals an important insight that Pareto stationarity in ECMO can be implied by the locally weak Pareto optimality (see \Cref{sec:app-theory} for proofs).

\begin{theorem}\label{thm:PS-LWPO}
    If $\tilde{z}$ is locally weakly Pareto optimal, then $\tilde{z}$ is also Pareto stationary.
    Conversely, if $\tilde{z}$ is Pareto stationary and if, for any $s\in[S]$, there exists $\mu_{s}\in\mathbb{R}^q$ such that 1) $\nabla f_s(\tilde{z}) + \sum_{i=1}^q\mu_{s,i}\nabla h_i(\tilde{z}) = 0$, and 2) $\sum_{i=1}^q\mu_{s,i}\nabla h_i(\tilde{z})^\top d \neq 0$ for any feasible direction $d$, then $\tilde{z}$ is locally weakly Pareto optimal.
\end{theorem}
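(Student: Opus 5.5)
For the forward direction I would argue by contradiction. Suppose $\tilde{z}$ is locally weakly Pareto optimal on $\mathcal{D}(\tilde{z},\delta)$ but not Pareto stationary in the sense of Definition~\ref{def:PS-ECMO}. Then there is a feasible direction $d$ with $\nabla f_s(\tilde{z})^\top d<0$ for every $s\in[S]$. Feasibility means $\tilde{z}+\epsilon d\in\mathcal{D}$ for all sufficiently small $\epsilon>0$. By choosing $\epsilon\le \delta/\|d\|_2$, the point $\tilde{z}+\epsilon d$ also lies in $N_\delta(\tilde{z})$, and hence in $\mathcal{D}(\tilde{z},\delta)$.

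A first-order Taylor expansion then gives, for each $s$,
\[
f_s(\tilde{z}+\epsilon d)=f_s(\tilde{z})+\epsilon\nabla f_s(\tilde{z})^\top d+o(\epsilon).
\]
Since there are only finitely many objectives, one $\epsilon$ can be chosen small enough that the $o(\epsilon)$ remainder is dominated by $\epsilon\,\max_s\nabla f_s(\tilde{z})^\top d<0$ for all $s$ simultaneously. Then $f_s(\tilde{z}+\epsilon d)<f_s(\tilde{z})$ for every $s\in[S]$, so $\tilde{z}$ is not weakly Pareto optimal on $\mathcal{D}(\tilde{z},\delta)$, which is a contradiction. This step only needs differentiability of each $f_s$.

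For the converse I would again argue by contradiction, now using the multiplier conditions. Assume $\tilde{z}$ is Pareto stationary and conditions 1) and 2) hold, yet for every $\delta>0$ there is a feasible $z_\delta\in\mathcal{D}(\tilde{z},\delta)$ with $f_s(z_\delta)<f_s(\tilde{z})$ for all $s$. I would take a sequence $\delta_n\to0$ and write $z_n=\tilde{z}+t_n d_n$ with $t_n>0$ and $\|d_n\|_2=1$. Passing to a subsequence, $d_n\to d$, and $d$ is a direction of the cone $\mathcal{D}$, since the paper treats $\mathcal{D}$ as the feasible set in tangent-cone form. Taylor expansion gives $\nabla f_s(\tilde{z})^\top d_n+o(1)<0$, so in the limit $\nabla f_s(\tilde{z})^\top d\le 0$ for all $s$. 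By Pareto stationarity, $d$ is feasible and cannot have all these inner products strictly negative, so some index $s$ satisfies $\nabla f_s(\tilde{z})^\top d=0$.

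For that $s$, condition 1) gives
\[
\nabla f_s(\tilde{z})^\top d=-\sum_i\mu_{s,i}\nabla h_i(\tilde{z})^\top d=0,
\]
which directly contradicts condition 2) applied to the feasible direction $d$. This contradiction shows $\tilde{z}$ is locally weakly Pareto optimal.

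The main obstacle I expect is the limiting step: justifying that the limit direction $d$ is itself a feasible direction in the sense of Definition~\ref{def:PS-ECMO}, and upgrading the non-strict limit inequality. If the compactness argument does not go through cleanly, my fallback is to read condition 2) as saying the whole linear map $d\mapsto\sum_i\mu_{s,i}\nabla h_i(\tilde{z})^\top d$ never vanishes on feasible directions. On that reading, condition 1) forces $\nabla f_s(\tilde{z})^\top d\neq0$ for every feasible $d$ and every $s$. Pareto stationarity then says every feasible $d$ has some $s$ with $\nabla f_s(\tilde{z})^\top d>0$. By continuity of the minimum over the compact unit sphere intersected with the closed cone $\mathcal{D}$, this yields a uniform bound $\max_s\nabla f_s(\tilde{z})^\top d\ge c>0$. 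That bound, combined with Taylor's theorem, rules out any strictly improving feasible point near $\tilde{z}$.
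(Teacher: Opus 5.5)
Your proof is correct and follows the paper's argument in both directions. The forward implication is contradiction plus a first-order Taylor expansion. The converse combines the limiting inequality $\nabla f_s(\tilde{z})^\top d\le 0$ with conditions 1)--2) and Pareto stationarity: you find an index with equality and contradict condition 2), while the paper uses condition 2) to make every inequality strict and contradicts stationarity, which is the same step reversed.

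The one real difference is an improvement. Your normalization $d_n\to d$ with $t_n\to 0$, using that $\mathcal{D}$ is a closed cone at $\tilde{z}$, rigorously supplies the limit the paper skips when it Taylor-expands at a fixed $\epsilon$ for a single improving point $\hat{z}=\tilde{z}+\epsilon d$. The paper's proof implicitly needs the same cone reading of $\mathcal{D}$ in order to call $d$ a feasible direction.
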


Later, we will leverage Theorem~\ref{thm:PS-LWPO} to derive the Pareto stationarity characterization for ECMO via the one-to-one mapping between ECMO and its WC-scalarization (see Step~\Circled{2} in \Cref{fig:logic-WC}).

\subsection{One-to-One Mapping between ECMO and Its Weighted-Chebyshev Scalarization}

\begin{figure}[t!]
    \centering
    \includegraphics[width=0.7\linewidth]{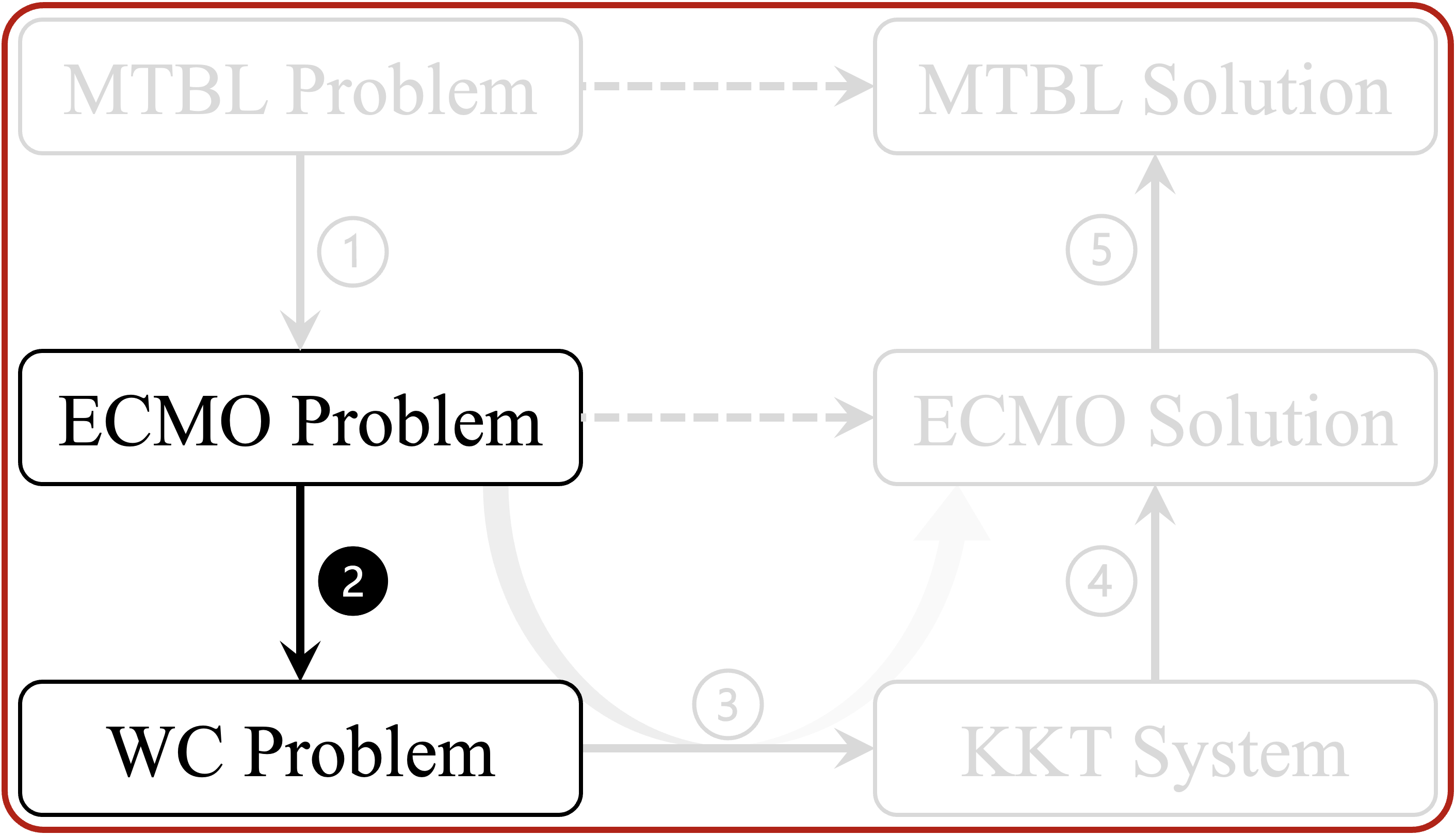}
    \vspace{-.05in}
    \caption{One-to-one mapping between ECMO and its WC-scalarized problem.}
    \label{fig:logic-WC}
    \vspace{-.1in}
\end{figure}
Weighted-Chebyshev (WC)-scalarization is a technique to convert a vector-valued MOO problem into a conventional scalar-valued optimization problem.
Specifically, WC minimizes a weighted $\ell_\infty$-norm of the vector-valued objective of an MOO problem (i.e., improving the worst-performing objective). 
Let $\Delta_S^{++}$ denote the strictly positive $S$-dimensional simplex.
For ECMO problems, WC can be written as: $\min_z \|\lambda\odot F(z)\|_\infty, \text{s.t. } h_i(z)=0, i=1,\dotsc,q$, where $\lambda\in\Delta_S^{++}$ is a given preference vector. 
Further, to address the non-smoothness of ``min-max'' operation introduced by the $\ell_\infty$-norm minimization, we can further reformulate the WC-scalarization for the ECMO problem as follows:
\begin{equation}\label{eq:WC}\tag{WC}
    \begin{aligned}
        & \min_{\rho,z} \rho,  \\
        \text{ s.t. } & h_i(z)=0, i=1,\dotsc,q, \lambda_sf_s(z)\le \rho, s\in[S].
    \end{aligned}
\end{equation}
It is well known in the MOO literature that there exists a one-to-one mapping between the solutions of WC-scalarization and the Pareto front of the original MOO problem, which implies that one can systematically explore the entire weak Pareto front $X_{\text{WP}}$ by varying the preference vector over the $S$-dimensional simplex $\Delta_S^{++}$~\citep{ehrgott2005multicriteria,lin2024smooth,qiu2024traversing}.
However, this one-to-one mapping result depends on solving the WC-scalarized problem to optimality, which is challenging due to the potential non-convexity of the MOO problem.
Fortunately, in the next theorem, we will show that the locally optimal WC-solution and locally weak Pareto optimal solution (or equivalently, the Pareto stationary solution by \Cref{thm:PS-LWPO}) of \ref{eq:ECMO} are also one-to-one mapped.
\begin{theorem}\label{thm:WC-ECMO}
    Suppose that $f_s(z) > 0, \forall s\in[S]$\footnote{Without loss generality, if the original ECMO problem is non-degenerate, i.e., all $f_s(z)$ are bounded from below, we can add a sufficiently large constant to all $f_s(\cdot)$ to construct $S$ positive-valued functions. 
    The Pareto front of the newly constructed problem has a one-to-one mapping with the Pareto front of the original problem.}. 
    Then, $\tilde{z}$ is a locally weak Pareto optimal solution of \ref{eq:ECMO} if and only if $(\tilde{\rho}, \tilde{z})$ is a locally optimal WC-solution for some $\tilde{\rho}\in\mathbb{R}$ and $\lambda\in\Delta_S^{++}$.
\end{theorem}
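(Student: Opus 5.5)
The plan is to prove both directions directly from the definitions, using the positivity $f_s>0$ to choose the preference vector explicitly. Throughout, ``feasible'' means $h(z)=\mathbf{0}$, and local optimality of $(\tilde\rho,\tilde z)$ for \ref{eq:WC} is taken with respect to a neighborhood in $(\rho,z)$. At an optimal pair the constraint $\rho\ge\max_s\lambda_s f_s(z)$ must be tight, so $\tilde\rho=\max_s\lambda_s f_s(\tilde z)$. Local optimality of $(\tilde\rho,\tilde z)$ in \ref{eq:WC} is therefore equivalent to $\tilde z$ being a local minimizer of $\phi_\lambda(z):=\|\lambda\odot F(z)\|_\infty$ over feasible $z$ near $\tilde z$. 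The justification is short: if some nearby feasible $z$ had $\phi_\lambda(z)<\tilde\rho$, then $(\phi_\lambda(z),z)$ would be a nearby feasible WC-point with a smaller objective, using continuity of $F$. I will first record this reduction as a preliminary step.

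\textbf{($\Rightarrow$)} Let $\tilde z$ be locally weakly Pareto optimal with radius $\delta$. Since $f_s(\tilde z)>0$, I will set
\[
\lambda_s := \frac{1/f_s(\tilde z)}{\sum_{r=1}^S 1/f_r(\tilde z)} \in \Delta_S^{++},
\]
so that $\lambda_s f_s(\tilde z)=c:=\big(\sum_r 1/f_r(\tilde z)\big)^{-1}$ for every $s$, and take $\tilde\rho=c$. Suppose, for contradiction, that $(\tilde\rho,\tilde z)$ is not locally optimal. Then for any small neighborhood there is a feasible $z$ with $\|z-\tilde z\|\le\delta$ and $\max_s\lambda_s f_s(z)<c$. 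This gives $\lambda_s f_s(z)<\lambda_s f_s(\tilde z)$, hence $f_s(z)<f_s(\tilde z)$, for all $s$, which contradicts local weak Pareto optimality.

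\textbf{($\Leftarrow$)} Let $(\tilde\rho,\tilde z)$ be locally optimal for some $\lambda\in\Delta_S^{++}$. By the reduction, $\tilde z$ locally minimizes $\phi_\lambda$ over feasible points in some ball $N_\delta(\tilde z)$. Suppose a feasible $\hat z$ in that ball had $f_s(\hat z)<f_s(\tilde z)$ for all $s$. Because $\lambda_s>0$, this gives $\lambda_s f_s(\hat z)<\lambda_s f_s(\tilde z)\le\tilde\rho$ for every $s$, so $\phi_\lambda(\hat z)<\tilde\rho$, contradicting local minimality. Hence $\tilde z$ is locally weakly Pareto optimal.

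The main obstacle is bookkeeping about neighborhoods, and about the paper's phrasing of the local feasible region as $\mathcal{D}\cap N_\delta(\tilde z)$ with $\mathcal{D}$ called the tangent cone. I will interpret $\mathcal{D}$ consistently as the feasible set $\{z:h(z)=\mathbf{0}\}$ in both the WC problem and the Pareto notion, so that the two local regions coincide. I also need to handle the $\rho$-coordinate of the neighborhood. Continuity of $f_s$ ensures that $\phi_\lambda(z)$ stays close to $\tilde\rho$ for $z$ near $\tilde z$, so shrinking $\delta$ keeps the competitor $(\phi_\lambda(z),z)$ inside the WC neighborhood. Note that positivity of the $f_s$ is used only to make the chosen $\lambda$ strictly positive and well defined.
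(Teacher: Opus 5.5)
Your proposal is correct and follows the paper's proof essentially step for step. You use the same explicit choice $\lambda_s \propto 1/f_s(\tilde z)$, which equalizes all $\lambda_s f_s(\tilde z)$, for the direction from local weak Pareto optimality to WC local optimality, and the same strict-inequality-under-the-max contradiction for the converse; your preliminary reduction (tightness $\tilde\rho=\max_s\lambda_s f_s(\tilde z)$ and care with the $\rho$-coordinate of the neighborhood) just makes explicit what the paper uses implicitly when it treats WC local optimality as local minimality of $\max_s\lambda_s f_s$ over $\mathcal{D}(\tilde z,\delta)$.
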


With Theorem~\ref{thm:WC-ECMO}, we are now ready to characterize the Pareto stationarity and derive convergence metrics for ECMO using the Karush-Kuhn-Tucker (KKT) system of the WC-scalarized problem.

\subsection{Pareto Stationarity Characterization and Convergence Metric Derivations for ECMO}

Note that the WC-scalarized problem is a single-level single-objective optimization problem.
Following from \citep{bazaraa2006nonlinear}, the locally optimal solution of the WC-scalarized problem can be characterized by its KKT system (see Fig.~\ref{fig:logic-KKT}) and an appropriate constraint qualification as follows (see details in \Cref{sec:app:KKT-condition}):
\begin{lemma} \label{lemma:KKT}
    For the \ref{eq:WC}-scalarized problem, the following results hold:
    \vspace{-0.5em}
\begin{list}{\labelitemi}{\leftmargin=1.5em \itemindent=-0.6em \itemsep=-.1em}        
\item[1.] \textbf{(Necessity)} Suppose that $(\tilde{\rho},\tilde{z})$ is a locally optimal WC-solution, and the linearly independent constraint qualification (LICQ)\footnote{It is worth noting that there are multiple constraint qualifications (CQs), and all of them (including LICQ) guarantee the necessity in \Cref{lemma:KKT}.} holds at $\tilde{z}$, i.e., $\{\nabla h_i(\tilde{z}), \nabla f_s(\tilde{z}) | i\in[q], s\in\{s\in[S]:\lambda_sf_s(\tilde{z})=\tilde{\rho}\}\}$ are linearly independent.
Then, the KKT condition is satisfied at $(\tilde{\rho},\tilde{z})$.

\item[2.] \textbf{(Sufficiency)} Suppose the KKT condition and the second-order condition (SOC)
hold at $(\tilde{\rho},\tilde{z})$. 
Then, $(\tilde{\rho},\tilde{z})$ is a locally optimal WC-solution.
\end{list}
\end{lemma}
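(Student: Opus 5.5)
The plan is to treat \eqref{eq:WC} as a standard smooth nonlinear program in the variable $w:=(\rho,z)\in\mathbb{R}^{1+k}$. Its objective $\phi(w)=\rho$ is linear, with $\nabla\phi = e_1 = (1,\mathbf{0})$. It has equality constraints $h_i(z)=0$ with gradients $(0,\nabla h_i(z))$, and inequality constraints $c_s(w):=\lambda_s f_s(z)-\rho\le 0$ with gradients $(-1,\lambda_s\nabla f_s(z))$. I would write the KKT system explicitly: there exist $\alpha\in\mathbb{R}^S_{\ge 0}$ and $v\in\mathbb{R}^q$ with
\[
1-\sum_s\alpha_s=0,\qquad \sum_s\alpha_s\lambda_s\nabla f_s(\tilde z)+\sum_i v_i\nabla h_i(\tilde z)=0,
\]
together with complementary slackness $\alpha_s(\lambda_sf_s(\tilde z)-\tilde\rho)=0$ and primal feasibility. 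Both parts then reduce to the classical first- and second-order theorems in \citep{bazaraa2006nonlinear}, so the main work is verifying that the hypotheses transfer to the lifted problem.

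\textbf{Necessity.} I would check LICQ for the lifted problem at $\tilde w=(\tilde\rho,\tilde z)$. Let $A=\{s:\lambda_sf_s(\tilde z)=\tilde\rho\}$. Suppose
\[
\sum_{s\in A}\beta_s(-1,\lambda_s\nabla f_s)+\sum_i\gamma_i(0,\nabla h_i)=0 .
\]
The $z$-block gives $\sum_{s\in A}\beta_s\lambda_s\nabla f_s+\sum_i\gamma_i\nabla h_i=0$. The assumed linear independence of $\{\nabla h_i,\nabla f_s\}_{s\in A}$ then forces $\beta_s\lambda_s=0$ and $\gamma_i=0$. Since $\lambda_s>0$, this gives $\beta_s=0$, so LICQ in $z$ lifts to LICQ in $(\rho,z)$; the $\rho$-block is not even needed. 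I would also remark that $A\neq\emptyset$ at a local minimizer: if no constraint were active, $\rho$ could be decreased while staying feasible. With LICQ in hand, the KKT theorem yields the multipliers. The first KKT equation puts $\alpha\in\Delta_S$, complementary slackness supports $\alpha$ on $A$, and the remaining equation is the displayed condition.

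\textbf{Sufficiency.} Here I would invoke the standard second-order sufficient condition. The Lagrangian is
\[
L(w,\alpha,v)=\rho+\sum_s\alpha_s(\lambda_sf_s(z)-\rho)+\sum_iv_ih_i(z),
\]
whose Hessian in $w$ is block-diagonal with a zero $\rho\rho$-entry and $z$-block $\sum_s\alpha_s\lambda_s\nabla^2f_s+\sum_iv_i\nabla^2h_i$. The SOC requires this Hessian to be positive definite on the critical cone. That cone consists of directions $(d_\rho,d_z)\neq 0$ with $\nabla h_i^\top d_z=0$, with $\lambda_s\nabla f_s^\top d_z=d_\rho$ for active $s$ having $\alpha_s>0$, and with $\lambda_s\nabla f_s^\top d_z\le d_\rho$ for the other active $s$. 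Given this, the classical theorem shows $\tilde w$ is a strict local minimizer of \eqref{eq:WC}, hence locally optimal.

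The main subtlety is that the Hessian is degenerate in the $\rho$ direction, so the SOC must be stated on the critical cone rather than requiring full positive definiteness. I would check that any critical direction with $d_z=0$ forces $d_\rho=0$. This holds because at least one $\alpha_s>0$, since $\sum_s\alpha_s=1$, and that constraint gives $d_\rho=\lambda_s\nabla f_s^\top 0=0$. Consequently positive definiteness on the cone is equivalent to a condition purely on $d_z$, which is the form in which I would state the SOC.
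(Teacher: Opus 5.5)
Your proposal is correct and follows the paper's route. The paper gives no separate argument: it simply invokes the classical KKT necessity theorem under LICQ and the second-order sufficiency theorem (Bazaraa et al.) for the program in $(\rho,z)$. Your explicit checks fill in details the paper leaves implicit: that the stated LICQ in $z$ lifts to linear independence of the active gradients $(-1,\lambda_s\nabla f_s)$ and $(0,\nabla h_i)$, and that any critical direction with $d_z=0$ has $d_\rho=0$, so the SOC reduces to a condition on $d_z$ alone. As in the paper, the SOC implicitly requires $f_s,h_i$ to be twice differentiable.
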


\begin{figure}[t!]
    \centering
    \includegraphics[width=0.7\linewidth]{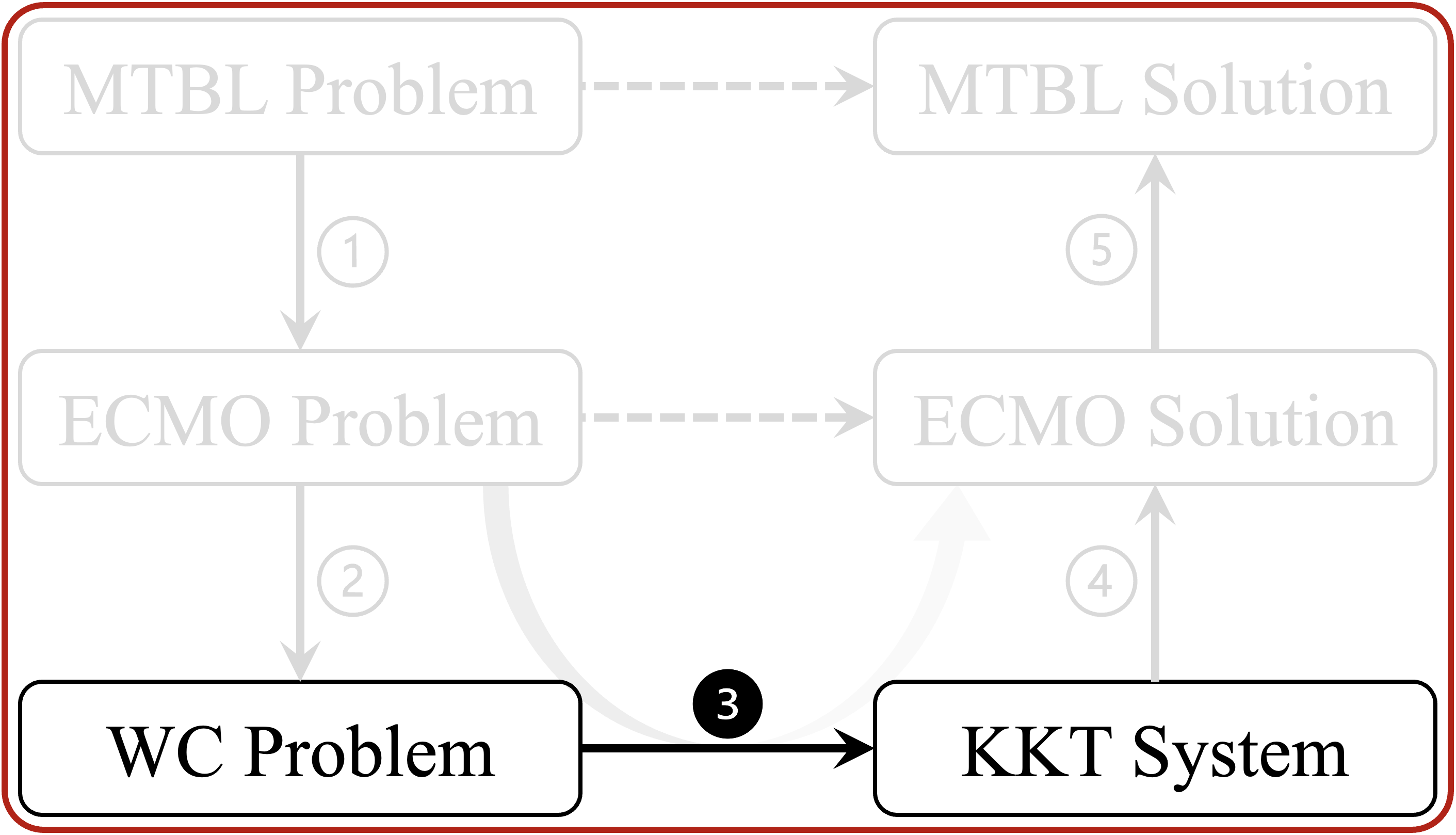}
    \vspace{-.05in}
    \caption{Characterize a locally optimal WC-solution by its KKT system.}
    \label{fig:logic-KKT}
    \vspace{-0.1in}
\end{figure}

\Cref{lemma:KKT} suggests that the KKT condition is both necessary and sufficient for characterizing the locally optimal WC-solutions under some additional conditions. 
Collectively, \Cref{thm:PS-LWPO,thm:WC-ECMO,lemma:KKT} provide a rigorous way to characterize the local optimality of the WC-scalarized problem, which further enables us to characterize the Pareto stationarity of the ECMO problem. 
Further, by using the KKT condition of the WC-scalarized problem, we define the following KKT system:
\begin{definition}[KKT System]\label{def:KKT-system}
    For \ref{eq:ECMO} with a given $\lambda\in\Delta_S^{++}$, we define the KKT system\footnote{The dual feasibility and complementary slackness are implied by the last term in the KKT system.}:
    \begin{equation*}
        \mathcal{K}(\rho,z,\omega,\nu,\lambda) \!=\!\!
        \begin{pmatrix}
            \sum_{s=1}^S \omega_s -1 \\
            \sum_{s=1}^S\omega_s \lambda_s\nabla f_s(z) \!+\! \sum_{i=1}^q\nu_i \nabla h_i(z) \\
            h(z) \\
            [\min\{\omega_s, \rho - \lambda_s f_s(z)\}]_{s\in[S]}
        \end{pmatrix}\!,
    \end{equation*}
    where $\omega=(\omega_1, \dotsc, \omega_S)^\top$, and $\nu=(\nu_1, \dotsc, \nu_q)^\top$ are the Lagrange dual multipliers associated with inequality and equality constraints in \ref{eq:WC}, respectively.
\end{definition}

Clearly, the KKT condition holds if and only if $\mathcal{K}(\rho, z, \omega, \nu, \lambda)=0$.
Also, we can measure how far a point deviates from the KKT condition, thereby quantifying its distance to optimality for the ECMO problems and providing a rigorous {\bf convergence metric}. 
Specifically, according to \Cref{thm:PS-LWPO,thm:WC-ECMO} and \Cref{lemma:KKT}, for any $\epsilon>0$, we define a point $\tilde{z}$ to be an {\bf $\epsilon$-Pareto stationary solution} of \ref{eq:ECMO} if and only if there exist some $\rho\in\mathbb{R},\omega\in\mathbb{R}^S, \nu\in\mathbb{R}^q,\lambda\in\Delta_S^{++}$ such that $\|\mathcal{K}(\rho, z, \omega, \nu, \lambda)\|_2^2\le \epsilon$.
This indicates that $\|\mathcal{K}(\rho, z, \omega, \nu, \lambda)\|_2^2$ can serve as a convergence metric for our ECMO algorithm design in the next section.
More details about the KKT system are in \Cref{sec:app-KKT}.

\section{Algorithm Design for the ECMO Problem}\label{sec:alg}

In this section, we first present our proposed WC-Penalty algorithm for solving the ECMO problem.
We then provide its finite-time convergence analysis results and discuss their further insights.
Due to space limitation, all proofs for this section are relegated to \Cref{sec:app-alg}.

\subsection{The WC-Penalty Algorithm}

In \Cref{sec:theory}, we have established the equivalence between the \ref{eq:ECMO} problem and its WC-scalarized problem.
We have also characterized the Pareto stationarity of ECMO using the KKT system of the WC-scalarized problem, based on which we further established the KKT-based convergence metric to an $\epsilon$-Pareto stationary solution of the ECMO problem.
Note that in the KKT-based Pareto stationarity convergence metric, the term $[\min\{\omega_s, \rho - \lambda_s f_s(z)\}]_{s\in[S]}$ is more difficult to control.
This motivates us to reformulate the WC-scalarized problem by adding this term as an equality constraint with slack variables:
\begin{align*}
    \min_{\rho,z,\delta}\ \rho \text{ s.t. } & h_i(z)=0, i=1,\dotsc,q, \\
    & \lambda_sf_s(z) + \delta_s = \rho, s\in[S], \,\, \delta_s \ge 0, s\in[S],
\end{align*}
where $\delta := [\delta_1, \dotsc, \delta_S]^\top \in \mathbb{R}^S$ contains all slack variables.
Let $\mathcal{C}:=\mathbb{R}\times\mathbb{R}^k\times\mathbb{R}_+^S$, where $\mathbb{R}_+^S=\{\delta\in\mathbb{R}^S : \delta \ge \bf{0}\}$.
Then, the reformulated problem above can be viewed as an equality-constrained single-objective problem with a convex feasible region $\mathcal{C}$.
To solve this reformulated problem, a natural idea is to incorporate all equality constraints as penalty terms in the objective function, which leads to the following formulation:
\begin{equation}\label{eq:WC-Penalty}
    \begin{aligned}
        & \!\!\!\!\!\!\!\! \min_{\theta}\! P(\theta) \!=\! \rho \!+\! \frac{u}{2}\sum_{i=1}^q h_i(z)^2 \!+\! \frac{v}{2}\sum_{s=1}^S (\lambda_sf_s(z) \!+\! \delta_s \!-\! \rho)^2 \\
        & \text{s.t. } \delta_s \ge 0, s\in[S],
    \end{aligned}\!\!\!\!\!\!\!\!\!
\end{equation}
where $u,v>0$ are sufficiently large hyper-parameters to be chosen.
For notational convenience, we let $\theta := [\rho^\top, z^\top, \delta^\top]^\top$, so that Eq.~(\ref{eq:WC-Penalty}) can be written as $\min_{\theta\in\mathcal{C}} P(\theta)$.

Thanks to the convex and simple box constraints, one can solve Problem~(\ref{eq:WC-Penalty}) using a projected gradient descent (GD) approach as shown in Algorithm~\ref{alg:WC-Penalty}.
Specifically, in each iteration $t$, we compute the gradient $\nabla P(\theta_t)$, take a GD step with some step-size $\eta$, and then project the obtained solution back onto the feasible domain $\mathcal{C}$.

\begin{algorithm}[t]
    \caption{The WC-Penalty algorithm.}\label{alg:WC-Penalty}
    \begin{algorithmic}[1]
        \STATE \textbf{Input:} Iteration rounds $T$, initialization $\theta_0=(\rho_0, z_0, \delta_0)\in\mathcal{C}$, with $\rho_0 \ge0$, and step-size $\eta$.
        \FOR{\(t=0,1,\dots, T-1\)}
            \STATE Compute $\nabla P(\theta_t)$ according to Eq.~(\ref{eq:gradients}).
            \STATE Update $\theta_{t+1} = \mathcal{P}_{\mathcal{C}}(\theta_t - \eta \nabla P(\theta_t))$.
        \ENDFOR
    \end{algorithmic}
\end{algorithm}

In Algorithm~\ref{alg:WC-Penalty}, we can compute the gradient $\nabla P(\theta) = [\nabla_{\rho} P(\theta)^\top, \nabla_z P(\theta)^\top, \nabla_{\delta} P(\theta)^\top]^\top$ as:
\begin{equation}\label{eq:gradients}
    \begin{aligned}
        & \nabla_\rho P(\theta) = 1 - v \sum\nolimits_{s=1}^S (\lambda_sf_s(z) + \delta_s - \rho), \\
        \vspace{-0.25em}
        & \nabla_z P(\theta) = u \sum\nolimits_{i=1}^q h_i(z)\nabla h_i(z) \\
        &\hspace{4em} + v \sum\nolimits_{s=1}^S (\lambda_sf_s(z) + \delta_s - \rho) \lambda_s\nabla f_s(z), \\
        & \nabla_{\delta_s} P(\theta) = v(\lambda_sf_s(z) + \delta_s - \rho), \hspace{1em} s\in[S]. \\
    \end{aligned}
\end{equation}

\begin{remark}
    We can generalize \Cref{alg:WC-Penalty} to stochastic ECMO problems, where the objectives and constraints are in the form of $f_s(z) = \mathbb{E}_{\xi}[f_s(z;\xi)], \forall s\in[S]$, and $h_i(z) = \mathbb{E}_{\zeta}[h_i(z;\zeta)], \forall i\in[q]$.
    The basic idea remains the same, with the key distinction being the use of stochastic gradients.
    Due to space limitation, we provide the stochastic WC-Penalty algorithm and its analysis in \Cref{sec:app-alg}.
\end{remark}

\begin{remark}
    Even though \Cref{alg:WC-Penalty} does not require the maintenance or updating of the \textit{dual variables}, $\omega$ and $\nu$ do play an significant role in analyzing the convergence rate. As detailed in \Cref{sec:app-alg}, we select $\omega_{t,s} = v(\lambda_sf_s(z_t) + \delta_{t, s} - \rho_t)$, $\nu_{t,i} = u h_i(z_t)$, where $\omega_t = (\omega_{t,1}, \dotsc, \omega_{t,S})^\top$, $\nu_t = (\nu_{t,1}, \dotsc, \nu_{t,q})^\top$ at each iteration $t$ to control the KKT system and, in turn, to ensure the finite-time convergence.
\end{remark}

\subsection{Theoretical Convergence Analysis}

To analyze the convergence of the proposed WC-Penalty algorithm, we first state several useful assumptions, and then establish the finite-time convergence rate guarantee and iteration complexity results for our WC-Penalty algorithm.
Unless noted otherwise, we use $\|\cdot\|$ to denote the $\ell_2$-norm.

\begin{assumption}[Smoothness]\label{ass:smooth}
    There exist some constants $M,L>0$ such that for any $z_1,z_2\in\mathbb{R}^k$, and for any $s\in[S], i\in[q]$, we have:
    \vspace{-0.5em}
    \begin{align*}
        & |f_s(z_1) - f_s(z_2)| \le M\|z_1-z_2\|, \\
        & |h_i(z_1) - h_i(z_2)| \le M\|z_1-z_2\|, \\
        & \|\nabla f_s(z_1) - \nabla f_s(z_2)\| \le L\|z_1-z_2\|, \\
        & \|\nabla h_i(z_1) - \nabla h_i(z_2)\| \le L\|z_1-z_2\|. 
    \end{align*}
\end{assumption}
Assumption~\ref{ass:smooth} is standard and widely adopted in the literature \citep{ghadimi2018approximation,ji2021bilevel,qiu2023diamond,lin2024smooth}.
We also require that, there exists some $L_P>0$ such that $\|\nabla P(\theta_1)-\nabla P(\theta_2)\| \le L_P \|\theta_1 - \theta_2\|$, implying that $P(\theta)$ is $L_P$-smooth.
However, $L_P = \Theta(u+v)$ could be large since the chosen penalty coefficients $u,v$ are typically large.


\begin{assumption}[LICQ of the WC Problem]\label{ass:regular}
    For any $z\in\mathbb{R}^k$, we assume that linearly independent constraint qualification (LICQ) holds for the Problem~\eqref{eq:WC}, i.e., the gradients $\{\nabla h_i(z), \nabla f_s(z)\}$ are linearly independent, where $i\in[q]$, and $s\in\{s\in[S]:\lambda_sf_s(z)=\rho\}$.
\end{assumption}

With LICQ for Problem~\eqref{eq:WC}, $\{\nabla h_1(z), \dotsc, \nabla h_q(z)\}$ is linearly independent, implying $\mathrm{rank}(\nabla h(z))=q$.
As a result, $\nabla h(z)^\top \nabla h(z)$ is positive definite.
This implies that the minimum \textit{singular} value of $\nabla h(z)$, $\sigma_{\min}(\nabla h(z))$, can be lower bounded by some strictly positive $\sigma>0$.

Also, without loss of generality, we suppose that $f_s(z)>0, \forall s\in[S]$ (see the justification in \Cref{thm:WC-ECMO}) and that $\{z\in\mathbb{R}^k: h(z)=0\}$ is nonempty, to ensure the ECMO problem is well-posed.
We are now ready to present the main theoretical convergence rate result as follows.

\begin{theorem}[Finite-Time Convergence Rate of \Cref{alg:WC-Penalty}]\label{thm:WC-Penalty}
    Under \Cref{ass:smooth,ass:regular}, for any preference $\lambda\in\Delta_S^{++}$, selecting $\eta = \Theta(T^{-\frac{1}{4}})$ and $u = v = \Theta(T^{\frac{1}{4}})$, \Cref{alg:WC-Penalty} achieves the following convergence result: $\frac{1}{T}\sum_{t=0}^{T-1} \|\mathcal{K}(\rho_t,z_t,\omega_t,\nu_t,\lambda)\|^2 = \mathcal{O}(S/T^{\frac{1}{2}})$.
\end{theorem}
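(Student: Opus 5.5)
Our plan is to treat Algorithm~\ref{alg:WC-Penalty} as projected gradient descent on the $L_P$-smooth function $P$ over the convex set $\mathcal{C}$, and to bound each block of $\mathcal{K}(\rho_t,z_t,\omega_t,\nu_t,\lambda)$ using the gradient mapping $G_t := (\theta_t-\theta_{t+1})/\eta$ together with the dual choices $\omega_{t,s}=v(\lambda_sf_s(z_t)+\delta_{t,s}-\rho_t)$ and $\nu_{t,i}=uh_i(z_t)$.

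\textbf{Step 1 (descent).} Since $\eta\le 1/L_P$ (which requires $L_P=\Theta(u+v)=\Theta(T^{1/4})$ compatible with $\eta=\Theta(T^{-1/4})$ once constants are fixed), the standard projected-GD inequality gives $P(\theta_{t+1})\le P(\theta_t)-\frac{\eta}{2}\|G_t\|^2$. Telescoping yields $\frac1T\sum_t\|G_t\|^2\le \frac{2(P(\theta_0)-\inf_{\mathcal{C}}P)}{\eta T}$. We need a lower bound on $P$ along the iterates. $P\ge\rho$ alone is not enough, but completing the square in $\rho$ gives $\rho+\frac v2\sum_s(\lambda_sf_s+\delta_s-\rho)^2\ge \rho^\ast-\mathcal{O}(S/v)$, and with $f_s>0$, $\delta_s\ge0$ this gives $\inf P\ge -\mathcal{O}(S/v)$. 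Also $P(\theta_0)=\mathcal{O}(u+v)$ unless $\theta_0$ is chosen nearly feasible. We will therefore pick $\theta_0$ with $h(z_0)=0$ and $\delta_{0,s}=\rho_0-\lambda_sf_s(z_0)$, so that $P(\theta_0)=\rho_0=\mathcal{O}(1)$. Hence $\frac1T\sum_t\|G_t\|^2=\mathcal{O}(1/(\eta T))=\mathcal{O}(T^{-3/4})$.

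\textbf{Step 2 (stationarity blocks).} The $\rho$-component is unconstrained, so $G_{t,\rho}=\nabla_\rho P=1-\sum_s\omega_{t,s}$; this is exactly the first block of $\mathcal{K}$. The $z$-component similarly gives $G_{t,z}=\sum_s\omega_{t,s}\lambda_s\nabla f_s(z_t)+\sum_i\nu_{t,i}\nabla h_i(z_t)$, which is the second block. For the fourth block, the $\delta$-update is $\delta_{t+1,s}=\max\{0,\delta_{t,s}-\eta\omega_{t,s}\}$. A case analysis shows that $|\min\{\omega_{t,s},\rho_t-\lambda_sf_s(z_t)\}|=|\min\{\omega_{t,s},\delta_{t,s}-\omega_{t,s}/v\}|$ is bounded by $\mathcal{O}(|G_{t,\delta_s}|+|\omega_{t,s}|/v)$. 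Here we take $\eta v=\Theta(1)$, which is consistent with the parameter choices. Summing over $s$ produces the factor $S$; bounding $\sum_s\omega_{t,s}^2/v^2$ uses Step 1 in the form $\frac v2\sum_s(\cdot)^2\le P-\rho$.

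\textbf{Step 3 (feasibility, the main obstacle).} We must show $\|h(z_t)\|^2$ is small, and this is where Assumption~\ref{ass:regular} enters. We write $u\nabla h(z_t)h(z_t)=G_{t,z}-\sum_s\omega_{t,s}\lambda_s\nabla f_s(z_t)$. Using $\sigma_{\min}(\nabla h)\ge\sigma$ and $\|\nabla f_s\|\le M$, we get $\|h(z_t)\|\le \frac{1}{u\sigma}(\|G_{t,z}\|+M\sum_s\lambda_s|\omega_{t,s}|)$. The quantity $\sum_s|\omega_{t,s}|$ is controlled by $|1-\sum_s\omega_{t,s}|+$ nonnegativity, and it is $\mathcal{O}(1+\|G_t\|)$ once we argue $\omega_{t,s}$ is not very negative, which follows via the $\delta$ projection. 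Hence $\|h(z_t)\|^2=\mathcal{O}((1+\|G_t\|^2)/u^2)$.

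\textbf{Step 4 (combining).} Altogether, $\frac1T\sum_t\|\mathcal{K}_t\|^2=\mathcal{O}\bigl(S(\frac{1}{\eta T}+\frac1{u^2}+\frac1{v^2})\bigr)$. With $\eta=\Theta(T^{-1/4})$ and $u=v=\Theta(T^{1/4})$ this is $\mathcal{O}(S T^{-1/2})$. The delicate points are the lower bound on $P$ and the sign control of $\omega_{t,s}$ in Steps 2--3; these are what we would check most carefully.
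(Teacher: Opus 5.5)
Your architecture matches the paper's. You use projected-GD descent on $P$ with the gradient mapping $G_t$ (the paper's $d_t$) and the dual choices $\omega_{t,s}=vc_{t,s}$, $\nu_{t,i}=uh_i(z_t)$, where $c_{t,s}:=\lambda_sf_s(z_t)+\delta_{t,s}-\rho_t$. The first two blocks of $\mathcal{K}$ are $G_{t,\rho}$ and $G_{t,z}$, feasibility goes through LICQ and $\sigma$, and the last block gets a case analysis on the $\delta$-projection. Your Step 3 is correct and is the paper's Step B.2, split by sign rather than by $\mathcal{I}_t,\mathcal{J}_t$. Your bound $P\ge -1/(2vS)$ is correct and cleaner than the paper's $\rho_T\ge -2$ argument.

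\textbf{The gap is in Step 2.} Your case analysis leaves a residual $|\omega_{t,s}|/v=c_{t,s}$ in exactly one case: $\omega_{t,s}>\delta_{t,s}/\eta$ and $\rho_t<\lambda_sf_s(z_t)$. You bound $\sum_s c_{t,s}^2$ via $\frac v2\sum_s c_{t,s}^2\le P(\theta_t)-\rho_t$, which only gives $\sum_s c_{t,s}^2\le 2(P(\theta_t)-\rho_t)/v$. Nothing you have shown makes $P(\theta_t)-\rho_t$ small. Monotonicity gives at best $P(\theta_t)-\rho_t\le P(\theta_0)-\rho_t$, which with your feasible start is $\rho_0-\rho_t$, in general $\Theta(1)$ as $\rho_t$ descends toward the WC value. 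So this route gives $\mathcal{O}(1/v)=\mathcal{O}(T^{-1/4})$ for the last block, not the $\mathcal{O}(1/v^2)$ your Step 4 uses. With $u=v=\Theta(T^{1/4})$, the claimed $\mathcal{O}(S/T^{1/2})$ then does not follow.

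\textbf{The fix is already in your Step 3.} The residual only arises when $\omega_{t,s}>0$. Nonpositive $\omega_{t,s}$ are never clipped, so $|\omega_{t,s}|=|G_{t,\delta_s}|$ for them, and $\sum_s\omega_{t,s}=1-G_{t,\rho}$. Hence
\[
\sum_{s:\omega_{t,s}>0}c_{t,s}^2\le\bigl(\sum_{s:\omega_{t,s}>0}\omega_{t,s}\bigr)^2/v^2\le\bigl(1+|G_{t,\rho}|+\sqrt S\|G_{t,\delta}\|\bigr)^2/v^2=\mathcal{O}\bigl((1+S\|G_t\|^2)/v^2\bigr).
\]
This is how the paper bounds $\sum_{s\in\mathcal{J}_t}c_{t,s}^2\le(\sum_{s\in\mathcal{J}_t}c_{t,s})^2$ using $\sum_s c_{t,s}=(1-d_{t,\rho})/v$.

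\textbf{The feasible initialization is unnecessary.} It is an extra requirement not in the theorem; for MTBL, $h(z_0)=0$ means solving $\nabla_y g(x_0,y)=0$ exactly. For arbitrary $\theta_0$ you have $P(\theta_0)=\mathcal{O}(u+v)$, and $(u+v)/(\eta T)=\mathcal{O}(T^{-1/2})$ under the prescribed parameters. This is precisely the paper's constraint $1-\gamma-\xi\ge o$.
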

In addition to the finite-time convergence rate, \textit{iteration complexity} also serves as another key metric for evaluating the efficiency of algorithms.
Specifically, ensuring $\frac{1}{T}\sum_{t=0}^{T-1} \|\mathcal{K}(\rho_t,z_t,\omega_t,\nu_t,\lambda)\|^2 < \epsilon$ for any $\epsilon>0$ indicates that the obtained sequence achieves an $\epsilon$-Pareto stationary solution for the ECMO problem.
The iteration complexity result below immediately follows from \Cref{thm:WC-Penalty}:
\begin{corollary}
    To achieve an $\epsilon$-Pareto stationary solution for any $\epsilon>0$, Algorithm~\ref{alg:WC-Penalty} requires $\mathcal{O}(S^2\epsilon^{-2})$ evaluations of $\nabla f_s(z)$ for each $s\in[S]$, and $\mathcal{O}(S^2\epsilon^{-2})$ evaluations of $\nabla h_i(z)$ for $i\in[q]$.
\end{corollary}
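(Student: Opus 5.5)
The plan is to derive the corollary directly from \Cref{thm:WC-Penalty}, taken as given, by inverting its rate and then counting oracle calls per iteration of \Cref{alg:WC-Penalty}. No new convergence analysis is needed.

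\textbf{Step 1: invert the rate.} \Cref{thm:WC-Penalty} gives a constant $C>0$, independent of $T$ and $S$, such that with $\eta=\Theta(T^{-\frac14})$ and $u=v=\Theta(T^{\frac14})$,
\[
\frac{1}{T}\sum_{t=0}^{T-1}\|\mathcal{K}(\rho_t,z_t,\omega_t,\nu_t,\lambda)\|^2 \le \frac{CS}{T^{\frac12}} .
\]
Here $C$ depends only on $M$, $L$, $\sigma$ and the initialization. To make the right-hand side at most $\epsilon$, it suffices to take
\[
T \ge \Bigl(\frac{CS}{\epsilon}\Bigr)^2, \quad\text{so}\quad T=\Bigl\lceil C^2S^2\epsilon^{-2}\Brceil=\mathcal{O}(S^2\epsilon^{-2}),
\]
with $\eta$, $u$, $v$ set according to this $T$.

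The average bound also yields a single point. Some iterate $t^\star$, or equivalently one drawn uniformly at random in expectation, satisfies $\|\mathcal{K}(\rho_{t^\star},z_{t^\star},\omega_{t^\star},\nu_{t^\star},\lambda)\|^2\le\epsilon$. The multipliers $\omega_{t^\star},\nu_{t^\star}$ are the explicit choices from the remark, and $\lambda\in\Delta_S^{++}$ is the given preference vector. By the definition following \Cref{def:KKT-system}, $z_{t^\star}$ is therefore an $\epsilon$-Pareto stationary solution of \ref{eq:ECMO}.

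\textbf{Step 2: count oracle calls per iteration.} By Eq.~(\ref{eq:gradients}), forming $\nabla P(\theta_t)$ needs exactly one evaluation of each $\nabla f_s(z_t)$, $s\in[S]$, and one of each $\nabla h_i(z_t)$, $i\in[q]$. It also needs the function values $f_s(z_t)$ and $h_i(z_t)$, which are not gradient evaluations. The projection $\mathcal{P}_{\mathcal{C}}$ is simply clipping $\delta$ at zero and needs no oracle call.

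Multiplying by $T$ iterations gives $\mathcal{O}(S^2\epsilon^{-2})$ evaluations of $\nabla f_s$ for each $s$, and the same count for each $\nabla h_i$.

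\textbf{Main subtlety.} The only delicate point is making sure the constant hidden in the $\mathcal{O}(S/T^{\frac12})$ of \Cref{thm:WC-Penalty} does not secretly depend on $T$ through $u$, $v$ or $\eta$. This holds because the theorem fixes these parameters as functions of $T$ and states the final bound uniformly. One should also note that the counts are per objective and per constraint, so the total number of gradient calls across all oracles is $\mathcal{O}((S+q)S^2\epsilon^{-2})$.
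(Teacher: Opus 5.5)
Your proposal is correct and follows the same route the paper intends. The paper gives no separate proof and simply states that the corollary follows immediately from \Cref{thm:WC-Penalty}: requiring $\mathcal{O}(S/T^{\frac{1}{2}})\le\epsilon$ gives $T=\mathcal{O}(S^2\epsilon^{-2})$, and each iteration of \Cref{alg:WC-Penalty} uses one evaluation of each $\nabla f_s$ and each $\nabla h_i$. Your extra details fill in exactly what the paper leaves implicit: you extract a single iterate via the minimum over $t$, and you set $\eta,u,v$ as functions of the chosen $T$.
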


\begin{remark}
    \Cref{thm:WC-Penalty} is established in two key steps.
    In the first step, we consider the dynamics of $P(\theta_t)$ generated by \Cref{alg:WC-Penalty}, which is designed to solve $\min_{\theta\in\mathcal{C}} P(\theta)$, and hence solving the WC-scalarized problem.
    In the second step, we judiciously select the parameters, which, according to \Cref{thm:WC-ECMO} and \Cref{lemma:KKT}, allow us to control each term in the KKT system defined in \Cref{def:KKT-system}.
    Collectively, these two key steps establish the theoretical guarantees for solving ECMO as shown in \Cref{fig:WC-Penalty}.
    Due to space limitation, the proof of Theorem~\ref{thm:WC-Penalty} is relegated to \Cref{sec:app-alg}.
\end{remark}
\begin{remark}
    To our knowledge, \Cref{thm:WC-Penalty} establishes the first finite-time convergence guarantee in the literature of ECMO.
    This result ensures that \Cref{alg:WC-Penalty} can achieve Pareto stationarity for any given preference weight vector $\lambda$.
    Moreover, according to the previous discussions on WC-scalarization, by varying $\lambda$ over $\Delta_S^{++}$, \Cref{alg:WC-Penalty} can systematically explore the entire Pareto stationary front.
\end{remark}

\begin{figure}[t]
    \centering
    \includegraphics[width=0.65\linewidth]{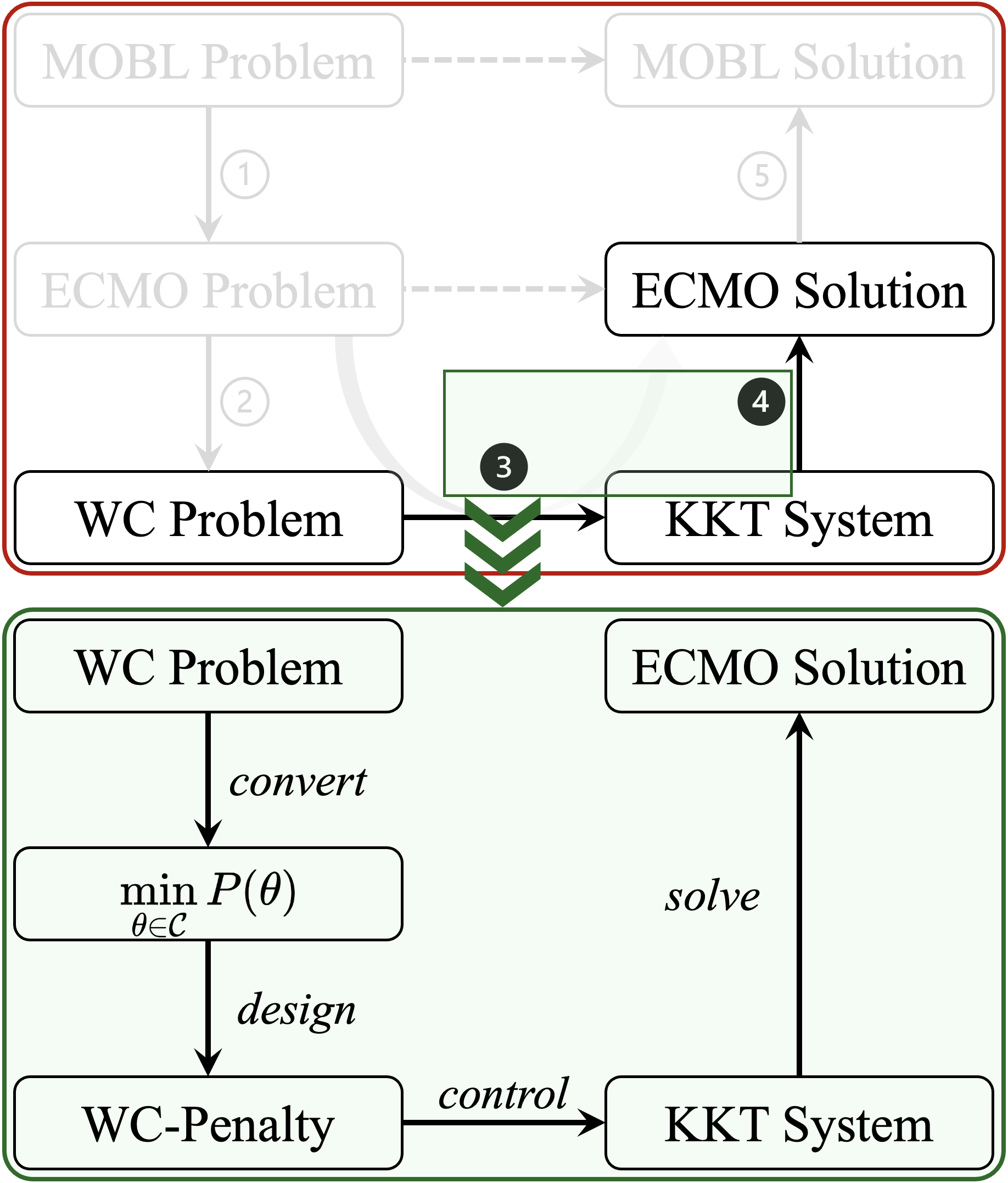}
    \vspace{-.02in}
    \caption{Steps to prove \Cref{thm:WC-Penalty}.}
    \label{fig:WC-Penalty}
    \vspace{-.2in}
\end{figure}

\section{Returning to MTBL Problems through the Lens of ECMO}\label{sec:MTBL}

Finally, we can easily solve the \ref{eq:MTBL} problem as a special case of the ECMO problem: we first specialize Eq.~(\ref{eq:WC-Penalty}) in this scenario by splitting the variable $z$ explicitly into $x, y$:
\vspace{-0.5em}
\begin{equation*}
    \begin{aligned}
        & \min_{\rho,x,y,\delta} P(\rho, x, y, \delta) = \rho + \frac{u}{2}\sum\nolimits_{i=1}^q (\nabla_yg(x,y))_i^2 \\
        & \hspace{4em} + \frac{v}{2}\sum\nolimits_{s=1}^S (\lambda_sf_{s}(x,y) + \delta_s - \rho)^2 \\
        & \text{s.t. } \delta_s \ge 0, s\in[S].
    \end{aligned}
\end{equation*}

\vspace{-1em}
For convenience, we still denote 1) the combined variable as $\theta = (\rho^\top, x^\top, y^\top, \delta^\top)^\top$, and 2) the feasible region as $\mathcal{C}=\mathbb{R}\times\mathbb{R}^p\times\mathbb{R}^q\times\mathbb{R}^S_+$.
Under this change of variables, we can follow \Cref{alg:WC-Penalty} exactly to solve the MTBL problem, and the theoretical results in \Cref{sec:alg} naturally translate to the MTBL setting.
Next, to validate the effectiveness of our proposed algorithm, we apply it to two MTBL tasks and present the corresponding numerical results.

\subsection{Data Weighting for Multi-Objective RLHF Reward Model Training}

\textbf{1) Experimental Setup:} The multi-objective data weighting task aims to determine optimal proportion in mixing training datasets for training a reward model to maximize multiple validation metrics in the Pareto sense.
This task is important in reinforcement learning with human feedback (RLHF), where: 1) large-scale training data often has unknown origins, varied tendencies, and mixed qualities, and 2) human preferences (e.g., \textit{helpfulness}, \textit{verbosity}) may conflict with each other.
Here, we train the reward model for RLHF on the HelpSteer dataset \citep{wang2023helpsteer}, and consider all of the $5$ provided criteria, \textit{helpfulness}, \textit{correctness}, \textit{coherence}, \textit{complexity}, and \textit{verbosity}, as validation metrics.
We evaluate three MTBL algorithms, MOML \citep{ye2021multi}, MoCo \citep{fernando2023mitigating}, FORUM \citep{ye2024first}, as our baselines.

\begin{figure}[t]
    \centering
    \begin{subfigure}[b]{0.238\textwidth}
        \centering
        \includegraphics[width=\linewidth]{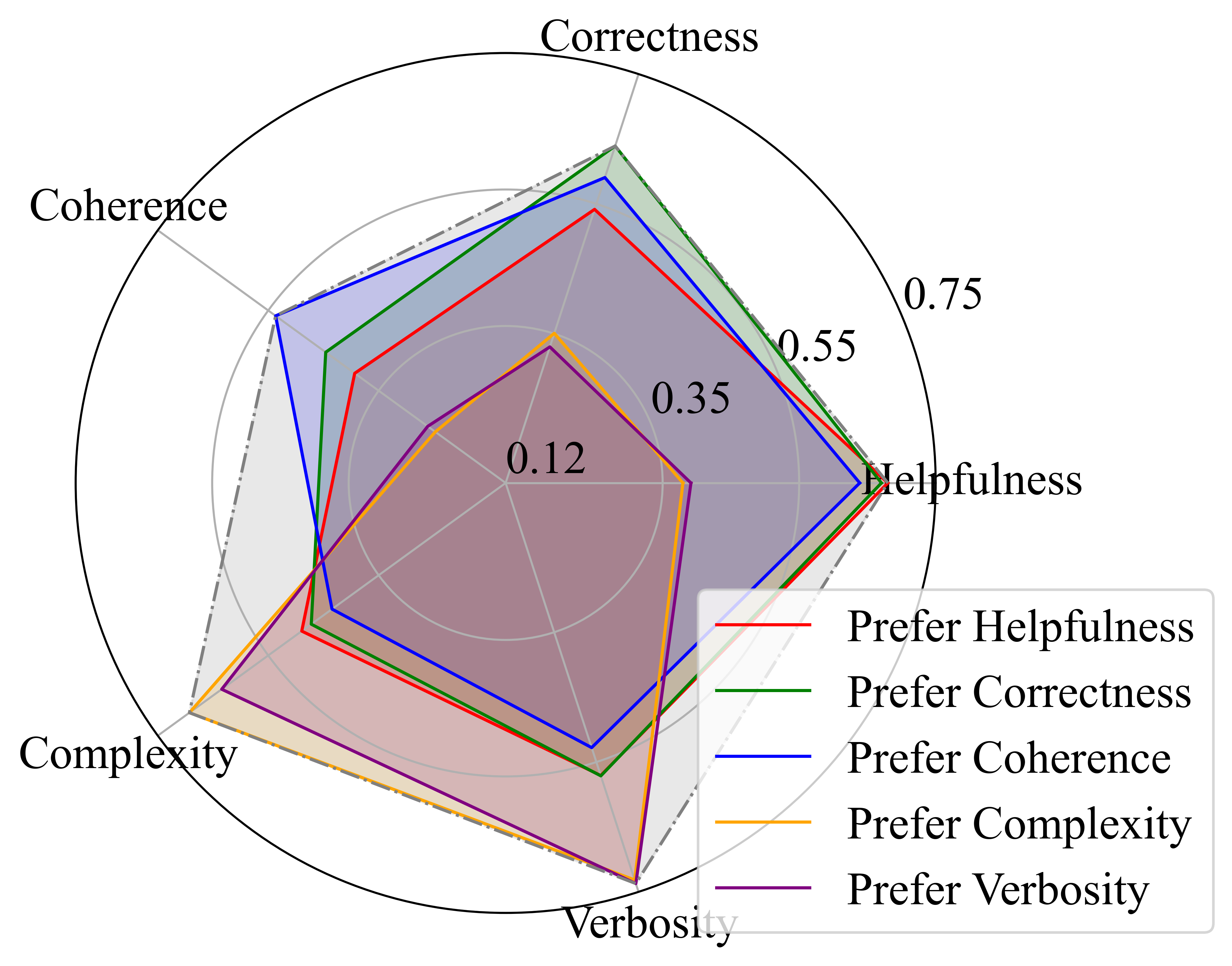}
        \caption{Pareto exploration.}\label{fig:RM-explore}
    \end{subfigure}
    \begin{subfigure}[b]{0.238\textwidth}
        \centering
        \includegraphics[width=\linewidth]{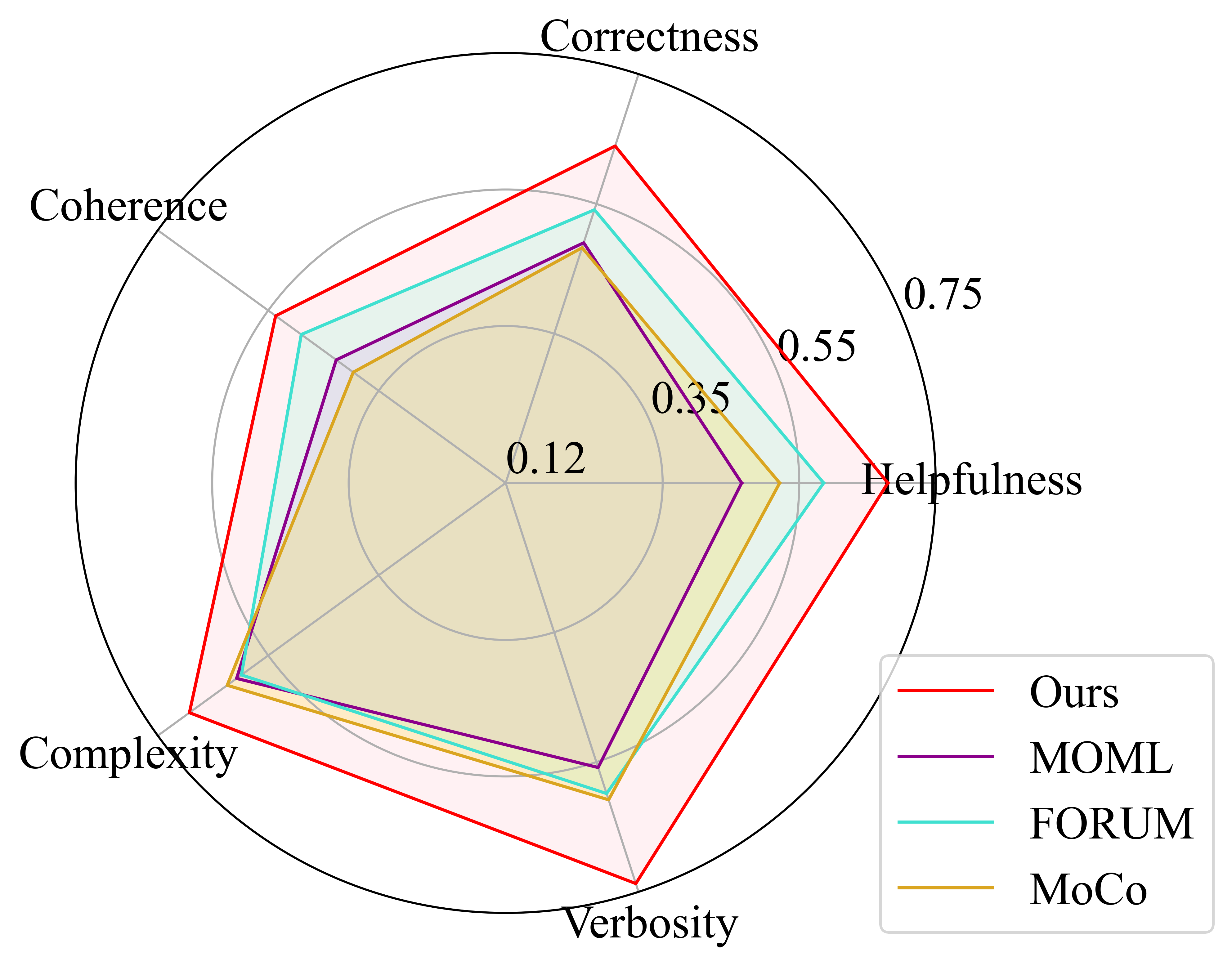}
        \caption{Baseline comparison.}\label{fig:RM-compare}
    \end{subfigure}
    \caption{Data weighting for RLHF reward model training.}\label{fig:RM}
    \vspace{-1.5em}
\end{figure}
\textbf{2) Experimental Results:} In Fig.~\ref{fig:RM}, we set the preference vector $\lambda$ as $\lambda_s = 0.96$ for some $s \in [S]$ and $\lambda_{s'} = 0.01$, $\forall s'\ne s$, using $1/\text{loss}$ as our metric for each objective.
As shown in Fig.~\ref{fig:RM-explore}, by varying the preference vectors, \Cref{alg:WC-Penalty} can efficiently explore a diverse set of Pareto stationary solutions, enabling our algorithm to recover a large portion of the Pareto front.
Moreover, Fig.~\ref{fig:RM-compare} further demonstrates that our proposed algorithm outperforms existing methods in recovering the Pareto front, highlighting its effectiveness in Pareto front exploration.
Due to space limitation, additional numerical results on convergence performances and comparisons with several bilevel algorithms using linear scalarization that demonstrate the strengths of our algorithm are relegated to \Cref{sec:app-exp1}.

\subsection{Data Weighting in Multi-Objective LLM Alignment}

\textbf{1) Experimental Setup:} We consider the data weighting task on multi-objective LLM alignment, where the goal is to determine the proportion weights of dataset to minimize multiple human-aligned losses in validation.
The dataset used here is still HelpSteer \citep{wang2023helpsteer}, which contains $5$ potentially conflicting criteria, and the base LLM model is Llama-3.2-1B-Instruct \citep{Llama3.2-1B-Instruct}. 
More setup details can be found in \Cref{sec:app-exp2}.

\begin{figure}[t!]
    \centering
    \begin{subfigure}[b]{0.238\textwidth}
        \centering
        \includegraphics[width=\linewidth]{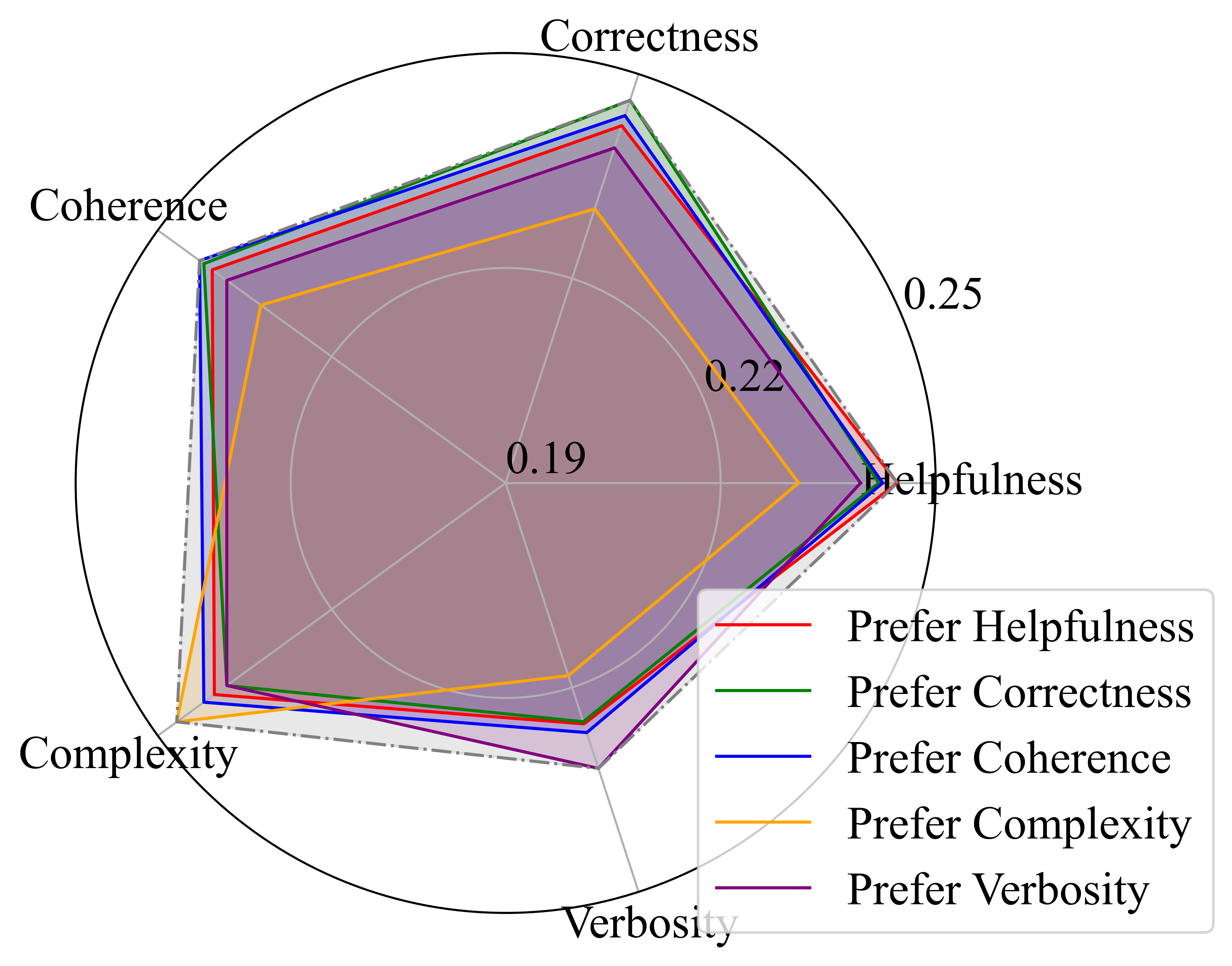}
        \caption{Pareto exploration.}\label{fig:LLM-explore}
    \end{subfigure}
    \begin{subfigure}[b]{0.238\textwidth}
        \centering
        \includegraphics[width=\linewidth]{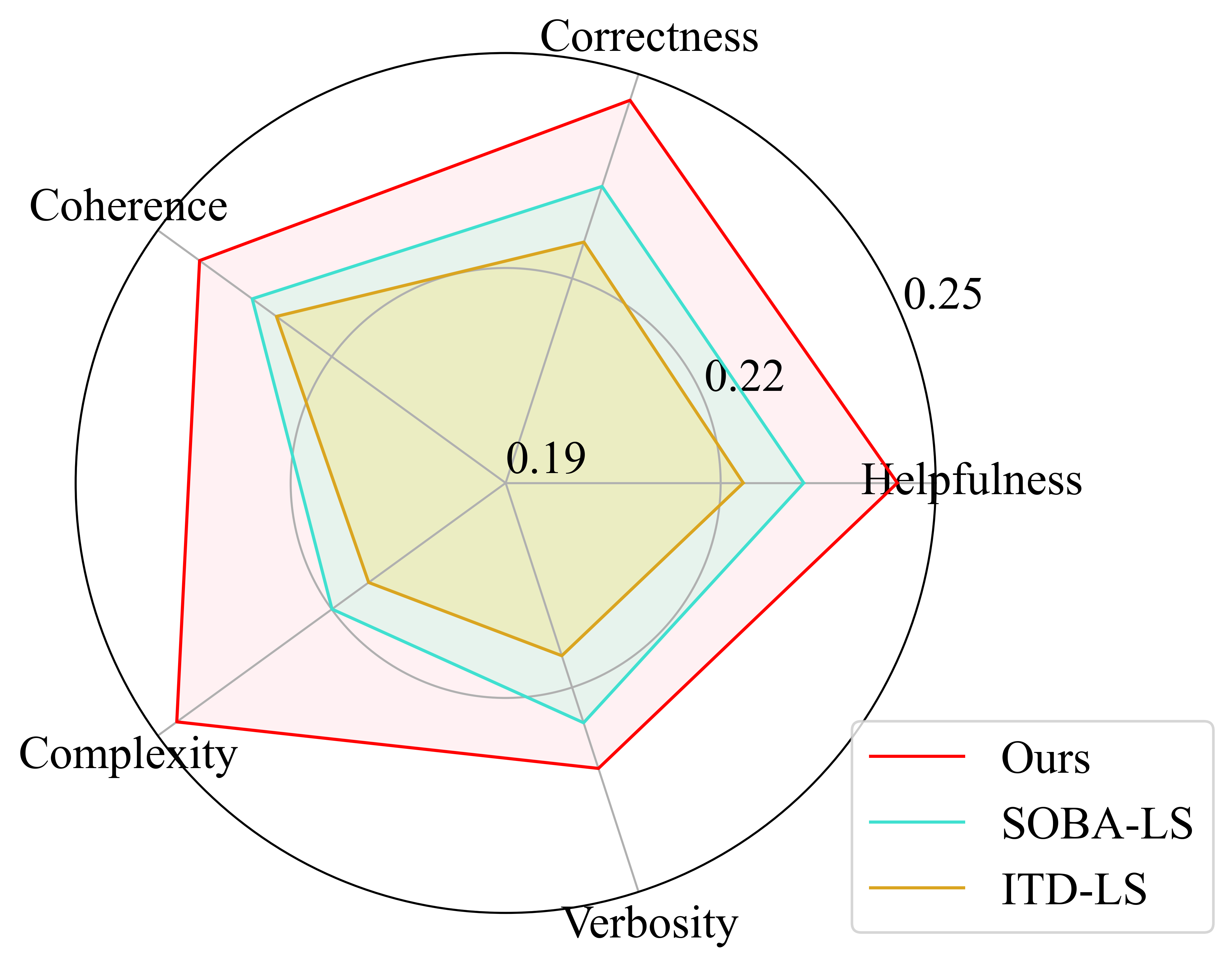}
        \caption{Baseline comparison.}\label{fig:LLM-bilevel}
    \end{subfigure}
    \vspace{-1.5em}
    \caption{Data weighting task in LLM alignment.}\label{fig:LLM}
    \vspace{-.2in}
\end{figure}
\textbf{2) Experimental Results:} As shown in Fig.~\ref{fig:LLM}, we set the preference vector $\lambda$ as $\lambda_s = 0.96$ for some $s \in [S]$ and $\lambda_{s'} = 0.01$, $\forall s'\ne s$, using $1/\text{loss}$ as our metric for each objective.
Fig.~\ref{fig:LLM-explore} shows that Alg.~\ref{alg:WC-Penalty} is able to achieve Pareto stationary points with better performance on specific objectives when larger weights are assigned to them, again verifying the Pareto exploration capability of our algorithm.
Moreover, Fig.~\ref{fig:LLM-bilevel} indicates that our algorithm outperforms two bilevel baselines adapted from \cite{ji2021bilevel,dagreou2022framework}, where we extend them with linear scalarization technique for solving MTBL problems.

In addition, \Cref{tab:hv} demonstrates that the hypervolume obtained by our method is significantly larger than that of the baselines \citep{ye2021multi,fernando2023mitigating}. Moreover, Fig.~\ref{fig:epsilon-metric} further confirms that, in terms of $\epsilon$-metric: 1) our method consistently outperforms the baselines, and 2) with varying preference vectors, our method converges to the desired solutions.
Due to space limitation, we relegate additional results to \Cref{sec:app-exp2}. Additionally, we also conduct a multi-task meta-learning experiment in \Cref{sec:app-exp3} to further validate the efficacy of our algorithm.


\begin{table}[t!]
    \centering
    \caption{Hypervolume results in LLM alignment.}
    \label{tab:hv}
    \resizebox{0.25\textwidth}{!}{
    \begin{tabular}{lccc}
        \toprule
        Alg. & \textbf{Ours} & MOML & MoCo \\
        \midrule
        HV ($\uparrow$) & \textbf{2.47e-2} & 7.02e-7 & 1.66e-5 \\
        \bottomrule
    \end{tabular}
    }
\end{table}

\begin{figure}[t!]
    \centering
    \includegraphics[width=0.7\linewidth]{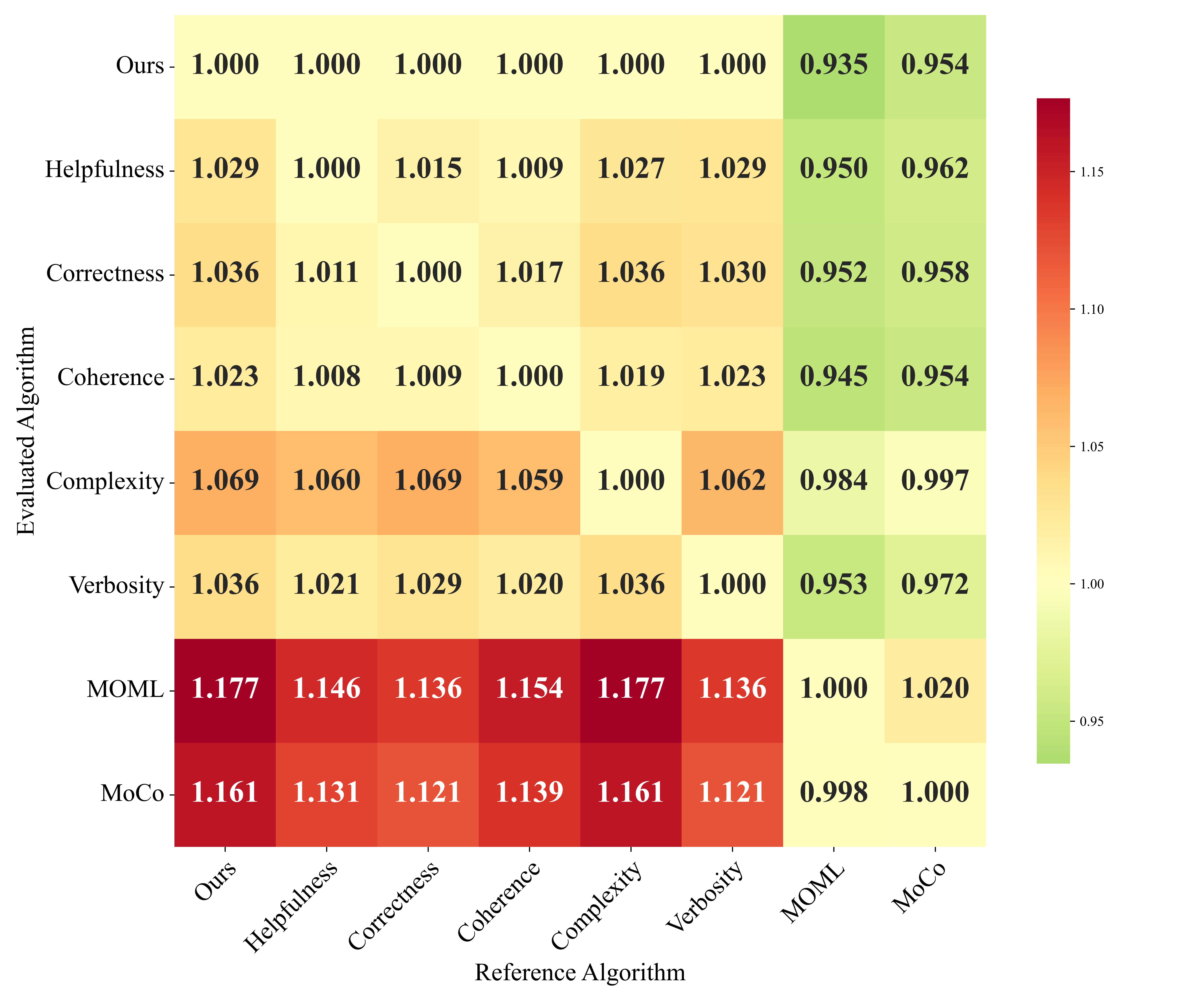}
    \vspace{-.05in}
    \caption{$\epsilon$-metric results in LLM alignment.}
    \label{fig:epsilon-metric}
    \vspace{-0.1in}
\end{figure}



\section{Concluding Remarks}\label{sec:conclusion}
In this paper, we studied the LLGC-MTBL problems through the lens of ECMO.
We first extended the notion of Pareto stationarity to ECMO and proposed a KKT-based Pareto stationarity convergence metric, based on which we developed a WC-Penalty algorithm for ECMO.
Next, we established the finite-time convergence rate of our WC-Penalty algorithm.
To our knowledge, this convergence result is the first of its kind in the literature.
Lastly, we showed that our WC-Penalty algorithm can be used to solve the LLGC-MTBL problems not only with theoretical convergence guarantee but also effectively in practice as evidenced by our extensive numerical results.

\section*{Acknowledgement}

This work is supported in part by a Meta Research Award INB3267726, NSF grants CAREER CNS-2110259, CNS-2112471, ECCS-233104, DARPA YFA D24AP00265, DARPA HR0011-25-2-0019, and ONR grant N00014-24-1-2729.

\section*{Impact Statement}

This paper presents work whose goal is to advance the field of machine learning. There are many potential societal consequences of our work, none of which we feel must be specifically highlighted here.

\bibliography{reference}
\bibliographystyle{icml2026}

\newpage
\onecolumn

\clearpage
\appendix
\section*{Appendix}

\section{Notations}\label{sec:app-notation}

We summarize the notations throughout this paper in \Cref{tab:notation}.

\begin{table}[htb]
\begin{tabularx}{\textwidth}{@{}p{0.4\textwidth}p{0.52\textwidth}@{}}
\toprule
    \textbf{Notations} & \textbf{Definitions} \\
    \hline
    $P_{\mathcal{E}}(\cdot)$ & Projection the point onto a convex set $\mathcal{E}$ \\
    $\mathcal{D}$ & Feasible region $\{z\in\mathbb{R}^k: h(z)=\bf{0}\}$ \\
    $N_\delta(\tilde{z})$ & $\delta$-neighborhood of $\tilde{z}$, i.e., $\{z\in\mathcal{R}^k: \|z-\tilde{z}\|_2 \le \delta\}$ \\
    $\Delta_S^+$ & Simplex in $S$ dimension \\
    $\Delta_S^{++}$ & Strictly positive simplex in $S$ dimension \\
    $X_{\text{P}}$ & The set of Pareto optimal points \\
    $X_{\text{WP}}$ & The set of weakly Pareto optimal points \\
    $\{F(x):x\in X_{\text{P}}\}$ & Pareto Front \\
    $\{F(x):x\in X_{\text{WP}}\}$ & Weak Pareto front \\
    $\rho$ & Additional variable for WC problem \\
    $\omega,\nu$ & Dual variables for KKT system \\
    $\lambda$ & Preference vector \\
\bottomrule
\end{tabularx}
\vspace{1em}
\caption{Summarized notation table in the paper.}\label{tab:notation}
\end{table}

\section{Additional Related Work on Closely Related Topics}\label{sec:app-rw}

In this section, we review existing literature in the areas of Multi-Objective Optimization (MOO), Bilevel Optimization (BLO), Multi-Task Bilevel Learning (MTBL), and Equality Constrained Multi-Objective (ECMO) problems. Notably, to put our work in comparative perspectives, we also provide the comparison of our approach with the existing MTBL methods in \Cref{tab:compare}.

\textbf{Multi-Objective Optimization (MOO).}
Research on MOO dates back to \citep{sawaragi1985theory}, and continues to attract significant attention in recent years \citep{ehrgott2005multicriteria,chankong2008multiobjective,hwang2012multiple,gunantara2018review}. Methods for unconstrained MOO can be broadly categorized into scalarization approaches and adaptive gradient methods.
Scalarization approaches transform the MOO problems to single-objective problems. Among them, the most widely used are linear scalarization \citep{ehrgott2005multicriteria,lin2024smooth,qiu2024traversing} and Weighted-Chebyshev method \citep{momma2022multi,lin2024smooth,qiu2024traversing}.
Adaptive gradient methods, on the other hand, aim to find Pareto optimal solutions through iterative updates and gradient descent schemes, and have been explored in works such as \cite{miettinen1995interactive,desideri2012multiple,mercier2018stochastic,fernando2023mitigating,chen2024ferero}.
The applications of MOO span a variety of domains, including but not limited to multi-task learning \citep{sener2018multi,lin2019pareto,momma2022multi}, multi-objective training and clustering \citep{mossalam2016multi,alok2015new,gonzalez2020improving}, architecture search \citep{jin2007evolutionary}.
Although these works extensively studied MOO literature, most results and techniques rely heavily on the absence of constraints. This leaves the foundation of ECMO problems still in its infancy.

\textbf{Bilevel Optimization (BLO).}
BLO also has a long-standing history, with early foundational work such as \cite{bracken1973mathematical}. In recent years, its importance has surged in machine learning, particularly in applications involving large-scale models and (LLMs), where variable coupling across different optimization levels demands sophisticated BLO frameworks \citep{chakraborty2023parl,shen2024seal,shen2024principled}.
Over the past decade, significant progress has been made in the development of BLO methods \citep{zhang2024introduction}. Works like \cite{ghadimi2018approximation,arbel2021amortized,ji2021bilevel,dagreou2022framework} provided a wide range of techniques and paradigms. Moreover, \cite{tarzanagh2022fednest,huang2023achieving,qiu2023diamond,liu2023prometheus} also extended BLO to federated learning, decentralized learning, etc.
While the lower-level strongly convex (LLSC) assumption is quite restrictive, it is widely adopted in the aforementioned works.
Although several recent efforts have been made to relax it by considering only convexity (LLGC) \citep{sabach2017first,liu2023averaged,cao2023projection,jiang2023conditional,yao2024constrained,chen2024finding,lu2024first,qin2025duet,jiang2025correspondence}, such results remain confined to the basic BLO scenario only. Their applicability to general scenarios, such as federated BLO, decentralized BLO, and MTBL, remains largely unexplored, as the different setups and optimality evaluation metrics lead to distinct challenges and require specific methodologies.

\textbf{Multi-Task Bilevel Learning (MTBL).} MTBL problem has gained increasing attention in recent years \citep{ye2021multi,gu2023min,fernando2023mitigating,li2024bi,wang2024pareto,yang2024gradient,ye2024first}.
It has received significant attention due to the rise of its wide range of applications, such as meta-learning, LLM alignment, and multi-domain training \citep{ye2021multi,zhang2026multi,kong2025emorl,dong2024abilities,wu2024mixture,liang2025boosting}.
Compared to more mature literature on MOO and BLO, existing theoretical results for MTBL remain quite limiting.
Among these works, \cite{yang2024gradient,ye2021multi} demonstrate that their proposed algorithms converge asymptotically, without providing any finite-time convergence guarantees.
In contrast, \cite{fernando2023mitigating,ye2024first} provide algorithms with a convergence rate of $\mathcal{O}(ST^{-\frac{1}{2}})$ and $\mathcal{O}(ST^{-\frac{1}{4}})$, respectively.
Recently, \cite{zhang2026multi}, for the first time in the literature, investigates the Pareto front exploration, yet their approach requires the restrictive LLSC condition.
However, all of these works heavily depend on the LLSC condition: not only is the algorithmic framework built upon the LLSC condition, but the optimality criterion also relies on it.
Therefore, this strong assumption significantly limits their applicability to complex real-world scenarios where this assumption is usually violated.

\begin{table}[t]
\centering
\caption{Comparison of Different MTBL Algorithms.}
\vspace{-0.5em}
\begin{tabular}{|>{\centering\arraybackslash}p{4.5cm}|
                >{\centering\arraybackslash}p{2.4cm}|
                >{\centering\arraybackslash}p{2.4cm}|
                >{\centering\arraybackslash}p{2.4cm}|}
\hline
\textbf{Algorithm} & \textbf{Scenario} & \textbf{Convergence} & \textbf{Exploration} \\
\hline
gMOBA \citep{yang2024gradient} & Deterministic & Asymptotic & \ding{55} \\
MOML \citep{ye2021multi} & Deterministic & Asymptotic$^\dagger$ & \ding{55} \\
FORUM \citep{ye2024first} & Deterministic & $\mathcal{O}(ST^{-\frac{1}{4}})^\ddagger$ & \ding{55} \\
MoCo \citep{fernando2023mitigating} & Stochastic & $\mathcal{O}(ST^{-\frac{1}{2}})$ & \ding{55} \\
\hline
\rowcolor[RGB]{246,234,233}
\textbf{WC-Penalty (This Work)} & Deterministic & $\mathcal{O}(ST^{-\frac{1}{2}})$ & \ding{51} \\
\rowcolor[RGB]{246,234,233}
\textbf{WC-Penalty (This Work)} & Stochastic & $\mathcal{O}(ST^{-\frac{1}{2}})$ & \ding{51} \\
\hline
\end{tabular}
\begin{tablenotes}
\footnotesize
    \item[] $\dagger$: A journal version provides a finite-time rate under a different metric. $\ddagger$: Note that even though the number of objectives $S$ is not explicitly stated in their main theorem, a closer examination of the proof reveals that the $S$ is hidden in the $\mathcal{O}(\cdot)$ notation implicitly.
\end{tablenotes}
\vspace{-1em}
\label{tab:compare}
\end{table}

\textbf{Equality Constrained Multi-Objective (ECMO).}
ECMO problems have found wide applications across various fields, including resource allocation, scheduling optimization, and path planning, just to name a few \cite{liang2022survey,hao2024constrained}.
The most closely related works on ECMO problems are \cite{cuate2020benchmark,garcia2021coarse}. Both studies propose algorithmic solutions for ECMO and conduct numerical experiments to validate their methods. However, neither provides any finite-time convergence guarantees, highlighting that the theoretical foundations for ECMO remain in their infancy.
A closely related and important extension of ECMO is the Inequality Constrained Multi-Objective (ICMO) problem, where inequality constraints are also incorporated \citep{fan2017comparative,afshari2019constrained,liang2022survey,hao2024constrained}. As with ECMO, the theoretical understanding of ICMO remains limited. Numerous heuristic algorithms have been proposed in the literature \citep{jimenez2002evolutionary,tanabe2017note,fan2019push,yang2019multi,ming2022constrained,belaiche2023parallel,li2023surrogate,long2023constrained,yang2024evolutionary,song2024dual,qiao2024benchmark}, offering a variety of algorithmic frameworks accompanied by experimental evaluations. However, these works do not establish convergence guarantees, underscoring the lack of rigorous theoretical foundations for ICMO (and ECMO) problems.

\section{Discussions and Proofs of \Cref{sec:theory}}\label{sec:app-theory}

In this section, we first demonstrate that naively extending the definitions in unconstrained MOO is insufficient for characterizing Pareto stationarity in ECMO via two judicious constructed counterexamples. We then provide the proofs of \Cref{thm:WC-ECMO,thm:PS-LWPO} stated in \Cref{sec:theory}.

\subsection{Discussion about Pareto Stationarity in ECMO}

In unconstrained MOO problems, $\tilde{z}$ is a Pareto stationary point if and only if $\exists \alpha\in\Delta_S^+$ ($S$-simplex) such that, $\left(\nabla f_1(\tilde{z}), \dotsc, \nabla f_S(\tilde{z})\right)\alpha=\bf{0}$ \citep{sener2018multi,lin2024smooth}. Accordingly, for any $\epsilon>0$, an $\epsilon$-Pareto stationary solution $\tilde{z}$ can be defined as $\|\nabla 
F(\tilde{z})\alpha\| \le \epsilon$ for some $\alpha\in\Delta_S^+$. Nevertheless, we now demonstrate that, this ($\epsilon$-)Pareto stationarity definition becomes irrational in ECMO problems.

\begin{figure}[t]
    \centering
    \includegraphics[width=0.8\linewidth]{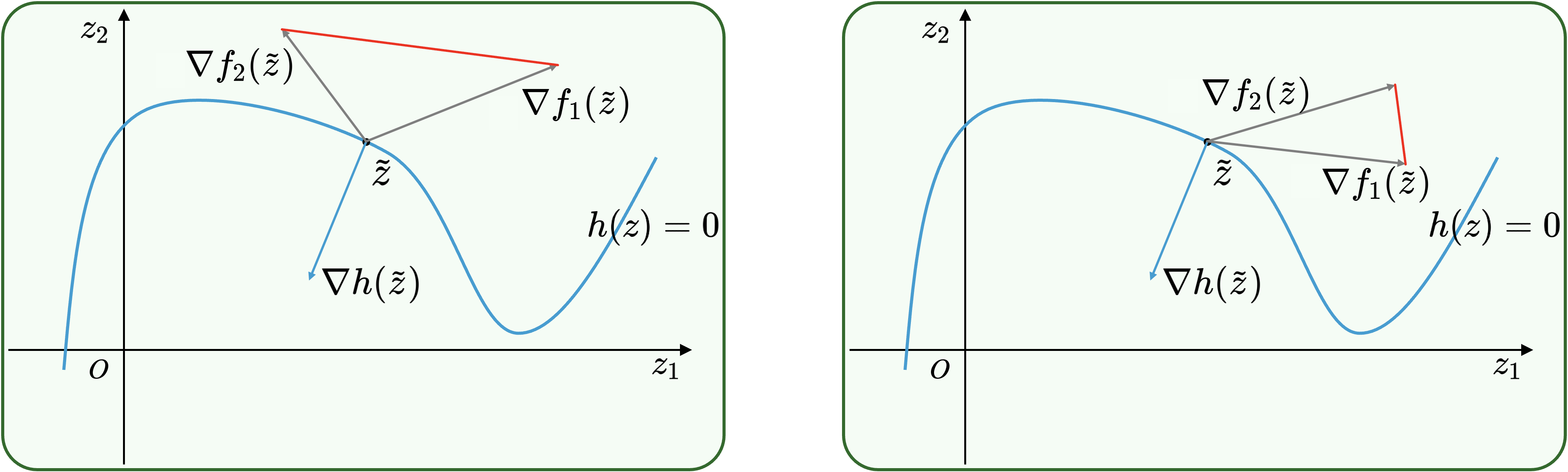}
    \caption{Pareto stationary and nonstationary examples in ECMO problems.}
    \label{fig:epsilonPS}
\end{figure}
By considering the KKT condition for each objective $f_s(z), s\in[S]$, we can construct Lagrangian as: $\mathbb{L}_s(z, v) = f_s(z) + v^\top h(z)$. Then, for each $s\in[S]$, the KKT condition can be written as: $\nabla f_s(z) + \nabla h(z)v=0$ and $h(z)=0$. Therefore, similar to the equivalent definition of \Cref{def:PS-origin} shown in \Cref{sec:theory}, we can define the following system to consider Pareto stationarity:
\begin{equation*}
    \texttt{PS}(z,v,\alpha) = \left(
    \begin{pmatrix}
        \nabla\mathbb{L}_1(z, v) \\ h(z)
    \end{pmatrix}, \dotsc, 
    \begin{pmatrix}
        \nabla\mathbb{L}_S(z, v) \\ h(z)
    \end{pmatrix}\right)\alpha = 
    \begin{pmatrix}
        \nabla F(z)\alpha + \nabla h(z)v \\ h(z)
    \end{pmatrix} = \bf{0},
\end{equation*}
where $\alpha\in\Delta_S^+$, and $v\in\mathbb{R}^q$. \texttt{PS} not only takes the feasible direction into account, but also enforces the feasibility directly. It precisely captures both the Pareto stationary and nonstationary scenarios depicted in \Cref{fig:epsilonPS}. To see this,  $\alpha_1\nabla f_1(\tilde{z}) + \alpha_2\nabla f_2(\tilde{z})$ is represented by the red line. For the left example, where $\tilde{z}$ is Pareto stationary, we can select $\alpha$ such that $\nabla F(\tilde{z})\alpha$ is collinear with $\nabla h(\tilde{z})$, allowing the existence of $v\in\mathbb{R}$ to achieve $\texttt{PS}(\tilde{z},v,\alpha)=0$. In contrast, for the right example, $v\nabla h(\tilde{z})$ does not lie in the convex hull of $\{\nabla f_1(\tilde{z}), \nabla f_2(\tilde{z})\}$ for any $v\in\mathbb{R}$, indicating the first term in \texttt{PS} can never achieve $\bf{0}$, correctly aligning with the Pareto nonstationarity of $\tilde{z}$.

However, we can construct some scenarios where 1) $\tilde{z}$ is Pareto stationary, i.e., no feasible movement can simultaneously improve, or at least not hurt, all objectives, but 2) $\texttt{PS}(\tilde{z},v,\alpha)\neq 0$ for any $\alpha\in\Delta_2^+$, and $v\in\mathbb{R}$. The following two concrete examples illustrate such cases, highlighting limitations of the \texttt{PS} formulation in fully capturing Pareto stationarity for ECMO problems.

\textit{Example 1.} Consider a $1$-dimensional bi-objective problem with $1$ constraint as follows:
\begin{equation*}
    \min_z F(z)^\top = (-\frac{1}{2}z^2, -z)
    \hspace{2em}
    \text{s.t. }h(z) =
    \left\{
        \begin{array}{cl}
        0 & \text{if } -1 \le z \le 1 \\
        (|z|-1)^2 = 0 & \text{otherwise,}
        \end{array}
    \right.
\end{equation*}
Obviously, $\tilde{z}=1$ is a Pareto stationary point. However, since $\nabla F(\tilde{z})\alpha = -1$ for any $\alpha\in\Delta_2^+$, and $\nabla h(\tilde{z})=0$, we know that $\texttt{PS}(\tilde{z},v,\alpha)=-1\neq0$.

\textit{Example 2.} Although the previous example is carefully constructed, one might wonder whether the failure arises from the lack of second-order differentiability of $h(z)$? To address that, this example employs more general functions to refute this hypothesis, thereby indicating the intrinsic irrationality of \texttt{PS} system itself. To this end, we consider a $3$-dimensional bi-objective problem with $2$ constraints as follows:
\begin{equation*}
    \min_z F(z)^\top = (z_1 + z_2, z_1 - z_2)
    \hspace{2em}
    \text{s.t. }h(z) =
    \left\{
        \begin{array}{l}
        z_1^2 + z_3^2 -1 = 0, \\
        z_3 - 1 = 0.
        \end{array}
    \right.
\end{equation*}
In this example, the feasible region is given by $\mathcal{D}=\{0\}\times\mathbb{R}\times\{1\}$. We consider $\tilde{z}=(0, 0, 1)^\top \in \mathcal{D}$. Obviously, $\tilde{z}$ is Pareto stationary. However, $(\nabla F(\tilde{z})\alpha)_1 = 1, \forall \alpha\in\Delta_2^+$ and $(\nabla h(\tilde{z})v)_1 = 0, \forall v\in\mathbb{R}^2$ implies that $\texttt{PS}(\tilde{z},v,\alpha)\neq 0$ for any $\alpha\in\Delta_2^+$, and $v\in\mathbb{R}^2$. This, again, contradicts the idea that \texttt{PS} characterizes Pareto stationarity.

These counterexamples motivate us to consider adopting constraint qualification conditions in
\Cref{sec:alg}, which are not only important for characterizing the Pareto stationarity, but also critical for avoiding corner cases caused by a degenerate Jacobian matrix.

\subsection{Proof of \Cref{thm:PS-LWPO}}

\begin{proof}
    \textbf{1)} 
    We first assume that $\tilde{z}$ is locally weakly Pareto optimal in $\mathcal{D}(\tilde{z},\delta)$ for some positive $\delta$. Suppose it's not Pareto stationary, then, there exists some feasible direction $d\in\mathbb{R}^k$, such that $\nabla f_s(\tilde{z})^\top d < 0, \forall s\in[S]$. We can select a positive and sufficiently small $\epsilon$, such that $\hat{z}=\tilde{z}+\epsilon d \in \mathcal{D}(\tilde{z},\delta)$, where $\epsilon>0$. Then, we have:
    \begin{equation*}
        \begin{aligned}
            & f_s(\hat{z}) = f_s(\tilde{z}) + \epsilon \nabla f_s(\tilde{z})^\top d + o(\epsilon), \forall s\in[S], \\
            \implies & \frac{f_s(\hat{z}) - f_s(\tilde{z})}{\epsilon} = \nabla f_s(\tilde{z})^\top d + \frac{o(\epsilon)}{\epsilon} < 0, \forall s\in[S],
        \end{aligned}
    \end{equation*}
    where $o(\cdot)$ denotes the higher-order terms. Thus, $f_s(\hat{z}) < f_s(\tilde{z}), \forall s\in[S]$. This is contradicted with the definition of locally weak Pareto optimality. Hence, $\tilde{z}$ is Pareto stationary.

    \textbf{2)}
    On the other hand, we assume $\tilde{z}$ is Pareto stationary. Suppose it's not locally weakly Pareto optimal in $\mathcal{D}(\tilde{z},\delta)$ for all positive $\delta$. Then, for any $\delta>0$, there exists some $\hat{z}=\tilde{z}+\epsilon d \in \mathcal{D}(\tilde{z},\delta)$, where $\epsilon>0, d\neq \bf{0}$ is a feasible direction, such that $f_s(\hat{z}) < f_s(\tilde{z}), \forall s\in[S]$. Then, we have:
    \begin{equation*}
        f_s(\hat{z}) = f_s(\tilde{z}) + \epsilon \nabla f_s(\tilde{z})^\top d + o(\epsilon) < f_s(\tilde{z}), \forall s\in[S] \hspace{0.5em}\implies\hspace{0.5em}\nabla f_s(\tilde{z})^\top d \le 0, \forall s\in[S].
    \end{equation*}
    Since $\tilde{z}$ satisfies that, for any $s\in[S]$, there exists $\mu_s$ such that 1) $\nabla f_s(\tilde{z}) + \sum_{i=1}^q\mu_{s,i}\nabla h_i(\tilde{z}) = 0$, and 2) $\sum_{i=1}^q\mu_{s,i}\nabla h_i(\tilde{z})^\top d \neq 0$, we have:
    \begin{align*}
        \nabla f_s(\tilde{z})^\top d = - \sum_{i=1}^q\mu_{s,i}\nabla h_i(\tilde{z})^\top d \neq 0,
    \end{align*}
    for any $s\in[S]$. Therefore, $\nabla f_s(\tilde{z})^\top d < 0, \forall s\in[S]$. This contradicts with the Pareto stationarity of $\tilde{z}$. Hence, $\tilde{z}$ is locally weakly Pareto optimal.
\end{proof}

\subsection{Proof of \Cref{thm:WC-ECMO}}

\begin{proof}
    \textbf{1)} We assume that $(\rho, \tilde{z})$ is a local solution of \ref{eq:WC} for some $\lambda\in\Delta_S^{++}$, then there exists some $\delta>0$, such that $(\rho, \tilde{z})$ is the minimizer of \ref{eq:WC} in $\mathcal{D}(\tilde{z},\delta)$. Therefore, according to the definition of $\ell_\infty$-norm operation in WC, we have:
    \begin{equation*}
        \max_s \lambda_sf_s(\tilde{z}) \le \max_s \lambda_sf_s(z), \forall z\in\mathcal{D}(\tilde{z},\delta).
    \end{equation*}
    Suppose $\tilde{z}$ is not a locally weak Pareto optimal solution of \ref{eq:ECMO}. Then, there exists some $\hat{z}\in\mathcal{D}(\tilde{z},\delta)$, such that $f_s(\hat{z}) < f_s(\tilde{z}), \forall s\in[S]$. Therefore, we have $\lambda_sf_s(\hat{z}) < \lambda_sf_s(\tilde{z}), \forall s\in[S]$, thus:
    \begin{equation*}
        \max_s \lambda_sf_s(\hat{z}) < \max_s \lambda_sf_s(\tilde{z}),
    \end{equation*}
    which leads to contradiction. Thus, $\tilde{z}$ is a locally weak Pareto optimal solution of \ref{eq:ECMO}.

    \textbf{2)} Conversely, we assume that $\tilde{z}$ is a locally weak Pareto optimal solution of \ref{eq:ECMO}, then there exists some $\delta>0$, such that $f_s(\tilde{z}) < f_s(z), \forall s\in[S],z\in\mathcal{D}(\tilde{z},\delta)$. We set $\lambda$ as follows:
    \begin{equation*}
        \lambda_s = \frac{(f_s(\tilde{z}))^{-1}}{\sum_{s'}(f_{s'}(\tilde{z}))^{-1}},
    \end{equation*}
    which implies $\lambda\in\Delta_S^{++}$ and:
    \begin{equation*}
        \|\lambda\odot F(\tilde{z})\|_\infty = \frac{1}{\sum_{s'}(f_{s'}(\tilde{z}))^{-1}} = \lambda_sf_s(\tilde{z}), \forall s\in[S].
    \end{equation*}
    Suppose $(\rho, \tilde{z})$ is not a local solution of \ref{eq:WC} for any $\rho\in\mathbb{R}$. Then, there exists some $\hat{z}\in\mathcal{D}(\tilde{z},\delta)$, satisfying:
    \begin{equation*}
        \max_{s} \lambda_{s}f_{s}(\hat{z}) < \max_s \lambda_sf_s(\tilde{z})= \frac{1}{\sum_{s'}(f_{s'}(\tilde{z}))^{-1}}.
    \end{equation*}
    Therefore, $\lambda_sf_s(\hat{z}) < \lambda_sf_s(\tilde{z}), \forall s\in[S]$. Since $\lambda$ is positive, we know $f_s(\hat{z}) < f_s(\tilde{z}), \forall s\in[S]$, which contradicts with the assumption. Thus, $\tilde{z}$ is a local solution of \ref{eq:WC}.
\end{proof}

\subsection{KKT Condition of \ref{eq:WC}}\label{sec:app:KKT-condition}

For completeness, we state the KKT condition of \ref{eq:WC} in this section. To begin with, the Lagrangian of \ref{eq:WC} is:
\vspace{-0.5em}
\begin{equation*}
    \mathbb{L}(\rho,z,\omega,\nu,\lambda) = \rho + \sum_{s=1}^S\omega_s (\lambda_sf_s(z)-\rho) + \sum_{i=1}^q\nu_i h_i(z),
\end{equation*}
where $\omega=(\omega_1, \dotsc, \omega_S)^\top$, and $\nu=(\nu_1, \dotsc, \nu_q)^\top$ are the multipliers associated with inequality and equality constraints in \ref{eq:WC}, respectively. For a point $(\rho,z)$, its KKT condition contains four parts: stationarity, primal feasibility, dual feasibility, and complementary slackness. The stationary condition requires the gradients of $\rho$ and $z$ vanish:
\begin{equation*}
    1 - \sum_{s=1}^S \omega_s = 0, 
    \hspace{3em}
    \sum_{s=1}^S \omega_s \lambda_s \nabla f_s(z) + \sum_{i=1}^q \nu_i \nabla h_i(z) = 0.
\end{equation*}
The primal and dual feasibility requires 1) the constraints are satisfied, and 2) the multiplier associated with inequality is positive:
\begin{equation*}
    \begin{aligned}
        h_i(z)=0, i=1,\dotsc, m,
        \hspace{1em}
        \lambda_sf_s(z)-\rho \le 0, s=1, \dotsc, S,
        \hspace{1em}
        \omega_s \ge 0, s=1, \dotsc, S.
    \end{aligned}
\end{equation*}
In the end, the complementary slackness condition is:
\begin{equation*}
    \omega_s (\lambda_sf_s(z)-\rho)=0, s=1, \dotsc, S.
\end{equation*}

\section{Additional Results and Proofs of \Cref{sec:alg}}\label{sec:app-alg}

In this section, we first propose and analyze a linear scalarization algorithm for convex ECMO problems. After that, we prove \Cref{thm:WC-Penalty} stated in \Cref{sec:alg}. Finally, we provide the stochastic version of WC-Penalty algorithm and the theoretical analysis on its finite-time convergence rate.

\subsection{Linear Scalarization Method for Convex ECMO Problems}

For the special case of convex ECMO problems, we also propose a linear scalarization (LS)-based algorithm along with its finite-time convergence guarantee.
In this subsection, we first introduce Linear Scalarization method, and propose a simple algorithm for convex ECMO problems along with its performance guarantee. We then prove this theoretical result, and clarify how it relates to the KKT system and ECMO problems.

\textbf{Linear Scalarization (LS)}, or weighted sum method, is one of the most straightforward MOO methods. Intuitively, we assign a weight to each of the objective function, and minimize their weighted sum, i.e., solve a corresponding single-objective problem. For ECMO problems, LS can be represented as:
\begin{equation*}
    \begin{aligned}
        & \min_{z\in\mathbb{R}^{k}} \sum_{s=1}^S \lambda_sf_s(z) \\
        & \text{s.t. } h_i(z) = 0, i=1,\dotsc,q,
    \end{aligned}
\end{equation*}
where $\lambda\in\Delta_S^+$ is the given preference vector. While LS is extremely simple, it cannot, in general, recover the entire weak Pareto front unless all objective functions are convex and the feasible region is a convex set \citep{ehrgott2005multicriteria}. This suggests that, LS is not sufficient to generally solve the ECMO problems, and alternative techniques are needed to handle such general (nonconvex) cases.

Although LS method can only recover the entire weak Pareto front in some special cases, we can still apply this simple method to solve the convex ECMO problems. Specifically, in this subsection, we assume that upper level objective functions $f_1(z), \dotsc, f_S(z)$ are convex functions, and lower level constraints $h_1(z), \dotsc, h_q(z)$ are affine functions. Then, LS method transforms \ref{eq:ECMO} into a corresponding single-objective convex problem as follows:
\begin{equation}\label{eq:LS-convex}
    \begin{aligned}
        & \min_{z\in\mathbb{R}^{k}} \mathcal{L}_\lambda(z) = \sum_{s=1}^S \lambda_sf_s(z) \\
        & \text{s.t. } Az = b,
    \end{aligned}
\end{equation}
where $\lambda\in\Delta_S^+$, $A\in\mathbb{R}^{q\times k}$, $b\in\mathbb{R}^q$ and $\text{rank}(A)=m$. We denote the feasible set as $\mathcal{D}:=\{z\in\mathbb{R}^k : Az=b\}$, which is a closed and convex set since it's the intersection of $2m$ half-spaces.

\begin{algorithm}[t]
    \caption{LS Algorithm}\label{alg:LS}
    \begin{algorithmic}[1]
        \STATE \textbf{Input:} Iteration rounds $T$, initialization $z_0\in\mathcal{D}$, and step-size $\eta$.
        \FOR{\(t=0,1,\dots, T-1\)}
            \STATE Compute $z_t^+ = z_t - \eta \nabla \mathcal{L}_\lambda(z_t)$.
            \STATE Update $z_{t+1} = \mathcal{P}_{\mathcal{D}}(z_t^+)$.
        \ENDFOR
    \end{algorithmic}
\end{algorithm}

Generally speaking, \Cref{alg:LS} follows a simple projected gradient descent paradigm, where the convexity of the feasible region allows well-defined projection operation. Specifically, we denote the projection as $\mathcal{P}_{\mathcal{E}}(z) = \arg\min_{z'\in\mathcal{E}} \|z-z'\|_2^2$, where $\mathcal{E}$ can be any convex set. In each step $t$, we compute the gradient of $\mathcal{L}_\lambda(z_t)$, update $z_{t}^+$ according to gradient descent method, and project the obtained $z_{t}^+$ back to the feasible region $\mathcal{D}$ to get $z_{t+1}$. As shown later, this extremely simple method is effective in addressing the convex ECMO problem.

Now, we are ready to state the theoretical results for \Cref{alg:LS}.

\begin{theorem}[Finite-Time Convergence Rate of \Cref{alg:LS}]\label{thm:LS}
    Suppose that $f_1(z), \dotsc, f_S(z)$ are convex functions, and $h_1(z), \dotsc, h_q(z)$ are affine functions. Under \Cref{ass:smooth}, for any preference $\lambda\in\Delta_S^{+}$, selecting $\eta = \frac{1}{L}$, \Cref{alg:LS} has the following convergence result:
    \begin{equation*}
        \mathcal{L}_\lambda(z_T) - \mathcal{L}_\lambda(z^*) \le \frac{L}{2T} \|z_0 - z^*\|_2^2,
    \end{equation*}
    where $z^*$ is the solution (global minimizer) of \Cref{eq:LS-convex}.
\end{theorem}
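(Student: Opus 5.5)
This is the standard $O(1/T)$ analysis of projected gradient descent for an $L$-smooth convex function over a closed convex set. Here the function is $\mathcal{L}_\lambda$ and the set is the affine set $\mathcal{D}$. The plan has four steps.

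\textbf{Step 1: smoothness of $\mathcal{L}_\lambda$.} Since $\lambda\in\Delta_S^+$ and each $\nabla f_s$ is $L$-Lipschitz by \Cref{ass:smooth}, we get
\[
\|\nabla\mathcal{L}_\lambda(z_1)-\nabla\mathcal{L}_\lambda(z_2)\|\le \sum_s\lambda_s L\|z_1-z_2\| = L\|z_1-z_2\| .
\]
So $\mathcal{L}_\lambda$ is convex and $L$-smooth, and $\mathcal{D}$ is nonempty, closed and convex.

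\textbf{Step 2: per-step descent inequality.} With $\eta=1/L$, write $z_{t+1}=\mathcal{P}_{\mathcal{D}}(z_t-\eta\nabla\mathcal{L}_\lambda(z_t))$. Define the gradient mapping $G_t := L(z_t-z_{t+1})$. The variational characterization of the projection gives, for all $w\in\mathcal{D}$,
\[
\langle z_t-\eta\nabla\mathcal{L}_\lambda(z_t)-z_{t+1},\,w-z_{t+1}\rangle\le 0 .
\]
We combine three ingredients:
\begin{itemize}
\item the descent lemma: $\mathcal{L}_\lambda(z_{t+1})\le \mathcal{L}_\lambda(z_t)+\langle\nabla\mathcal{L}_\lambda(z_t),z_{t+1}-z_t\rangle+\frac{L}{2}\|z_{t+1}-z_t\|^2$;
\item convexity at $z^*$: $\mathcal{L}_\lambda(z_t)\le \mathcal{L}_\lambda(z^*)+\langle\nabla\mathcal{L}_\lambda(z_t),z_t-z^*\rangle$;
\item the projection inequality with $w=z^*$.
\end{itemize}
Together they give the standard bound
\[
\mathcal{L}_\lambda(z_{t+1})-\mathcal{L}_\lambda(z^*)\le \frac{L}{2}\big(\|z_t-z^*\|^2-\|z_{t+1}-z^*\|^2\big).
\]
The algebra uses the identity $2\langle a,b\rangle=\|a\|^2+\|b\|^2-\|a-b\|^2$ applied with $a=z_t-z_{t+1}$ and $b=z_{t+1}-z^*$.

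\textbf{Step 3: monotonicity.} Taking $w=z_t\in\mathcal{D}$ in the projection inequality gives
\[
\langle\nabla\mathcal{L}_\lambda(z_t),z_{t+1}-z_t\rangle\le -L\|z_{t+1}-z_t\|^2 .
\]
Substituting into the descent lemma yields
\[
\mathcal{L}_\lambda(z_{t+1})\le\mathcal{L}_\lambda(z_t)-\frac{L}{2}\|z_{t+1}-z_t\|^2 ,
\]
so the objective values are nonincreasing.

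\textbf{Step 4: telescoping.} Summing the Step 2 bound over $t=0,\dots,T-1$ telescopes the right-hand side to at most $\frac{L}{2}\|z_0-z^*\|^2$. By Step 3, the left-hand sum is at least $T(\mathcal{L}_\lambda(z_T)-\mathcal{L}_\lambda(z^*))$. Dividing by $T$ gives the claimed bound.

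\textbf{Main obstacle.} The only delicate point is Step 2: the three inequalities must be combined so that the gradient term cancels exactly and only the distance difference remains. The affine structure of the constraints guarantees the projection is well-defined, and $z_0\in\mathcal{D}$ ensures every iterate stays feasible. I would also remark that the hypothesis $\mathrm{rank}(A)=m$ (presumably meaning $q$) is not needed, except to guarantee $\mathcal{D}\neq\emptyset$ and the existence of $z^*$, which we assume.
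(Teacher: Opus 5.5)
Your proposal is correct and is essentially the paper's proof. The paper combines the descent lemma, convexity at $z^*$, and the projection inequality $\|z_{t+1}-z^*\|^2\le\|z_t^+-z^*\|^2-\|z_{t+1}-z_t^+\|^2$ with $z_t^+=z_t-\frac{1}{L}\nabla\mathcal{L}_\lambda(z_t)$, which is equivalent to your variational inequality at $w=z^*$, and then telescopes using monotonicity. There is one small sign slip in Step 2: with $a=z_t-z_{t+1}$ and $b=z_{t+1}-z^*$, the identity you need is $2\langle a,b\rangle=\|a+b\|^2-\|a\|^2-\|b\|^2$.

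Your Step 3 is more careful than the paper's monotonicity argument. The paper substitutes $z_{t+1}-z_t=-\frac{1}{L}\nabla\mathcal{L}_\lambda(z_t)$ as if no projection occurred, whereas taking $w=z_t$ in the projection inequality handles the projection correctly.
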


\begin{proof}
    On the one hand, since $f_1(z), \dotsc, f_S(z)$ are convex and $L$-smooth, so is $\mathcal{L}_\lambda$ for any $\lambda\in\Delta_S^+$. According to the descent lemma, we have:
    \begin{equation}\label{eq:pf-1}
        \mathcal{L}_\lambda(z_{t+1}) \le \mathcal{L}_\lambda(z_{t}) + \nabla\mathcal{L}_\lambda(z_{t})^\top(z_{t+1}-z_t) + \frac{L}{2}\|z_{t+1}-z_t\|_2^2,
    \end{equation}
    where $t=0,1,\dots, T-1$.

    On the other hand, due to the convexity of $\mathcal{D}$, we have the following result:
    \begin{equation}\label{eq:pf-2}
        \begin{aligned}
            & \|z_{t+1}-z^*\|_2^2 \\
            \le & \|z_{t}^+-z^*\|_2^2 - \|z_{t+1}-z_t^+\|_2^2 \\
            = & \|z_t - \eta \nabla \mathcal{L}_\lambda(z_t)-z^*\|_2^2 - \|z_{t+1}-z_t + \eta \nabla \mathcal{L}_\lambda(z_t)\|_2^2 \\
            = & \|z_{t}-z^*\|_2^2 -2\eta\nabla \mathcal{L}_\lambda(z_t)^\top(z_{t}-z^*) - \|z_{t+1}-z_t\|_2^2 -2\eta\nabla \mathcal{L}_\lambda(z_t)^\top(z_{t+1}-z_t) \\
            = & \|z_{t}-z^*\|_2^2 -\frac{2}{L}\nabla \mathcal{L}_\lambda(z_t)^\top(z_{t}-z^*) - \|z_{t+1}-z_t\|_2^2 -\frac{2}{L}\nabla \mathcal{L}_\lambda(z_t)^\top(z_{t+1}-z_t),
        \end{aligned}
    \end{equation}
    where the first inequality is derived from the convexity of $\mathcal{D}$:
    \begin{equation*}
        \begin{aligned}
            \|z_{t}^+-z^*\|_2^2 & = \|z_{t}^+ - z_{t+1} \|_2^2 + \|z_{t+1} - z^*\|_2^2 + 2\langle z_{t}^+ - z_{t+1}, z_{t+1} - z^*\rangle \\
            & \ge \|z_{t}^+ - z_{t+1} \|_2^2 + \|z_{t+1} - z^*\|_2^2.
        \end{aligned}
    \end{equation*}

    Combining \Cref{eq:pf-1,eq:pf-2}, we have the following result given the convexity of $\mathcal{L}_{\lambda}$:
    \begin{equation*}
        \begin{aligned}
            \mathcal{L}_\lambda(z_{t+1}) & \le \mathcal{L}_\lambda(z_{t}) + \frac{L}{2}\left(\|z_{t}-z^*\|_2^2 - \|z_{t+1}-z^*\|_2^2\right) - \nabla \mathcal{L}_\lambda(z_t)^\top(z_{t}-z^*) \\
            & \le \mathcal{L}_\lambda(z^*) + \frac{L}{2}\left(\|z_{t}-z^*\|_2^2 - \|z_{t+1}-z^*\|_2^2\right),
        \end{aligned}
    \end{equation*}
    which implies
    \begin{equation*}
        \begin{aligned}
            \mathcal{L}_\lambda(z_T) - \mathcal{L}_\lambda(z^*) & \le \frac{1}{T}\sum_{t=0}^{T-1} \left(\mathcal{L}_\lambda(z_{t+1}) - \mathcal{L}_\lambda(z^*)\right) \\
            & \le \frac{L}{2T}\left(\|z_{0}-z^*\|_2^2 - \|z_{T}-z^*\|_2^2\right) \\
            & \le \frac{L}{2T}\|z_{0}-z^*\|_2^2,
        \end{aligned}
    \end{equation*}
    where the first inequality is due to the decreasing property, i.e., for $t=0, 1, \dotsc, T-1$, we have:
    \begin{equation*}
        \begin{aligned}
            \mathcal{L}_\lambda(z_{t+1})
            & \le \mathcal{L}_\lambda(z_{t}) + \nabla\mathcal{L}_\lambda(z_{t})^\top(z_{t+1}-z_t) + \frac{L}{2}\|z_{t+1}-z_t\|_2^2 \\
            & = \mathcal{L}_\lambda(z_{t}) - \frac{1}{L}\|\nabla\mathcal{L}_\lambda(z_{t})\|_2^2 + \frac{1}{2L}\|\nabla\mathcal{L}_\lambda(z_{t})\|_2^2 \\
            & \le \mathcal{L}_\lambda(z_{t}).
        \end{aligned}
    \end{equation*}
\end{proof}

\Cref{thm:LS} illustrates that \Cref{alg:LS} has a convergence rate of $\mathcal{O}(T^{-1})$ to global optima for \Cref{eq:LS-convex}. This reveals that we can recover the whole Pareto front of the original ECMO problem by traversing $\lambda$ over the simplex $\Delta_S^+$. Next, we show that this result can also be interpreted through the lens of the KKT system, providing consistency with the analysis of \Cref{alg:WC-Penalty}.

\Cref{thm:LS} demonstrates that the sequence generated by \Cref{alg:LS} converges to the global optimum of \Cref{eq:LS-convex}. Since it's a convex problem without inequality constraints, we know that its KKT condition holds at some feasible point $\tilde{z}$ if and only if $\tilde{z}$ is the solution of \Cref{eq:LS-convex} (and the solution is the global minimizer due to the convexity of the problem). Therefore, we can establish the KKT system $\mathcal{K}(z, v, \lambda)$ as follows:
\begin{equation*}
    \mathcal{K}(z,v,\lambda) =
        \begin{pmatrix}
            \nabla F(z)\lambda + \nabla h(z)v \\ h(z)
        \end{pmatrix}_{(k+m)\times 1} = 
        \begin{pmatrix}
            \nabla F(z)\lambda + A^\top v \\ Az-b
        \end{pmatrix}_{(k+m)\times 1},
\end{equation*}
where $\lambda\in\Delta_S^+$, and $v\in\mathbb{R}^q$. While this KKT system for \Cref{eq:LS-convex} is quite different from the one defined in \Cref{def:KKT-system}, it completely characterizes the optimality of any point $z$. In other words, given some $\lambda\in\Delta_S^+$, $\tilde{z}\in\mathcal{D}$ is the optimal point for \Cref{eq:LS-convex} if and only if $\mathcal{K}(\tilde{z}, v, \lambda)=0$ for some $v\in\mathbb{R}^q$.

The following lemma ensures the optimality characterized by the KKT system can be adopted for evaluating the optimality of ECMO problems:
\begin{lemma}[\cite{ehrgott2005multicriteria}]\label{lemma:LS}
    Suppose $f_s(z), \forall s\in[S]$ are convex, and nonempty set $\mathcal{D}$ is convex and closed. Then, $\{z\in\mathcal{D} : z = \arg\min_{z'}\mathcal{L}_\lambda(z'), \forall \lambda\in\Delta_S^+\}$ is exactly the weak Pareto front of \ref{eq:ECMO}.
\end{lemma}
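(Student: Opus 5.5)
Lemma~\ref{lemma:LS} is the classical weighted-sum characterization. The plan is to prove its two inclusions separately, with $\mathcal{L}_\lambda(z)=\sum_s\lambda_s f_s(z)$ and minimization taken over the convex, closed, nonempty set $\mathcal{D}$. Here we read the statement as: the set of points $z\in\mathcal{D}$ that minimize $\mathcal{L}_\lambda$ over $\mathcal{D}$ for \emph{some} $\lambda\in\Delta_S^+$ equals the weak Pareto set $X_{\text{WP}}$. Its image under $F$ is then the weak Pareto front.

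\textbf{Step 1 (minimizers are weakly Pareto optimal).} Let $\tilde z\in\arg\min_{z\in\mathcal{D}}\mathcal{L}_\lambda(z)$ with $\lambda\in\Delta_S^+$, and suppose some feasible $\hat z$ satisfies $f_s(\hat z)<f_s(\tilde z)$ for all $s$. Since $\lambda\ge 0$ and $\sum_s\lambda_s=1$, at least one weight is positive. Hence $\mathcal{L}_\lambda(\hat z)<\mathcal{L}_\lambda(\tilde z)$, which contradicts minimality. This direction needs no convexity.

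\textbf{Step 2 (weakly Pareto optimal points are minimizers).} Fix $\tilde z\in X_{\text{WP}}$. Consider the set
\[
\mathcal{A}=\{F(z)-F(\tilde z)+r : z\in\mathcal{D},\ r\in\mathbb{R}^S_{+}\}.
\]
Convexity of each $f_s$ and of $\mathcal{D}$ makes $\mathcal{A}$ convex: for a convex combination of points $z_1,z_2$, componentwise convexity gives $F(\theta z_1+(1-\theta)z_2)\le \theta F(z_1)+(1-\theta)F(z_2)$, and the slack $r$ absorbs the gap. Weak Pareto optimality of $\tilde z$ says exactly that $\mathcal{A}$ is disjoint from the open convex cone $-\mathrm{int}\,\mathbb{R}^S_+$. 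By the separating hyperplane theorem there is a nonzero $\mu$ with $\mu^\top a\ge \mu^\top c$ for all $a\in\mathcal{A}$ and all $c\in -\mathrm{int}\,\mathbb{R}^S_+$.

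Letting $c\to 0$ gives $\mu^\top a\ge 0$ on $\mathcal{A}$. Sending components of $r$ to $+\infty$ forces $\mu\ge 0$. We then normalize $\lambda=\mu/\sum_s\mu_s\in\Delta_S^+$. Taking $r=0$ yields $\mathcal{L}_\lambda(z)\ge\mathcal{L}_\lambda(\tilde z)$ for every $z\in\mathcal{D}$, so $\tilde z$ is a minimizer for this $\lambda$.

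\textbf{Main obstacle.} The separation in Step 2 is the only delicate step. One must separate a convex set from an open convex cone, rather than from a point. One must also check that the resulting normal vector is nonnegative and nonzero, since this is what allows normalizing it into the simplex. That is why we add the free slack $r\in\mathbb{R}^S_+$ to $\mathcal{A}$: it both makes $\mathcal{A}$ convex and forces the sign of $\mu$.

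Closedness of $\mathcal{D}$ is not needed for the separation. It serves only to ensure the relevant minimizers are well posed. This also explains why $\lambda$ may have zero entries: Step 2 produces $\lambda\in\Delta_S^+$ rather than $\Delta_S^{++}$, and this matches the weak, rather than strict, Pareto notion.
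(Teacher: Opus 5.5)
Your proof is correct. Your reading of the loosely worded statement is also the right one: $\mathcal{L}_\lambda$ is minimized over $\mathcal{D}$, the quantifier is ``for some'' rather than ``for all'' $\lambda\in\Delta_S^+$, and the set in question is the weak Pareto set $X_{\text{WP}}$ rather than its image. The paper gives no proof of its own and defers to \cite{ehrgott2005multicriteria}, whose standard argument is essentially yours: separate the convex set $F(\mathcal{D})+\mathbb{R}^S_+$ (shifted by $-F(\tilde z)$) from the open negative orthant, normalize the nonnegative nonzero normal into the simplex, and obtain the converse directly from $\lambda\ge 0$, $\lambda\neq 0$.
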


Now, we are ready to bridge \Cref{thm:LS} with ECMO problems. According to \Cref{alg:LS}, $z_t$ is feasible for any $t$. Hence, for any $\lambda\in\Delta_S^+$, we have:
\begin{equation*}
    \min_v\|\mathcal{K}(z_T, v, \lambda)\|_2^2 \le \|\mathcal{K}(z_T, 0, \lambda)\|_2^2 = \|\mathcal{L}_\lambda(z_T)\|_2^2.
\end{equation*}
By \Cref{thm:LS}, we have:
\begin{equation*}
    \|\mathcal{L}_\lambda(z_T)\|_2^2 \le 2L(\mathcal{L}_\lambda(z_T) - \mathcal{L}_\lambda(z^*)) \le \frac{L^2}{T} \|z_0 - z^*\|_2^2.
\end{equation*}
\begin{wrapfigure}{r}{0.25\textwidth}
    \vspace{-2.5em}
    \centering
    \includegraphics[width=0.25\textwidth]{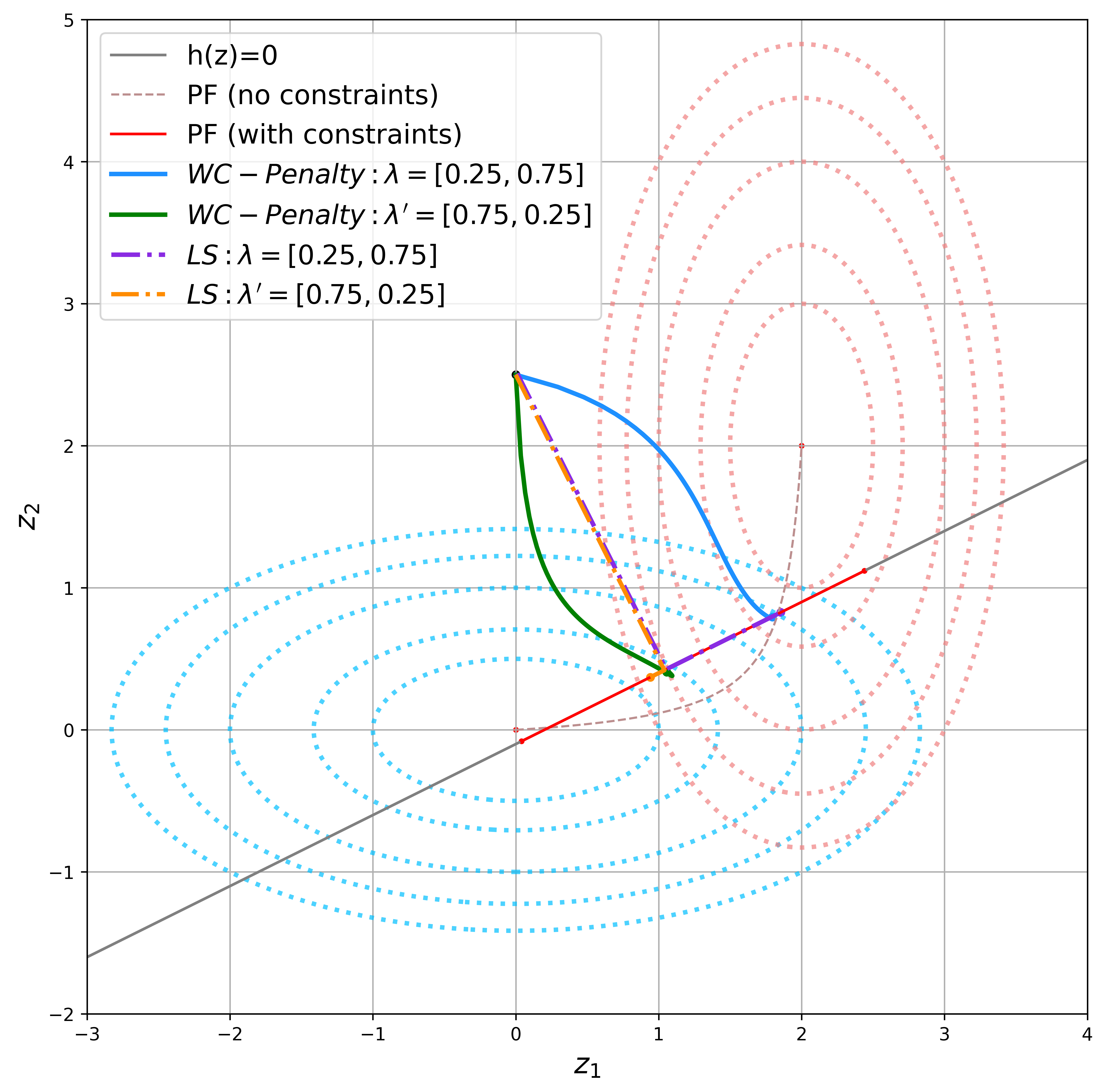}
    \caption{A toy example.}
    \label{fig:PF}
    \vspace{-4.5em}
\end{wrapfigure}
Therefore, we obtain $\min_v\|\mathcal{K}(z_T, v, \lambda)\|_2^2 \le \frac{L^2}{T} \|z_0 - z^*\|_2^2$. According to the argument about KKT system and \Cref{lemma:LS}, we know that \Cref{alg:LS} converges to weakly Pareto optimal solutions at a rate of $\mathcal{O}(T^{-1})$. In addition, we can also traverse $\lambda$ over $\Delta_S^+$ to let \Cref{alg:LS} reconstruct the entire weak Pareto front.

To give a more concrete example, we provide a concrete example to show the performance of our proposed algorithms. We consider a bi-objective problem, with objective functions $f_1(z)=z_1^2 + 4 z_2^2$ and $f_2(z)=4(z_1-2)^2 + (z_2-2)^2$, and constraint $h(z) = 0.5z_1 -z_2 - 0.1=0$. Besides, we consider two preference vectors $\lambda = (0.25, 0.75)^\top$ and $\lambda' = (0.75, 0.25)^\top$. \Cref{fig:PF} demonstrates the convergence performances of \Cref{alg:WC-Penalty,alg:LS}.
Specifically, the \textit{deep-sky-blue} and \textit{light-coral} dashed curves are contour plots of two objective functions $f_1(z)$ and $f_2(z)$. The \textit{gray} line is the equality constraint $h(z)=0$, and the \textit{red} part on it is the Pareto front. The \textit{rosy-brown} curve is the Pareto front when no constraints are included. The \textit{black} point is the initial point for all sequences. Four curves in \textit{blue}, \textit{green}, \textit{purple}, and \textit{orange} are the sequences of proposed algorithms under different preference vectors.

We can clearly observe in \Cref{fig:PF} that all of the four sequences converge to the Pareto front. Moreover, under the guidance of different preference vectors, the convergence points show distinct directional tendencies along the front.

\subsection{Proof of \Cref{thm:WC-Penalty}}

\begin{proof}
    We prove \Cref{thm:WC-Penalty} in three steps as follows. To begin with, we first denote the update direction $d_t=(d_{t,\rho}^\top, d_{t,z}^\top, d_{t,\delta}^\top)^\top$ by $\theta_{t+1} = \theta_t - \eta d_t, \forall t=0, \dotsc, T-1$.
    
    \textbf{Step A: General Control.}

    According to \Cref{ass:smooth} and the descent lemma, for $t=0, \dotsc, T-1$, we have:
    \begin{equation*}
        \begin{aligned}
            P(\theta_{t+1}) & \le P(\theta_t) + \langle \nabla P(\theta_t), \theta_{t+1} - \theta_t\rangle + \frac{L_P}{2}\|\theta_{t+1} - \theta_t\|^2 \\
            & = P(\theta_t) - \eta \langle \nabla P(\theta_t), d_t\rangle + \frac{L_P\eta^2}{2}\|d_t\|^2.
        \end{aligned}
    \end{equation*}
    
    The property of projection $\langle \mathcal{P}_{\mathcal{C}}(\theta_1) - \mathcal{P}_{\mathcal{C}}(\theta_2), \theta_1 - \theta_2\rangle \ge \|\mathcal{P}_{\mathcal{C}}(\theta_1) - \mathcal{P}_{\mathcal{C}}(\theta_2)\|^2$ implies:
    \begin{equation*}
        \begin{aligned}
            & \langle \theta_{t+1} - \theta_t, (\theta_t - \eta \nabla P(\theta_t)) - \theta_t\rangle \ge \|\theta_{t+1} - \theta_t\|^2, \\
            \implies & \langle -\eta d_t, - \eta \nabla P(\theta_t)\rangle \ge \eta^2\|d_t\|^2, \\
            \implies & \|d_t\|^2 \le \langle \nabla P(\theta_t), d_t\rangle.
        \end{aligned}
    \end{equation*}
    
    Therefore, we have:
    \begin{equation*}
        \begin{aligned}
            P(\theta_{t+1}) & \le P(\theta_t) + \langle \nabla P(\theta_t), \theta_{t+1} - \theta_t\rangle + \frac{L_P}{2}\|\theta_{t+1} - \theta_t\|^2 \\
            & = P(\theta_t) - \eta \|d_t\|^2 + \frac{L_P\eta^2}{2}\|d_t\|^2,
        \end{aligned}
    \end{equation*}
    which implies:
    \begin{equation*}
        \eta \left(1 - \frac{L_P\eta}{2}\right) \|d_t\|^2 \le P(\theta_t) -P(\theta_{t+1}).
    \end{equation*}
    
    Telescoping from $t=0$ to $T-1$, we obtain:
    \begin{equation}\label{eq:pf-1*}
        \eta \left(1 - \frac{L_P\eta}{2}\right)\cdot\frac{1}{T}\sum_{t=0}^{T-1} \|d_t\|^2 \le \frac{1}{T} \big(P(\theta_0) -P(\theta_{T})\big).
    \end{equation}
    
    \textbf{Step B: KKT system.} 
    
    In this step, we control each term in the KKT system defined in \Cref{def:KKT-system}.
    
    \textbf{Step B.1: Stationarity Terms.} According to \Cref{eq:pf-1*}, we know that all of $d_{t,\rho}$, $d_{t,z}$, and $d_{t,\delta}$ can be well controlled, since $\|d_t\|^2 = \| d_{t,\rho}\|^2 + \| d_{t,z}\|^2 + \| d_{t,\delta} \|^2$. We first note the expression of $d_{t,z}$ as follows:
    \begin{equation*}
        d_{t,z} = u\sum_{i=1}^q h_i(z_t)\nabla h_i(z_t) + v\sum_{s=1}^S (\lambda_sf_s(z_t) + \delta_{t, s} - \rho_t)\lambda_s\nabla f_s(z_t),
    \end{equation*}
    where $\delta_{t, s}$ it the $s$-th element of $\delta_t$. Then, to select $\omega$ and $\nu$ defined in \Cref{def:KKT-system}, we set $\omega_{t,s} = v(\lambda_sf_s(z_t) + \delta_{t, s} - \rho_t)$, $\nu_{t,i} = u h_i(z_t)$ to be the dual variables at iteration $t$, where $\omega_t = (\omega_{t,1}, \dotsc, \omega_{t,S})^\top$, $\nu_t = (\nu_{t,1}, \dotsc, \nu_{t,q})^\top$. Therefore, since $\|d_{t,z}\|^2$ is controlled by $\|d_t\|^2$, the stationarity term for $z_t$ can also be well characterized. In addition, by the selection of $\omega_t$, we know:
    \begin{equation*}
        1-\sum_{s=1}^S\omega_{t,s} = 1 - \sum_{s=1}^S v(\lambda_sf_s(z_t) + \delta_{t, s} - \rho_t) = d_{t,\rho},
    \end{equation*}
    implying that the stationarity term for $\rho_t$ in \Cref{def:KKT-system} is also controlled.

    \textbf{Step B.2: Primal Feasibility Term.} We next consider the primal feasibility term, i.e., $h(z_t)$. We first introduce the following notation for convenience: $c_{t,s} = \lambda_sf_s(z_t) + \delta_{t, s} - \rho_t$. Then, according to the update direction for $z$, we have:
    \begin{equation*}
        d_{t,z} = u\nabla h(z_t) h(z_t) + v\sum_{s=1}^S c_{t,s}\lambda_s\nabla f_s(z_t),
    \end{equation*}
    which, along with \Cref{ass:regular}, implies:
    \begin{equation*}
        \begin{aligned}
            u\sigma \|h(z_t)\| \le \|u \nabla h(z_t) h(z_t)\| = \| d_{t,z} - v\sum_{s=1}^S c_{t,s}\lambda_s\nabla f_s(z_t) \| \le \| d_{t,z}\| + vM\sum_{s=1}^S |c_{t,s}|,
        \end{aligned}
    \end{equation*}
    where the last term can be derived as follows:
    \begin{equation*}
        v\|\sum_{s=1}^S c_{t,s}\lambda_s\nabla f_s(z_t) \| \le v\sum_{s=1}^S |c_{t,s}| \cdot \| \lambda_s\nabla f_s(z_t) \| \le vM\sum_{s=1}^S |c_{t,s}|.
    \end{equation*}
    To further control $\sum_{s=1}^S |c_{t,s}|$, we consider the following two index sets:
    \begin{equation*}
        \mathcal{I}_t = \{s\in[S]: v c_{t,s} \le \frac{\delta_{t,s}}{\eta}\},
        \hspace{0.5em}
        \mathcal{J}_t = \{s\in[S]: v c_{t,s} > \frac{\delta_{t,s}}{\eta}\}.
    \end{equation*}
    If $s\in\mathcal{I}_t$, then the corresponding component $\delta_{t,s}$ is not projected in step $t$, indicating $(d_{t,\delta})_s = \nabla_{\delta_s}P(\theta_t)$. By Cauchy–Schwarz inequality, we get:
    \begin{equation*}
        \sum_{s\in\mathcal{I}_t} |c_{t,s}| \le \sqrt{|\mathcal{I}_t|}\frac{\|d_{t,\delta}\|}{v}.
    \end{equation*}
    If $s\in\mathcal{J}_t$, then 1) the corresponding component is  projected in step $t$, 2) $c_{t,s}$ is nonnegative (since $\delta_{t,s} \ge 0$). Then, we have:
    \begin{equation*}
        \begin{aligned}
            \sum_{s\in\mathcal{J}_t} |c_{t,s}| & = \sum_{s\in\mathcal{J}_t} c_{t,s} \\
            & = \sum_{s\in[S]} c_{t,s} - \sum_{s\in\mathcal{I}_t} c_{t,s} \\
            & \le |\sum_{s\in[S]} c_{t,s}| + \sum_{s\in\mathcal{I}_t} |c_{t,s}| \\
            & \le \frac{|d_{t,\rho}| + 1}{v} + \sqrt{|\mathcal{I}_t|}\frac{\|d_{t,\delta}\|}{v} \\
            & \le \frac{2\sqrt{S}\|d_t\|+1}{v},
        \end{aligned}
    \end{equation*}
    where the second last inequality is due to the definition of $d_{t,\rho}$. Combining the aforementioned results together, we can obtain:
    \begin{equation*}
        \begin{aligned}
            & u\sigma \|h(z_t)\| \le \| d_{t,z}\| + vM\frac{3\sqrt{S}\|d_t\|+1}{v} \le (3\sqrt{S}M+1)\|d_t\|+M, \\
            \implies & \|h(z_t)\| \le \frac{3\sqrt{S}M+1}{u\sigma}\|d_t\| + \frac{M}{u\sigma}, \\
            \implies & \frac{1}{T}\sum_{t=0}^{T-1}\|h(z_t)\|^2 \le \frac{2(3\sqrt{S}M+1)^2}{u^2\sigma^2}\cdot\frac{1}{T}\sum_{t=0}^{T-1}\|d_t\|^2 + \frac{2M^2}{u^2\sigma^2}.
        \end{aligned}
    \end{equation*}
    We can select $u$ to be sufficiently large such that the coefficient of the first term $\frac{1}{T}\sum_{t=0}^{T-1}\|d_t\|^2$ is smaller than $\frac{1}{2}$.
    
    \textbf{Step B.3: Dual Feasibility and Complementary Slackness Term.} Now, we consider the last term in the KKT system: $r_{t,s} = \min\{\omega_{t,s}, \rho_t - \lambda_sf_s(z_t)\}, \forall s\in[S]$. We denote $a_{t,s} = \rho_t - \lambda_sf_s(z_t)$, $b_{t,s} = \omega_{t,s} = v(\lambda_sf_s(z_t) + \delta_{t, s} - \rho_t)$ for convenience. We note that $b_{t,s} = v(\delta_{t,s}-a_{t,s})$. Besides, we also note the following fact:
    \begin{equation*}
        \begin{aligned}
            (d_{t,\delta})_s & = \frac{1}{\eta} (\delta_{t,s} - \delta_{t+1,s}) \\
            & = \frac{1}{\eta} (\delta_{t,s} - \max\{\delta_{t,s} - \eta \nabla_{\delta_s} P(\theta_t), 0\}) \\
            & = \min \{b_{t,s}, \frac{\delta_{t,s}}{\eta}\},
        \end{aligned}
    \end{equation*}
    where the second equality is due to the projection operation. Now, we discuss $r_{t,s}^2$ in two cases.
    
    In the first case, we suppose that $s\in\mathcal{I}_t$, i.e., $b_{t,s} \le \frac{\delta_{t,s}}{\eta}$. If $b_{t,s} \le a_{t,s}$, then $r_{t,s}^2 = b_{t,s}^2$. If $b_{t,s} > a_{t,s}$, then, combining with $b_{t,s} = v(\delta_{t,s}-a_{t,s}) \ge -va_{t,s}$, we know $r_{t,s}^2 = a_{t,s}^2 \le b_{t,s}^2$. Therefore, $r_{t,s}^2$ can be controlled by $b_{t,s}^2 = |(d_{t,\delta})_s|^2$.
    
    In the second case, we suppose that $s\in\mathcal{J}_t$, i.e., $b_{t,s} > \frac{\delta_{t,s}}{\eta}$. There are two subclasses: 1) If $a_{t,s} \ge 0$, we have:
    \begin{equation*}
        r_{t,s}^2 \le \left(\frac{va_{t,s} + b_{t,s}}{v+1}\right)^2 = \left(\frac{v}{v+1}\right)^2 \delta_{t,s}^2 \le \eta^2\left(\frac{\delta_{t,s}}{\eta}\right)^2 \le |(d_{t,\delta})_s|^2.
    \end{equation*}
    2) If $a_{t,s} < 0 \le b_{t,s}$, then $r_{t,s}^2 = a_{t,s}^2 \le b_{t,s}^2/v^2 = c_{t,s}^2$, and we have $s\in\mathcal{J}_t$. We can follow Step B.2 to obtain:
    \begin{equation*}
        \sum_{s\in\mathcal{J}_t} c_{t,s}^2 \le \left( \sum_{s\in\mathcal{J}_t} c_{t,s} \right)^2 \le \left(\frac{2\sqrt{S}\|d_t\|+1}{v}\right)^2.
    \end{equation*}
    
    Here, we note that for each $s\in[S]$, only one of the cases holds. Therefore, we combine these results to get:
    \begin{equation*}
        \sum_{s=1}^S r_{t,s}^2 \le \|d_{t,\delta}\|^2 + \frac{4S\|d_t\|^2+2}{v^2},
    \end{equation*}
    which implies:
    \begin{equation*}
        \frac{1}{T}\sum_{t=0}^{T-1}\sum_{s=1}^S r_{t,s}^2 \le \frac{1}{T}\sum_{t=0}^{T-1}\|d_{t,\delta}\|^2 + \frac{2}{v^2} + \frac{4S}{v^2}\cdot \frac{1}{T}\sum_{t=0}^{T-1}\|d_t\|^2.
    \end{equation*}
    We can select $v$ to be sufficiently large such that the coefficient of the last term $\frac{1}{T}\sum_{t=0}^{T-1}\|d_t\|^2$ is smaller than $\frac{1}{2}$.

    \textbf{Step C: Combination and Parameter Selection.}
    
    Finally, we can combine the all the results we obtained from Steps A and B to get the following convergence performance guarantee:
    \begin{equation*}
        \begin{aligned}
            & \|\mathcal{K}(\rho_t,z_t,\omega_t,\nu_t,\lambda)\|^2 \\
            = & (1 - \sum_{s=1}^S \omega_{t,s})^2 + \|\sum_{s=1}^S\omega_{t,s} \lambda_s\nabla f_s(z_t) + \sum_{i=1}^q\nu_{t,i} \nabla h_i(z_t)\|^2 \\
            & + \|h(z_t)\|^2 + \sum_{s=1}^S [\min\{\omega_{t,s}, \rho_t - \lambda_s f_s(z_t)\}]^2 \\
            = & \|d_{t,\rho}\|^2 + \|d_{t,z}\|^2 + \|h(z_t)\|^2 + \sum_{s=1}^S r_{t,s}^2,
        \end{aligned}
    \end{equation*}
    which implies:
    \begin{equation}\label{eq:pf-2*}
        \begin{aligned}
            & \frac{1}{T}\sum_{t=0}^{T-1} \|\mathcal{K}(\rho_t,z_t,\omega_t,\nu_t,\lambda)\|^2 \\
            = & \frac{1}{T}\sum_{t=0}^{T-1}\left(\|d_{t,\rho}\|^2 + \|d_{t,z}\|^2 + \|h(z_t)\|^2 + \sum_{s=1}^S r_{t,s}^2\right) \\
            \le & \frac{1}{T}\sum_{t=0}^{T-1}\left(\|d_{t,\rho}\|^2 + \|d_{t,z}\|^2 + \|d_{t,\delta}\|^2\right) \\
            & + \frac{2(3\sqrt{S}M+1)^2}{u^2\sigma^2}\cdot\frac{1}{T}\sum_{t=0}^{T-1}\|d_t\|^2 + \frac{2M^2}{u^2\sigma^2} + \frac{2}{v^2} + \frac{4S}{v^2}\cdot \frac{1}{T}\sum_{t=0}^{T-1}\|d_t\|^2 \\
            \le & \frac{2}{T}\sum_{t=0}^{T-1}\|d_t\|^2 + \frac{2M^2}{u^2\sigma^2} + \frac{2}{v^2} \\
            \le & \frac{2(P(\theta_0) -P(\theta_{T}))}{\eta T(2-L_P\eta)} + \frac{2M^2}{u^2\sigma^2} + \frac{2}{v^2} \\
            \le & \frac{2\big(\rho_0 - \rho_T + \frac{u}{2}(\|h(z_0)\|^2 - \|h(z_T)\|^2) + \frac{v}{2}\sum_{s=1}^S(c_{0,s}^2 - c_{T,s}^2)\big)}{\eta T(2-L_P\eta)} + \frac{2M^2}{u^2\sigma^2} + \frac{2}{v^2} \\
            \le & \frac{2\rho_0 + 4 + u\|h(z_0)\|^2 + v\sum_{s=1}^S(\lambda_sf_s(z_0) + \delta_{0, s} - \rho_0)^2}{\eta T(2-L_P\eta)} + \frac{2M^2}{u^2\sigma^2} + \frac{2}{v^2},
        \end{aligned}
    \end{equation}
    where the last inequality is derived from the fact that $\rho_t$ has a trivial lower bound, which can be argued as follows. On the one hand, by letting $\eta \le \frac{1}{L_P}$, we can get $P(\theta_{t+1}) \le P(\theta_t), \forall t= 0, \dotsc, T-1$. On the other hand, since $f_s(z)>0, \forall s\in[S]$, we have $\nabla_\rho P(\theta_t) \le 0$ when:
    \begin{equation*}
        \rho_t \le -\frac{1}{vS} \le \frac{1}{S}\Big(\sum_{s=1}^S (\lambda_s f_s(z_t) + \delta_{t,s}) - \frac{1}{v}\Big),
    \end{equation*}
    which implies $\rho_{t+1} \ge \rho_t, \forall t= 0, \dotsc, T-1$ if $\rho_t \le -\frac{1}{vS}$. Therefore, we can further obtain:
    \begin{equation*}
        \begin{aligned}
            & \rho_t \ge -\frac{1}{vS} - \eta \nabla_\rho P(\theta_t), \\
            \implies & \rho_t \ge -\frac{1}{vS} - \eta + \eta v\Big(\sum_{s=1}^S \lambda_sf_s(z_t) + \delta_{t, s} - \rho_t\Big), \\
            \implies & \rho_t \ge -\frac{1}{vS} - \eta - \eta vS\rho_t, \\
            \implies & \rho_t \ge -2,
        \end{aligned}
    \end{equation*}
    
    To finish the analysis, we select parameters such that \Cref{eq:pf-2*} converges. Let $u = \Theta(T^\xi)$, $v = \Theta(T^\xi)$, and $\eta = \Theta(T^{-\gamma})$. It is also worth noting that $L_P$ has the same order with $u$ and $v$. Thus, we maximize an order $o$, such that:
    \begin{equation*}
        \begin{aligned}
            &1 - \gamma - \xi \ge o,\\
            &2\xi \ge o,\\
            &\gamma \ge \xi.
        \end{aligned}
    \end{equation*}
    Thus, we select $\gamma=\xi=\frac{1}{4}$, then $o=\frac{1}{2}$, i.e., the convergence rate is $\mathcal{O}(S/T^{\frac{1}{2}})$.
\end{proof}

\begin{remark}
    For hyperparameter selection, we also provide the following practical guidance, based on the insights from our theoretical analysis. (1) In practice, the total number of iterations $T$ is usually known beforehand or can be set. Therefore, the $\Theta(\cdot)$-scaling results indicate that we only need to choose the $u$, $v$, and $\eta$ parameters following the correct scaling order in terms of $T$. (2) From the analysis, we find that the hidden constants in these $\Theta(\cdot)$-scaling results only depend on Lipschitz continuity coefficient $M$ and the minimum singular value $\sigma$, both of which are relatively easy to estimate historically from the dataset or online through the warm-up stage in training. (3) In additional to the above quantitative characterizations for choosing $u$, $v$, and $\eta$, one can additionally pick these parameters following some practical rules of thumb. Since $u$ and $v$ are the coefficients of the penalty terms, one can pick larger $u$- and $v$-values if ensuring small constraint violations is more preferred. On the other hand, if minimizing the objective is more preferred, one can choose relatively small $u$- and $v$-values.
\end{remark}

\subsection{Stochastic WC-Penalty Algorithm}

Now, we consider the ECMO problem in its stochastic form as follows:
\begin{equation*}
    \begin{aligned}
        & \min_{z\in\mathbb{R}^{k}} F(z)^\top = \left(f_1(z), \dotsc, f_S(z)\right) \\
        & \text{s.t. } h_i(z) = 0, i=1,\dotsc,q,
    \end{aligned}
\end{equation*}
where $f_s(z) = \mathbb{E}_{\xi}[f_s(z;\xi)], \forall s\in[S]$, and $h_i(z) = \mathbb{E}_{\zeta}[h_i(z;\zeta)], \forall i\in[q]$. Since this problem shares exactly the same form as \ref{eq:ECMO}, differing only in the specific $f_s(z)$ and $h_i(z)$, the KKT system defined in \Cref{def:KKT-system} remains applicable. Consequently, it is still irrational to apply the penalty method, specifically, \Cref{eq:WC-Penalty}, to address this problem.

\begin{algorithm}[b]
	\caption{Stochastic WC-Penalty Algorithm}\label{alg:WC-Penalty-stoc}
	\begin{algorithmic}[1]
        \STATE \textbf{Input:} Iteration rounds $T$, initialization $\theta_0\in\mathcal{C}$, where $\rho_0 \ge0$, and step-size $\eta$.
        \FOR{\(t=0,1,\dots, T-1\)}
            \STATE Draw sample batches $\mathcal{B}_t^1, \dotsc, \mathcal{B}_t^S$ and $\mathcal{T}_t^1, \dotsc, \mathcal{T}_t^q$.
            \STATE Compute stochastic gradients: $\hat{\nabla} P(\theta_t)$ by \Cref{eq:gradient-stoc}.
            \STATE Update $\theta_{t+1} = \mathcal{P}_{\mathcal{C}}(\theta_t - \eta \hat{\nabla} P(\theta_t))$.
        \ENDFOR
	\end{algorithmic}
\end{algorithm}
Hence, we adopt a similar algorithmic framework, i.e., \Cref{alg:WC-Penalty-stoc}, to deal with this stochastic ECMO problem. Note that we keep the aforementioned notation $\mathcal{C} = \mathbb{R}\times\mathbb{R}^k\times\mathbb{R}^S_+$ as the feasible region, and $\theta = (\rho^\top, z^\top, \delta^\top)^\top$ for simplicity. The stochastic gradients can be computed as follows:
\begin{equation}\label{eq:gradient-stoc}
    \begin{aligned}
        & \hat{\nabla}_\rho P(\theta_t) = 1 - v \sum_{s=1}^S (\lambda_sf_s(z_t;\mathcal{B}_t^s) + \delta_{t,s} - \rho_t), \\
        & \hat{\nabla}_z P(\theta_t) = u \sum_{i=1}^q h_i(z_t;\mathcal{T}_t^i)\nabla h_i(z_t;\mathcal{T}_t^i) + v \sum_{s=1}^S (\lambda_sf_s(z_t;\mathcal{B}_t^s) + \delta_{t,s} - \rho_t) \lambda_s\nabla f_s(z_t;\mathcal{B}_t^s), \\
        & \hat{\nabla}_{\delta_s} P(\theta_t) = v(\lambda_sf_s(z_t;\mathcal{B}_t^s) + \delta_{t,s} - \rho_t), \\
    \end{aligned}
\end{equation}
where $\mathcal{B}_t^s$ and $\mathcal{T}_t^i$ denote the mini-batches of sampled data at iteration with batch-sizes $\mathcal{B}(t)$ and $\mathcal{T}(t)$, respectively, for each $t,s,i$. Before giving the theoretical results and the analysis, we need an additional assumption stated as follows:
\begin{assumption}[Bounded Variance]\label{ass:variance}
    There exist some constants $\sigma_f$ and $\sigma_h$, such that $\mathbb{E}(f_s(z;\xi) - f_s(z))^2 \le \sigma_f^2, \forall z\in\mathbb{R}^k, s\in[S]$, and $\mathbb{E}(h_i(z;\zeta) - h_i(z))^2 \le \sigma_h^2, \forall z\in\mathbb{R}^k, i\in[q]$.
\end{assumption}

Now, we are ready to present the theoretical results for \Cref{alg:WC-Penalty-stoc}.

\begin{theorem}[Finite-Time Convergence Rate of \Cref{alg:WC-Penalty-stoc}]\label{thm:WC-Penalty-stoc}
    Under \Cref{ass:smooth,ass:regular,ass:variance}, for any $\kappa\in(0,1)$, preference $\lambda\in\Delta_S^{++}$, selecting $\eta = \Theta(T^{-\frac{1}{4}})$, $u = v = \Theta(T^{\frac{1}{4}})$, and $\mathcal{B}(t) = \mathcal{T}(t) = \Theta(T^{\frac{5}{4}}), \forall t$, \Cref{alg:WC-Penalty-stoc} has the following convergence result with probability at least $1-\kappa$:
    \begin{equation*}
        \frac{1}{T}\sum_{t=0}^{T-1} \|\mathcal{K}(\rho_t,z_t,\omega_t,\nu_t,\lambda)\|^2 = \mathcal{O}\left(\frac{S}{T^{\frac{1}{2}}}\right).
    \end{equation*}
\end{theorem}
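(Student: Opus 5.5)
We will mirror the three-step proof of \Cref{thm:WC-Penalty}, with the stochastic gradient written as $\hat{\nabla}P(\theta_t) = \nabla P(\theta_t) + e_t$. The error $e_t$ is controlled through the large batches, and the deterministic KKT bookkeeping of Step B is reused essentially verbatim. We define $\hat d_t := (\theta_t-\theta_{t+1})/\eta$, the actual update direction, and $d_t := (\theta_t - \mathcal{P}_{\mathcal{C}}(\theta_t-\eta\nabla P(\theta_t)))/\eta$, the deterministic gradient mapping. By nonexpansiveness of $\mathcal{P}_{\mathcal{C}}$, $\|\hat d_t - d_t\|\le\|e_t\|$.

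\textbf{Step A (descent with noise).} Apply the descent lemma to $P$ and the projection inequality $\langle\hat{\nabla}P(\theta_t),\hat d_t\rangle\ge\|\hat d_t\|^2$. Writing $\nabla P = \hat\nabla P - e_t$ and using Young's inequality gives
\[
\eta\Bigl(\tfrac12-\tfrac{L_P\eta}{2}\Bigr)\|\hat d_t\|^2\le P(\theta_t)-P(\theta_{t+1})+\tfrac{\eta}{2}\|e_t\|^2 .
\]
Telescoping bounds $\frac1T\sum_t\|\hat d_t\|^2$ by $\frac{P(\theta_0)-\min_t P(\theta_t)}{\eta T}+\frac1T\sum_t\|e_t\|^2$, and hence also $\frac1T\sum_t\|d_t\|^2$ up to constants.

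\textbf{Step B (KKT terms).} Evaluate the KKT system with the exact multipliers $\omega_{t,s}=v(\lambda_sf_s(z_t)+\delta_{t,s}-\rho_t)$ and $\nu_{t,i}=uh_i(z_t)$, built from true function values. Steps B.1--B.3 of the deterministic proof then apply unchanged to $d_t$. They give
\[
\|\mathcal{K}_t\|^2\le 2\|d_t\|^2+\frac{2M^2}{u^2\sigma^2}+\frac{2}{v^2}
\]
once $u,v$ are large, using LICQ (\Cref{ass:regular}) for the $\sigma$ lower bound.

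\textbf{Step C (noise control and lower bound).} We need a high-probability bound on $\frac1T\sum_t\|e_t\|^2$. Each component of $e_t$ is a sum of terms like $v\lambda_s(f_s(z_t;\mathcal{B})-f_s(z_t))$ and $u\,(h_i(z_t;\mathcal{T})\nabla h_i(z_t;\mathcal{T})-h_i\nabla h_i)$. Under \Cref{ass:variance} and \Cref{ass:smooth}, which bounds the stochastic gradients by $M$, each has second moment $O((u^2+v^2)S\,\sigma^2_{f,h}M^2/\mathcal{B}(t))$. This gives $\mathbb{E}\frac1T\sum_t\|e_t\|^2=O(S T^{1/2}/T^{5/4})$, and Markov's inequality turns it into a bound holding with probability $1-\kappa$, with constant $1/\kappa$.

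The main obstacle is the lower bound on $P(\theta_T)$, equivalently on $\rho_t$. The deterministic argument used monotone decrease of $P$ together with $\nabla_\rho P\le0$ whenever $\rho_t\le -1/(vS)$, and noise breaks both. We would instead show that, on the same high-probability event, $\max_t|e_{t,\rho}|$ is small: a union bound over $t$ with Chebyshev's inequality, which the batch size $T^{5/4}$ against $v=T^{1/4}$ makes affordable. The $\rho$-recursion then still yields $\rho_t\ge -3$. Bounding $P(\theta_0)-P(\theta_T)$ by $O(u+v)$ as before, we plug into Step B with $\eta=T^{-1/4}$ and $u=v=T^{1/4}$. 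This gives $O(S(u+v)/(\eta T)+1/u^2+1/v^2)+O(ST^{-3/4})=O(S/T^{1/2})$.
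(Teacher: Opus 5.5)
Your decomposition is sound, and in one respect cleaner than the paper's. Steps B.1--B.3 of the deterministic proof are pointwise statements about the gradient mapping at the current iterate, so they transfer verbatim to your $d_t$, and nonexpansiveness gives $\|\hat d_t-d_t\|\le\|e_t\|$. The paper instead re-derives B.2--B.3 with stochastic index sets and coordinatewise Chebyshev bounds.

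The genuine gap is the noise bound in Step C. Split each product term as $\hat c\,\hat g-c\,g=(\hat c-c)\hat g+c(\hat g-g)$ (stochastic residual times stochastic gradient). The $z$-block of $e_t$ then contains $v\,c_{t,s}\lambda_s(\nabla f_s(z_t;\mathcal{B}_t^s)-\nabla f_s(z_t))$ and $u\,h_i(z_t)(\nabla h_i(z_t;\mathcal{T}_t^i)-\nabla h_i(z_t))$. Their conditional second moments are of order $\omega_{t,s}^2M^2/\mathcal{B}(t)$ and $\nu_{t,i}^2M^2/\mathcal{T}(t)$ for your multipliers. Nothing bounds $\omega_t,\nu_t$ uniformly along the iterates, since $\rho_t$, $\delta_t$ and $h(z_t)$ are not a priori bounded. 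So $\mathbb{E}\frac1T\sum_t\|e_t\|^2=O(ST^{-3/4})$ does not follow. This is exactly why the paper's $B_{t,1},B_{t,2}$ carry the terms $\sum_i h_i(z_t)^2$ and $\sum_s(\lambda_sf_s(z_t)+\delta_{t,s}-\rho_t)^2$.

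The repair comes from your own Step B. Step B.2 gives, pointwise, $\|\omega_t\|\le v\sum_s|c_{t,s}|\le 3\sqrt{S}\|d_t\|+1$ and $u\sigma\|h(z_t)\|\le(3\sqrt{S}M+1)\|d_t\|+M$. Hence $\mathbb{E}[\|e_t\|^2\mid\theta_t]\le C_1(u^2+v^2)S/\mathcal{B}(t)+C_2(S/\mathcal{B}(t))\|d_t\|^2$. Using $\|d_t\|^2\le 2\|\hat d_t\|^2+2\|e_t\|^2$, the second piece is absorbed into the left side of the descent inequality. That absorption is only available after taking expectations. So the argument must be run in expectation, with Markov applied once at the end to $\frac1T\sum_t\|\mathcal{K}_t\|^2$, not pathwise with Markov applied to $\sum_t\|e_t\|^2$ alone.

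The lower-bound step also needs correcting as written. A union bound over $T$ steps with Chebyshev and $\mathrm{Var}(e_{t,\rho})\le v^2\sigma_f^2/\mathcal{B}(t)=O(T^{-3/4})$ only gives $\max_t|e_{t,\rho}|=O(T^{1/8}\kappa^{-1/2})$, which is not small. Your conclusion survives for a different reason. Since $\rho$ is never projected, $f_s>0$, $\delta_t\ge 0$ and $\eta vS\le\eta L_P<1$, you have $\rho_{t+1}\ge(1-\eta vS)\rho_t-\eta(1+e_{t,\rho})$. Unrolling from $\rho_0\ge 0$ gives $\rho_t\ge-(1+\max_k|e_{k,\rho}|)/(vS)$, where the factor $1/v$ kills the $T^{1/8}$.

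Since the repaired argument is in expectation anyway, it is simpler to take expectations in this linear recursion. Because $e_{t,\rho}$ has conditional mean zero, $\mathbb{E}P(\theta_T)\ge\mathbb{E}\rho_T\ge-1/(vS)$, with no union bound at all. The paper never re-establishes this lower bound in the stochastic setting, so flagging it is to your credit.

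With these two repairs your route closes. Your Young split leaves the noise as $\frac{\eta}{2}\|e_t\|^2$, rather than the paper's $\frac12\|\nabla P(\theta_t)-\hat\nabla P(\theta_t)\|^2$. It therefore contributes $O((u^2+v^2)S/\mathcal{B})$ instead of $O((u^2+v^2)S/(\eta\mathcal{B}))$, so the prescribed $\mathcal{B}(t)=\Theta(T^{5/4})$ is more than enough.
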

\begin{proof}
    To begin with, we define $\theta_{t+1} = \theta_t - \eta d_t$, which is different from the previous definition in deterministic scenario due to the stochastic nature of gradients. We split the analysis into three main steps here.

    \textbf{Step A: General Control.}

    \textbf{Step A.1: Applying Descent Lemma.} According to the descent lemma, we have:
    \begin{equation*}
        \begin{aligned}
            P(\theta_{t+1}) & \le P(\theta_t) + \langle \nabla P(\theta_t), \theta_{t+1} - \theta_t\rangle + \frac{L_P}{2}\|\theta_{t+1} - \theta_t\|^2 \\
            & = P(\theta_t) - \eta \langle \nabla P(\theta_t), d_t\rangle + \frac{L_P\eta^2}{2}\|d_t\|^2 \\
            & = P(\theta_t) - \eta \langle \nabla P(\theta_t) - \hat{\nabla} P(\theta_t) + \hat{\nabla} P(\theta_t), d_t\rangle + \frac{L_P\eta^2}{2}\|d_t\|^2 \\
            & \le P(\theta_t) - \eta\|d_t\|^2 - \eta \langle \nabla P(\theta_t) - \hat{\nabla} P(\theta_t), d_t\rangle + \frac{L_P\eta^2}{2}\|d_t\|^2 \\
            & \le P(\theta_t) - \eta\|d_t\|^2 + \frac{L_P\eta^2}{2}\|d_t\|^2 + \frac{\eta^2}{2}\|d_t\|^2 + \frac{1}{2}\|\nabla P(\theta_t) - \hat{\nabla} P(\theta_t)\|^2,
        \end{aligned}
    \end{equation*}
    where the second inequality is due to the property of projection, and the last inequality is due to Cauchy–Schwarz inequality. Then, we have:
    \begin{equation*}
        \eta\left(1-\frac{\eta}{2} - \frac{L_P\eta}{2}\right)\mathbb{E}\|d_t\|^2 \le P(\theta_t) - P(\theta_{t+1}) + \frac{1}{2}\mathbb{E}\|\nabla P(\theta_t) - \hat{\nabla} P(\theta_t)\|^2.
    \end{equation*}
    \textbf{Step A.2: Stochastic Gradient Control.} Then, we control:
    \begin{equation*}
        \begin{aligned}
            & \mathbb{E}\|\nabla P(\theta_t) - \hat{\nabla} P(\theta_t)\|^2 \\
            = & \underbrace{\mathbb{E}\|\nabla_{\rho} P(\theta_t) - \hat{\nabla}_{\rho} P(\theta_t)\|^2}_{A_t} + \underbrace{\mathbb{E}\|\nabla_z P(\theta_t) - \hat{\nabla}_z P(\theta_t)\|^2}_{B_t} + \underbrace{\mathbb{E}\|\nabla_{\delta} P(\theta_t) - \hat{\nabla}_\delta P(\theta_t)\|^2}_{C_t}.
        \end{aligned}
    \end{equation*}
    According to \Cref{ass:variance}, we have the following results: First,
    \begin{equation*}
        \begin{aligned}
            A_t & = \mathbb{E} \left( 1 - v \sum_{s=1}^S (\lambda_sf_s(z_t) + \delta_{t,s} - \rho_t) - (1 - v \sum_{s=1}^S (\lambda_sf_s(z_t;\mathcal{B}_t^s) + \delta_{t,s} - \rho_t))\right)^2 \\
            & = \mathbb{E} \left( v \sum_{s=1}^S (\lambda_sf_s(z_t;\mathcal{B}_t^s) - \lambda_sf_s(z_t)) \right)^2 \\
            & \le v^2S\sum_{s=1}^S \lambda_s \mathbb{E}\left(f_s(z_t;\mathcal{B}_t^s) - f_s(z_t)\right)^2 \\
            & \le \frac{v^2S\sigma_f^2}{\mathcal{B}(t)},
        \end{aligned}
    \end{equation*}
    where the first inequality is due to the linearity of expectation, and the second inequality is due to $|\mathcal{B}_t^s| = \mathcal{B}(t), \forall s\in[S]$ and $\lambda\in\Delta_S^{++}$. Second, according to the similar argument, we have:
    \begin{equation*}
        \begin{aligned}
            C_t & = \mathbb{E}\left( \sum_{s=1}^S \left( v(\lambda_sf_s(z_t) + \delta_{t,s} - \rho_t) - v(\lambda_sf_s(z_t;\mathcal{B}_t^s) + \delta_{t,s} - \rho_t) \right)^2\right) \\
            & = v^2\mathbb{E}\left( \sum_{s=1}^S \left( \lambda_sf_s(z_t) - \lambda_sf_s(z_t;\mathcal{B}_t^s)  \right)^2\right) \\
            & = v^2\sum_{s=1}^S\mathbb{E}\left( \lambda_sf_s(z_t) - \lambda_sf_s(z_t;\mathcal{B}_t^s)  \right)^2 \\
            & \le \frac{v^2\sigma_f^2}{\mathcal{B}(t)}.
        \end{aligned}
    \end{equation*}
    Lastly, for $B_t$, we add and subtract a term to get:
    \begin{equation*}
        \begin{aligned}
            & B_t \le \underbrace{2\mathbb{E}\| u\sum_{i=1}^q h_i(z_t)\nabla h_i(z_t) - u\sum_{i=1}^q h_i(z_t;\mathcal{T}_t^i)\nabla h_i(z_t;\mathcal{T}_t^i) \|^2}_{B_{t,1}} \\
            & + \underbrace{2\mathbb{E}\| v\sum_{s=1}^S(\lambda_sf_s(z_t) +\delta_{t,s} - \rho_t)\lambda_s\nabla f_s(z_t) - v\sum_{s=1}^S(\lambda_sf_s(z_t;\mathcal{B}_t^s) +\delta_{t,s} - \rho_t)\lambda_s\nabla f_s(z_t;\mathcal{B}_t^s) \|^2}_{B_{t,2}}.
        \end{aligned}
    \end{equation*}
    Therefore, due to the bounded variance and the smoothness assumption, we can get:
    \begin{equation*}
        \begin{aligned}
            B_{t,1} \le &  4\mathbb{E}\| u\sum_{i=1}^q h_i(z_t)\nabla h_i(z_t) - u\sum_{i=1}^q h_i(z_t)\nabla h_i(z_t;\mathcal{T}_t^i)\|^2 \\
            & + 4\mathbb{E}\| u\sum_{i=1}^q h_i(z_t)\nabla h_i(z_t;\mathcal{T}_t^i) - u\sum_{i=1}^q h_i(z_t;\mathcal{T}_t^i)\nabla h_i(z_t;\mathcal{T}_t^i) \|^2 \\
            \le & 4u^2m \sum_{i=1}^q \mathbb{E}\| h_i(z_t)(\nabla h_i(z_t) - \nabla h_i(z_t;\mathcal{T}_t^i)) \|^2 \\
            & + 4u^2m \sum_{i=1}^q \mathbb{E}\| (h_i(z_t) - h_i(z_t;\mathcal{T}_t^i))\nabla h_i(z_t;\mathcal{T}_t^i) \|^2 \\
            \le & \frac{4u^2mM^2}{\mathcal{T}(t)}\sum_{i=1}^q h_i(z_t)^2 + \frac{4u^2m^2M^2\sigma_h^2}{\mathcal{T}(t)},
        \end{aligned}
    \end{equation*}
    and:
    \begin{equation*}
        \begin{aligned}
            B_{t,2} \le &  4\mathbb{E}\| v\sum_{s=1}^S(\lambda_sf_s(z_t) +\delta_{t,s} - \rho_t)(\lambda_s\nabla f_s(z_t) - \lambda_s\nabla f_s(z_t;\mathcal{B}_t^s)) \|^2 \\
            & + 4\mathbb{E}\| v\sum_{s=1}^S(\lambda_sf_s(z_t) - \lambda_sf_s(z_t;\mathcal{B}_t^s))\lambda_s\nabla f_s(z_t;\mathcal{B}_t^s) \|^2 \\
            \le & 4v^2S\sum_{s=1}^S(\lambda_sf_s(z_t) +\delta_{t,s} - \rho_t)^2 \mathbb{E}\| \lambda_s\nabla f_s(z_t) - \lambda_s\nabla f_s(z_t;\mathcal{B}_t^s) \|^2 \\
            & + 4v^2S\sum_{s=1}^S\mathbb{E}\| (\lambda_sf_s(z_t) - \lambda_sf_s(z_t;\mathcal{B}_t^s))\lambda_s\nabla f_s(z_t;\mathcal{B}_t^s) \|^2 \\
            \le & \frac{4v^2SM^2}{\mathcal{B}(t)}\sum_{s=1}^S(\lambda_sf_s(z_t) +\delta_{t,s} - \rho_t)^2 + \frac{4v^2S^2M^2\sigma_f^2}{\mathcal{B}(t)}.
        \end{aligned}
    \end{equation*}
    
    \textbf{Step A.3: Combination.} Hence, we can combine the results obtained in the last two sub-steps to get:
    \begin{equation*}
        \begin{aligned}
            & \eta\left(1-\frac{\eta}{2} - \frac{L_P\eta}{2}\right) \frac{1}{T}\sum_{t=0}^{T-1}\mathbb{E}\|d_t\|^2 \\
            \le & \frac{1}{T}\big(P(\theta_0) - P(\theta_{T})\big) + \frac{1}{T}\sum_{t=0}^{T-1}\Bigg(\frac{v^2S\sigma_f^2}{\mathcal{B}(t)} + \frac{2u^2m^2M^2\sigma_h^2}{\mathcal{T}(t)} + \frac{2v^2S^2M^2\sigma_f^2}{\mathcal{B}(t)} \\
            & + \frac{2u^2mM^2}{\mathcal{T}(t)}\sum_{i=1}^q h_i(z_t)^2 + \frac{2v^2SM^2}{\mathcal{B}(t)}\sum_{s=1}^S(\lambda_sf_s(z_t) +\delta_{t,s} - \rho_t)^2\Bigg).
        \end{aligned}
    \end{equation*}
    
    Therefore, we almost bound $\sum_{t=0}^{T-1}\mathbb{E}\|d_t\|^2$. Later, we select proper $\mathcal{B}(t), \mathcal{T}(t)$, and $u, v$ to complete this process.
    
    \textbf{Step B: KKT System.}
    
    In this step, we consider the KKT system defined in \Cref{def:KKT-system}, and aim to control each term of $\|\mathcal{K}\|^2$. Before diving deep into the detailed analysis, we introduce some necessary notations here. We denote $\bar{d}_t, \forall t=0, \dotsc, T-1$, as the expected version of update direction. In other words, let $\bar{\theta}_{t+1} = \mathcal{P}_{\mathcal{C}}(\theta_t - \eta \nabla P(\theta_t))$, then we have $\bar{\theta}_{t+1} = \theta_t - \eta \bar{d}_t$. Since the KKT system is related to $\bar{d}_t$, we need to find the relationship between $d_t$, which has been controlled in Step A, and $\bar{d}_t$.
    
    \textbf{Step B.1: Stationarity Terms.} We now consider the first two terms in the KKT system, i.e., the stationarity terms for $\rho$ and $z$, respectively. We consider $z$ first. On the one hand, we have:
    \begin{equation*}
        \begin{aligned}
            \bar{d}_{t,z} & = u \sum_{i=1}^q h_i(z_t)\nabla h_i(z_t) + v \sum_{s=1}^S (\lambda_sf_s(z_t) + \delta_{t,s} - \rho_t) \lambda_s\nabla f_s(z_t),
        \end{aligned}
    \end{equation*}
    implying that in \Cref{def:KKT-system}, by setting $\omega_{t,s} = v(\lambda_sf_s(z_t) + \delta_{t, s} - \rho_t)$, $\nu_{t,i} = u h_i(z_t)$, and $\omega_t = (\omega_{t,1}, \dotsc, \omega_{t,S})^\top$, $\nu_t = (\nu_{t,1}, \dotsc, \nu_{t,q})^\top$, we can bound the corresponding stationarity term for $z$ as long as $\|\bar{d}_{t,z}\|^2$ can be controlled. On the other hand, we follow the argument in Step A.2 to obtain the following result:
    \begin{equation*}
        \begin{aligned}
            \|\bar{d}_{t,z}\|^2 = & \mathbb{E}\|\bar{d}_{t,z}\|^2 \\
            \le & 2 \mathbb{E}\|d_{t,z}\|^2 + 2 \mathbb{E}\|\bar{d}_{t,z} - d_{t,z}\|^2 \\
            \le & 2 \mathbb{E}\|d_{t,z}\|^2 + 8\Bigg(\frac{u^2m^2M^2\sigma_h^2}{\mathcal{T}(t)} + \frac{v^2S^2M^2\sigma_f^2}{\mathcal{B}(t)} \\
            & + \frac{u^2mM^2}{\mathcal{T}(t)}\sum_{i=1}^q h_i(z_t)^2 + \frac{v^2SM^2}{\mathcal{B}(t)}\sum_{s=1}^S(\lambda_sf_s(z_t) +\delta_{t,s} - \rho_t)^2\Bigg),
        \end{aligned}
    \end{equation*}
    where $\mathbb{E}\|d_{t,z}\|^2$ is already characterized previously. As for the stationarity term for $\rho$, we note that:
    \begin{equation*}
        1-\sum_{s=1}^S\omega_{t,s} = 1 - \sum_{s=1}^S v(\lambda_sf_s(z_t) + \delta_{t, s} - \rho_t) = \bar{d}_{t,\rho},
    \end{equation*}
    and:
    \begin{equation*}
        \begin{aligned}
            \|\bar{d}_{t,\rho}\|^2 = \mathbb{E}\|\bar{d}_{t,\rho}\|^2 \le 2 \mathbb{E}\|d_{t,\rho}\|^2 + 2 \mathbb{E}\|\bar{d}_{t,\rho} - d_{t,\rho}\|^2 \le 2 \mathbb{E}\|d_{t,\rho}\|^2 + \frac{2v^2S\sigma_f^2}{\mathcal{B}(t)},
        \end{aligned}
    \end{equation*}
    which indicates that the stationarity terms are well controlled.

    \textbf{Step B.2: Primal Feasibility Term.} Then, we consider the primal feasibility, i.e., $\|h(z)\|^2$. Similar to the deterministic ECMO problem, we first define some notations as follows: Let $c_{t,s} = \lambda_sf_s(z_t;\mathcal{B}_t^s) + \delta_{t, s} - \rho_t$, and $\bar{c}_{t,s} = \lambda_sf_s(z_t) + \delta_{t, s} - \rho_t$. Besides, we also denote the following index sets:
    \begin{equation*}
        \mathcal{I}_t = \{s\in[S]: v c_{t,s} \le \frac{\delta_{t,s}}{\eta}\},
        \hspace{2em}
        \mathcal{J}_t = \{s\in[S]: v c_{t,s} > \frac{\delta_{t,s}}{\eta}\}.
    \end{equation*}
    Therefore, we have:
    \begin{equation*}
        \begin{aligned}
            u\sigma \|h(z_t)\| \le \|u \nabla h(z_t) h(z_t)\| = \| \bar{d}_{t,z} - v\sum_{s=1}^S \bar{c}_{t,s}\lambda_s\nabla f_s(z_t) \| \le \| \bar{d}_{t,z}\| + vM\sum_{s=1}^S |\bar{c}_{t,s}|,
        \end{aligned}
    \end{equation*}
    which further implies:
    \begin{equation*}
        \begin{aligned}
            & \|h(z_t)\| \le \frac{1}{u\sigma}\| \bar{d}_{t,z} \| + \frac{vM}{u\sigma}\sum_{s=1}^S |\bar{c}_{t,s}|, \\
            \implies & \|h(z_t)\|^2 \le \frac{2}{u^2\sigma^2}\| \bar{d}_{t,z} \|^2 + \frac{2v^2M^2}{u^2\sigma^2}\left(\sum_{s=1}^S |\bar{c}_{t,s}|\right)^2.
        \end{aligned}
    \end{equation*}
    Hence, we need to control:
    \begin{equation*}
        \left(\sum_{s=1}^S |\bar{c}_{t,s}|\right)^2 \le \underbrace{2\left(\sum_{s\in\mathcal{I}_t} |\bar{c}_{t,s}|\right)^2}_{C(\mathcal{I}_t)} + \underbrace{2\left(\sum_{s\in\mathcal{J}_t} |\bar{c}_{t,s}|\right)^2}_{C(\mathcal{J}_t)}.
    \end{equation*}
    To this end, we can obtain the following two results:
    \begin{equation*}
        \begin{aligned}
            C(\mathcal{I}_t) & = 2\mathbb{E}\left(\sum_{s\in\mathcal{I}_t} |\bar{c}_{t,s}|\right)^2 \\
            & \le 2\mathbb{E}\left[\sum_{s\in\mathcal{I}_t}(|\bar{c}_{t,s} - c_{t,s}| + |c_{t,s}|)\right]^2 \\
            & \le 4\mathbb{E}\left[\sum_{s\in\mathcal{I}_t}|\bar{c}_{t,s} - c_{t,s}|\right]^2 + 4\mathbb{E}\left[\sum_{s\in\mathcal{I}_t}|c_{t,s}|\right]^2 \\
            & \le 4\mathbb{E}\left[\sum_{s\in\mathcal{I}_t}|\lambda_s(f_s(z_t) - f_s(z_t;\mathcal{B}_t^s))|\right]^2 + 4\mathbb{E}\left[\sqrt{|\mathcal{I}_t|}\frac{\|d_{t,\delta}\|}{v}\right]^2 \\
            & \le \frac{4|\mathcal{I}_t|\sigma_f^2}{\mathcal{B}(t)} + \frac{4|\mathcal{I}_t|}{v^2}\mathbb{E}\|d_{t,\delta}\|^2, \\
        \end{aligned}
    \end{equation*}
    and further:
    \begin{equation*}
        \begin{aligned}
            C(\mathcal{J}_t) & = 2\left(\sum_{s\in\mathcal{J}_t} \bar{c}_{t,s}\right)^2 \\
            & \le 4\left(\sum_{s\in[S]} \bar{c}_{t,s}\right)^2 + 4\left(\sum_{s\in\mathcal{I}_t} |\bar{c}_{t,s}|\right)^2 \\
            & \le \frac{8|\mathcal{I}_t|\sigma_f^2}{\mathcal{B}(t)} + \frac{8|\mathcal{I}_t|}{v^2}\mathbb{E}\|d_{t,\delta}\|^2 + 4\left(\frac{1-\bar{d}_{t,\rho}}{v}\right)^2 \\
            & \le \frac{8|\mathcal{I}_t|\sigma_f^2}{\mathcal{B}(t)} + \frac{8|\mathcal{I}_t|}{v^2}\mathbb{E}\|d_{t,\delta}\|^2 + \frac{8}{v^2} + \frac{16}{v^2}\left( \mathbb{E}\|d_{t,\rho}\|^2 + \frac{v^2S\sigma_f^2}{\mathcal{B}(t)}\right).
        \end{aligned}
    \end{equation*}
    Therefore, we can obtain:
    \begin{equation*}
        \left(\sum_{s=1}^S |\bar{c}_{t,s}|\right)^2 \le \frac{12|\mathcal{I}_t|\sigma_f^2}{\mathcal{B}(t)} + \frac{12|\mathcal{I}_t|}{v^2}\mathbb{E}\|d_{t,\delta}\|^2 + \frac{8}{v^2} + \frac{16}{v^2} \mathbb{E}\|d_{t,\rho}\|^2 + \frac{16S\sigma_f^2}{\mathcal{B}(t)}.
    \end{equation*}
    Finally, substituting it back, we can get:
    \begin{equation*}
        \begin{aligned}
            \|h(z_t)\|^2 \le & \frac{2}{u^2\sigma^2}\| \bar{d}_{t,z} \|^2 + \frac{2v^2M^2}{u^2\sigma^2}\left(\sum_{s=1}^S |\bar{c}_{t,s}|\right)^2 \\
            \le & \frac{4}{u^2\sigma^2} \mathbb{E}\|d_{t,z}\|^2 + \frac{16}{u^2\sigma^2}\Bigg(\frac{u^2m^2M^2\sigma_h^2}{\mathcal{T}(t)} + \frac{v^2S^2M^2\sigma_f^2}{\mathcal{B}(t)} \\
            & + \frac{u^2mM^2}{\mathcal{T}(t)}\sum_{i=1}^q h_i(z_t)^2 + \frac{v^2SM^2}{\mathcal{B}(t)}\sum_{s=1}^S(\lambda_sf_s(z_t) +\delta_{t,s} - \rho_t)^2\Bigg) \\
            & + \frac{2v^2M^2}{u^2\sigma^2}\left(\frac{12|\mathcal{I}_t|\sigma_f^2}{\mathcal{B}(t)} + \frac{12|\mathcal{I}_t|}{v^2}\mathbb{E}\|d_{t,\delta}\|^2 + \frac{8}{v^2} + \frac{16}{v^2} \mathbb{E}\|d_{t,\rho}\|^2 + \frac{16S\sigma_f^2}{\mathcal{B}(t)}\right) \\
            \le & \frac{4}{u^2\sigma^2} \mathbb{E}\|d_{t,z}\|^2 + \frac{32M^2}{u^2\sigma^2}  \mathbb{E}\|d_{t,\rho}\|^2 + \frac{24SM^2}{u^2\sigma^2} \mathbb{E}\|d_{t,\delta}\|^2 + \frac{16M^2}{u^2\sigma^2} + \frac{56Sv^2M^2\sigma_f^2}{u^2\sigma^2\mathcal{B}(t)} \\
            & + \frac{16}{u^2\sigma^2}\Bigg(\frac{u^2m^2M^2\sigma_h^2}{\mathcal{T}(t)} + \frac{v^2S^2M^2\sigma_f^2}{\mathcal{B}(t)} \\
            & + \frac{u^2mM^2}{\mathcal{T}(t)}\sum_{i=1}^q h_i(z_t)^2 + \frac{v^2SM^2}{\mathcal{B}(t)}\sum_{s=1}^S(\lambda_sf_s(z_t) +\delta_{t,s} - \rho_t)^2\Bigg).
        \end{aligned}
    \end{equation*}
    Thus, we can select $\mathcal{T}(t)$ to be sufficiently large such that:
    \begin{equation*}
        \begin{aligned}
            \|h(z_t)\|^2 \le & \frac{8}{u^2\sigma^2} \mathbb{E}\|d_{t,z}\|^2 + \frac{64M^2}{u^2\sigma^2}  \mathbb{E}\|d_{t,\rho}\|^2 + \frac{48SM^2}{u^2\sigma^2} \mathbb{E}\|d_{t,\delta}\|^2 + \frac{32M^2}{u^2\sigma^2} + \frac{112Sv^2M^2\sigma_f^2}{u^2\sigma^2\mathcal{B}(t)} \\
            & + \frac{32}{u^2\sigma^2}\Bigg(\frac{u^2m^2M^2\sigma_h^2}{\mathcal{T}(t)} + \frac{v^2S^2M^2\sigma_f^2}{\mathcal{B}(t)} + \frac{v^2SM^2}{\mathcal{B}(t)}\sum_{s=1}^S(\lambda_sf_s(z_t) +\delta_{t,s} - \rho_t)^2\Bigg).
        \end{aligned}
    \end{equation*}
    Then, we can select $u$ to be sufficiently large such that the sum of the first three terms in RHS is no larger than $\mathbb{E}\|d_t\|^2$. Besides, we let $\mathcal{B}(t)$ and $\mathcal{T}(t)$ be some $T$-dependent constant (but independent with $t$). Then, we have:
    \begin{equation*}
        \begin{aligned}
            \frac{1}{T}\sum_{t=0}^{T-1}\|h(z_t)\|^2 \le & \frac{1}{T}\sum_{t=0}^{T-1}\mathbb{E}\|d_{t}\|^2 + \frac{32M^2}{u^2\sigma^2} + \frac{112Sv^2M^2\sigma_f^2}{u^2\sigma^2\mathcal{B}(t)} + \frac{32}{u^2\sigma^2T}\sum_{t=0}^{T-1}\Bigg(\frac{u^2m^2M^2\sigma_h^2}{\mathcal{T}(t)} \\
            & + \frac{v^2S^2M^2\sigma_f^2}{\mathcal{B}(t)} + \frac{v^2SM^2}{\mathcal{B}(t)}\sum_{s=1}^S(\lambda_sf_s(z_t) +\delta_{t,s} - \rho_t)^2\Bigg).
        \end{aligned}
    \end{equation*}
    
    \textbf{Step B.3: Dual Feasibility and Complementary Slackness Term.} Now, we consider the last term in the KKT system: $\bar{r}_{t,s} = \min\{\omega_{t,s}, \rho_t - \lambda_sf_s(z_t)\}$.
    
    To begin with, we first introduce some notations here: let $\bar{a}_{t,s} = \rho_t - \lambda_sf_s(z_t)$, $\bar{b}_{t,s} = \omega_{t,s} = v(\lambda_sf_s(z_t) + \delta_{t, s} - \rho_t)$, and $a_{t,s} = \rho_t - \lambda_sf_s(z_t;\mathcal{B}_t^s)$, $b_{t,s} = v(\lambda_sf_s(z_t;\mathcal{B}_t^s) + \delta_{t, s} - \rho_t)$. We also note the following fact:
    \begin{equation*}
        \begin{aligned}
            (d_{t,\delta})_s & = \frac{1}{\eta} (\delta_{t,s} - \delta_{t+1,s}) \\
            & = \frac{1}{\eta} (\delta_{t,s} - \max\{\delta_{t,s} - \eta \nabla_\delta P(\theta_t), 0\}) \\
            & = \min \{b_{t,s}, \frac{\delta_{t,s}}{\eta}\},
        \end{aligned}
    \end{equation*}
    and discuss $\bar{r}_{t,s}^2$ in two different cases.
    
    In the first case, we suppose $s\in\mathcal{I}_t$, i.e., $b_{t,s} \le \frac{\delta_{t,s}}{\eta}$. No matter what the orders among $b_{t,s}$, $\bar{b}_{t,s}$, and $\frac{\delta_{t,s}}{\eta}$ are, we can obtain:
    \begin{equation*}
        \begin{aligned}
            \bar{r}_{t,s}^2 & \le \bar{b}_{t,s}^2 \le 2\mathbb{E}(b_{t,s}^2) + 2\mathbb{E}(\bar{b}_{t,s} - b_{t,s})^2 \le 2\mathbb{E}(d_{t,\delta})_s^2 + \frac{2v^2\lambda_s^2\sigma_f^2}{\mathcal{B}(t)}.
        \end{aligned}
    \end{equation*}
    
    In the second case, we suppose $s\in\mathcal{J}_t$, i.e., $\frac{\delta_{t,s}}{\eta} < b_{t,s}$. If $\bar{b}_{t,s} > \frac{\delta_{t,s}}{\eta}$, following the same argument in the deterministic scenario, if $\bar{a}_{t,s}\ge 0$, we have: $\bar{r}_{t,s}^2 \le \mathbb{E}(d_{t,\delta})_s^2$. If $\bar{a}_{t,s} < 0$, we have:
    \begin{equation*}
        \begin{aligned}
            \bar{r}_{t,s}^2 & = \bar{a}_{t,s}^2 \le \frac{\bar{b}_{t,s}^2}{v^2} = \bar{c}_{t,s}^2. \\
        \end{aligned}
    \end{equation*}
    Thus, we have:
    \begin{equation*}
        \begin{aligned}
            \sum_{s\in\mathcal{J}_t, \bar{a}_{t,s} < 0} \bar{c}_{t,s}^2 & \le \sum_{s\in\mathcal{J}_t} \bar{c}_{t,s}^2 \\
            & \le \left( \sum_{s\in\mathcal{J}_t} \bar{c}_{t,s} \right)^2 \\
            & \le \frac{4|\mathcal{I}_t|\sigma_f^2}{\mathcal{B}(t)} + \frac{4|\mathcal{I}_t|}{v^2}\mathbb{E}\|d_{t,\delta}\|^2 + \frac{4}{v^2} + \frac{8}{v^2} \mathbb{E}\|d_{t,\rho}\|^2 + \frac{8S\sigma_f^2}{\mathcal{B}(t)}.
        \end{aligned}
    \end{equation*}
    
    Otherwise, $\bar{b}_{t,s} \le \frac{\delta_{t,s}}{\eta}$, we have: $\bar{r}_{t,s}^2 \le \mathbb{E}(d_{t,\delta})_s^2$ if $\bar{b}_{t,s}\ge -\frac{\delta_{t,s}}{\eta}$, and $\bar{r}_{t,s}^2 = \bar{b}_{t,s}^2  \le (\bar{b}_{t,s} -b_{t,s})^2 \le \frac{v^2\lambda_s^2\sigma_f^2}{\kappa\mathcal{B}(t)}$ with probability at least $1-\kappa$ if $\bar{b}_{t,s} < -\frac{\delta_{t,s}}{\eta}$ for any $\kappa\in(0,1)$ according to Chebyshev inequality.
    
    Combining aforementioned results together, with probability at least $1-\kappa$, we can get:
    \begin{equation*}
        \begin{aligned}
            \sum_{s=1}^S \bar{r}_{t,s}^2 \le & 2\sum_{s=1}^S\mathbb{E}(d_{t,\delta})_s^2 + \frac{2v^2\sigma_f^2}{\mathcal{B}(t)} + \frac{v^2\sigma_f^2}{\kappa\mathcal{B}(t)} \\
            & + \frac{4|\mathcal{I}_t|\sigma_f^2}{\mathcal{B}(t)} + \frac{4|\mathcal{I}_t|}{v^2}\mathbb{E}\|d_{t,\delta}\|^2 + \frac{4}{v^2} + \frac{8}{v^2} \mathbb{E}\|d_{t,\rho}\|^2 + \frac{8S\sigma_f^2}{\mathcal{B}(t)} \\
            \le & 2\mathbb{E}\|d_{t,\delta}\|^2 + \frac{4S+8}{v^2}\mathbb{E}\|d_t\|^2 + \frac{4}{v^2} + \frac{12S\sigma_f^2}{\mathcal{B}(t)} + \frac{2v^2\sigma_f^2}{\mathcal{B}(t)} + \frac{v^2\sigma_f^2}{\kappa\mathcal{B}(t)},
        \end{aligned}
    \end{equation*}
    which further implies:
    \begin{equation*}
        \begin{aligned}
            \frac{1}{T}\sum_{t=0}^{T-1}\sum_{s=1}^S r_{t,s}^2 \le \frac{2}{T}\sum_{t=0}^{T-1} \mathbb{E}\|d_{t,\delta}\|^2 + \frac{4S+8}{v^2}\cdot\frac{1}{T}\sum_{t=0}^{T-1}\mathbb{E}\|d_t\|^2 + \frac{4}{v^2} + \frac{12S\sigma_f^2}{\mathcal{B}(t)} + \frac{2v^2\sigma_f^2}{\mathcal{B}(t)} + \frac{v^2\sigma_f^2}{\kappa\mathcal{B}(t)},
        \end{aligned}
    \end{equation*}
    holds with probability at least $1-\kappa$ according to Chebyshev inequality. We can select $v$ to be sufficiently large such that the coefficient of the second terms in RHS is no larger than $1$. Then, with probability at least $1-\kappa$, we obtain:
    \begin{equation*}
        \begin{aligned}
            \frac{1}{T}\sum_{t=0}^{T-1}\sum_{s=1}^S r_{t,s}^2 \le \frac{2}{T}\sum_{t=0}^{T-1} \mathbb{E}\|d_{t,\delta}\|^2 + \frac{1}{T}\sum_{t=0}^{T-1}\mathbb{E}\|d_t\|^2 + \frac{4}{v^2} + \frac{12S\sigma_f^2}{\mathcal{B}(t)} + \frac{2v^2\sigma_f^2}{\mathcal{B}(t)} + \frac{v^2\sigma_f^2}{\kappa\mathcal{B}(t)}.
        \end{aligned}
    \end{equation*}
    
    \textbf{Step C: Combination and Parameter Selection.}
    
    Finally, we can combine the all the results we obtained from Steps A and B to get the following convergence performance guarantee:
    \begin{equation*}
        \begin{aligned}
            & \|\mathcal{K}(\rho_t,z_t,\omega_t,\nu_t,\lambda)\|^2 \\
            = & (1 - \sum_{s=1}^S \omega_{t,s})^2 + \|\sum_{s=1}^S\omega_{t,s} \lambda_s\nabla f_s(z_t) + \sum_{i=1}^q\nu_{t,i} \nabla h_i(z_t)\|^2 \\
            & + \|h(z_t)\|^2 + \sum_{s=1}^S [\min\{\omega_{t,s}, \rho_t - \lambda_s f_s(z_t)\}]^2 \\
            = & \|\bar{d}_{t,\rho}\|^2 + \|\bar{d}_{t,z}\|^2 + \|h(z_t)\|^2 + \sum_{s=1}^S \bar{r}_{t,s}^2,
        \end{aligned}
    \end{equation*}
    which further implies:
    \begin{equation*}
        \begin{aligned}
            & \|\mathcal{K}(\rho_t,z_t,\omega_t,\nu_t,\lambda)\|^2 \\
            = & \|\bar{d}_{t,\rho}\|^2 + \|\bar{d}_{t,z}\|^2 + \|h(z_t)\|^2 + \sum_{s=1}^S \bar{r}_{t,s}^2 \\
            \le & 2 \mathbb{E}\|d_{t,\rho}\|^2 + 2 \mathbb{E}\|d_{t,z}\|^2 \\
            & + \frac{2v^2S\sigma_f^2}{\mathcal{B}(t)} + 8\Bigg(\frac{u^2m^2M^2\sigma_h^2}{\mathcal{T}(t)} + \frac{v^2S^2M^2\sigma_f^2}{\mathcal{B}(t)} + \frac{v^2SM^2}{\mathcal{B}(t)}\sum_{s=1}^S(\lambda_sf_s(z_t) +\delta_{t,s} - \rho_t)^2\Bigg) \\
            & + 2\|h(z_t)\|^2 + \sum_{s=1}^S \bar{r}_{t,s}^2, \\
        \end{aligned}
    \end{equation*}
    where we select large enough $\mathcal{T}(t) \ge 8u^2mM^2$ in the last inequality. Hence, we can further obtain the following results with probability at least $1-\kappa$:
    \begin{equation*}
        \begin{aligned}
            & \frac{1}{T}\sum_{t=0}^{T-1} \|\mathcal{K}(\rho_t,z_t,\omega_t,\nu_t,\lambda)\|^2 \\
            \le & \frac{6}{T}\sum_{t=0}^{T-1}\mathbb{E}\|d_{t}\|^2 + \frac{2v^2S\sigma_f^2}{\mathcal{B}(t)} + \frac{8u^2m^2M^2\sigma_h^2}{\mathcal{T}(t)} + \frac{8v^2S^2M^2\sigma_f^2}{\mathcal{B}(t)}\\
            & + \frac{64M^2}{u^2\sigma^2} + \frac{224Sv^2M^2\sigma_f^2}{u^2\sigma^2\mathcal{B}(t)} + \frac{64}{u^2\sigma^2}\Bigg(\frac{u^2m^2M^2\sigma_h^2}{\mathcal{T}(t)} + \frac{v^2S^2M^2\sigma_f^2}{\mathcal{B}(t)}\Bigg) \\
            & + \frac{4}{v^2} + \frac{12S\sigma_f^2}{\mathcal{B}(t)} + \frac{2v^2\sigma_f^2}{\mathcal{B}(t)} + \frac{v^2\sigma_f^2}{\kappa\mathcal{B}(t)} \\
            & + \Bigg(\frac{64v^2SM^2}{u^2\sigma^2\mathcal{B}(t)} + \frac{8v^2SM^2}{\mathcal{B}(t)}\Bigg)\cdot\Bigg(\frac{12S\sigma_f^2}{\mathcal{B}(t)} + \frac{8}{v^2} + \frac{16S\sigma_f^2}{\mathcal{B}(t)}\Bigg),
        \end{aligned}
    \end{equation*}
    where the last line uses the result of $\left(\sum_{s=1}^S |\bar{c}_{t,s}|\right)^2$. Therefore, we can use the $\mathcal{O}(\cdot)$ notations to further organize this result as:
    \begin{equation}\label{eq:pf-stoc-1}
        \begin{aligned}
            & \frac{1}{T}\sum_{t=0}^{T-1} \|\mathcal{K}(\rho_t,z_t,\omega_t,\nu_t,\lambda)\|^2 \\
            = & \mathcal{O}\left(\frac{S(u+v)}{\eta T}\right) + \mathcal{O}\left(\frac{Su^2}{\eta\mathcal{T}(t)}\right) + \mathcal{O}\left(\frac{Sv^2}{\eta\mathcal{B}(t)}\right) + \mathcal{O}\left(\frac{1}{u^2}\right) + \mathcal{O}\left(\frac{1}{v^2}\right), \\
        \end{aligned}
    \end{equation}
    with probability at least $1-\kappa$. Thus, we can select parameters to ensure the convergence of \Cref{eq:pf-stoc-1}. Specifically, let $u=\Theta(T^{\gamma})$, $v=\Theta(T^{\gamma})$, $\eta=\Theta(T^{-\gamma})$, and $\mathcal{B}(t)=\mathcal{T}(t)=\Theta(T^{\mu})$. Suppose the convergence order is $o$, then we maximize $o$, such that:
    \begin{equation*}
        \begin{aligned}
            & 1-2\gamma \ge o, \\
            & \mu-3\gamma \ge o, \\
            & 2\gamma \ge o,\\
        \end{aligned}
    \end{equation*}
    Hence, we can set $\gamma=\frac{1}{4}$ and $\mu=\frac{5}{4}$, to obtain $o = \frac{1}{2}$. In other words, The convergence rate is $\mathcal{O}(S/T^{\frac{1}{2}})$.
\end{proof}

\begin{remark}
    By comparing \Cref{alg:WC-Penalty} and \Cref{alg:WC-Penalty-stoc}, along with their respective analyses, we identify that the key challenge in the stochastic scenario arises from the \textbf{stochastic gradients}. Specifically, due to the gap between the full gradient and its stochastic estimator, the analysis for \Cref{alg:WC-Penalty-stoc} becomes more complex, even though the overall structure of the analysis remains the same. Consequently, to deal with the stochasticity, we 1) add and subtract several intermediate terms, applying the triangle inequality to bridge the gap between the stochastic gradients and their full-gradient counterparts; 2) use the Chebyshev Inequality to \textit{accurately} bound the dual feasibility and complementary slackness terms in the KKT system; and 3) carefully select the batch-sizes $\mathcal{B}$ and $\mathcal{T}$ to ensure finite-time convergence.
\end{remark}

\section{Setups and Additional Results of Numerical Experiments}

In this section, we begin by presenting the details of our experimental setups for two data weighting tasks introduced in \Cref{sec:MTBL}, alongside additional numerical results, including evaluations on larger-scale LLMs. Following that, we provide the setup and numerical results of a multi-task meta-learning task to further validate our proposed algorithm.

\subsection{Data Weighting for Multi-Objective RLHF Reward Model Training}\label{sec:app-exp1}

\textbf{1) Detailed Setup.}

\textbf{Overview.} The reward model scores LLM-generated responses to prompts based on human-aligned criteria in Reinforcement Learning from Human Feedback (RLHF). The multi-objective data weighting task aims to determine optimal weights over training datasets for training a reward model that maximize multiple validation metrics in Pareto sense. As shown in the literature, this data weighting task is often considered using a bilevel framework \citep{pan2024scalebio,shen2024seal}. Moreover, potentially conflicting human preferences, such as \textit{helpfulness}, \textit{verbosity}, naturally motivates a multi-task formulation. Hence, we model this problem as an MTBL problem.

\textbf{Training.} Specifically, there are $N$ training sets $\mathcal{T}_1, \dotsc, \mathcal{T}_N$. Each training set $\mathcal{T}_n, n\in[N]$ contains $|\mathcal{T}_n|$ prompt-response pairs $\{p_{n,i}, r_{n,i}\}, i=1, \dotsc, |\mathcal{T}_n|$, and the corresponding scores $s_{n,i}$. The derivation, quality, and tendency of these training sets may be unknown in practice, indicating that our data weighting task aims to assign larger weights to datasets that are of higher quality and better aligned with the target preference. To this end, we consider a weight vector $x=(x_1, \dotsc, x_N)^\top$, where each element corresponds to a training set. These weights are normalized using a SoftMax function. We denote the parameter of the reward model as $y$, then it is a function of the weight $x$.

\textbf{Validation.} These trained weights are evaluated in the validation process, where $M$ validation sets $\mathcal{V}_1, \dotsc, \mathcal{V}_M$ are considered. Each $\mathcal{V}_m, m\in[M]$ contains $|\mathcal{V}_m|$ prompt-response pairs $\{p_{m,j}, r_{m,j}\}, j=1, \dotsc, |\mathcal{V}_m|$, and the corresponding scores $s_{m,j}$, where the scores are labeled based on some specific and unique criteria such as \textit{helpfulness}, \textit{correctness}, and \textit{verbosity}. In real-world scenarios, these metrics may not be aligned with training sets, and can be inaccessible. In other words, the $M$ validation sets verify the capability of the reward model in $M$ different directions.

\textbf{Formulation and Setup.} To sum up, the formulation of this task is stated as follows:
\begin{equation*}
    \begin{aligned}
        & \min_{x,y} \left(\sum_{j=1}^{|\mathcal{V}_1|}\mathcal{L}(\tilde{s}_{1, j}(y(x)), s_{1, j}), \dotsc, \sum_{j=1}^{|\mathcal{V}_M|}\mathcal{L}(\tilde{s}_{M, j}(y(x)), s_{M, j})\right) \\
        & \text{s.t. } y(x) \in \arg\min_y \sum_{n=1}^N\frac{\exp(x_n)}{\sum_{n'}\exp(x_{n'})} \sum_{i=1}^{|\mathcal{T}_n|}\mathcal{L}(\tilde{s}_{n, i}(y), s_{n, i}),
    \end{aligned}
\end{equation*}
where $\mathcal{L}$, set to mean squared error (MSE) here, denotes the loss evaluated by the true score label $s$ and the predicted score label $\tilde{s}$ generated by the reward model. We use the HelpSteer dataset \citep{wang2023helpsteer} as the basic dataset. For training datasets, we select two sets with criteria \textit{coherence} and \textit{verbosity}, and also construct a set with \textit{random generated scores}, indicating $N=3$. It is worth noting that the prompt-response pairs in these $3$ training sets are identical. For the validation sets, we consider all $5$ validation sets, i.e., $M=5$, each corresponding to a different evaluation criterion: \textit{helpfulness}, \textit{correctness}, \textit{coherence}, \textit{complexity}, and \textit{verbosity}, respectively.

We utilize a multi-layer perceptron (MLP) with a depth of $500$ and a width of $5$ to represent the reward model. The input is encoded using \cite{he2021debertav3} and has a dimension of $500$. For the parameters, we set the batch size to $256$, the learning rate to $\eta = 10^{-8}$, and the total number of iterations to $T = 3,000$. Moreover, we change the preference vector $\lambda\in\Delta_5^{++}$ to explore the Pareto front. We evaluate three MTBL algorithms, MOML \citep{ye2021multi}, MoCo \citep{fernando2023mitigating}, FORUM \citep{ye2024first}, as our baselines. Specifically, the inner loop of each algorithm is set to $50$, with learning rates for the UL and LL variables ($x$ and $y$) set to $\alpha=10^{-3}$ and $\beta=10^{-8}$, respectively. Additionally, for MoCo, we set the extra parameters $\gamma=10^{-3}$ and $\rho=10^{-6}$; for FORUM, we set $\rho=2$. Each experiment is repeated for $5$ times. All numerical experiments for this reward model training task were conducted on a cluster of 4 NVIDIA H100 GPUs (94GB each) using PyTorch's DistributedDataParallel.

The expected results are as follows: 1) WC-Penalty Algorithm achieves a low validation loss for each metric, demonstrating the convergence behavior of our algorithm. 2) When different preferences are chosen during the validation process, our algorithm covers a much larger portion of Pareto front compared to other baselines. Moreover, when weights are assigned to prioritize specific objectives, our algorithm yields a lower validation for those objectives compared to the case of using alternative preference vectors.

\textbf{2) Additional Numerical Results.}

We now provide more numerical results on this data weighting for reward model training task, accompanied by discussions to emphasize the advantages of \Cref{alg:WC-Penalty} in this subsection.

\texttt{1. Pareto Exploration.}
\begin{figure}[t]
    \centering
    \begin{subfigure}[b]{0.48\textwidth}
        \centering
        \includegraphics[width=\linewidth]{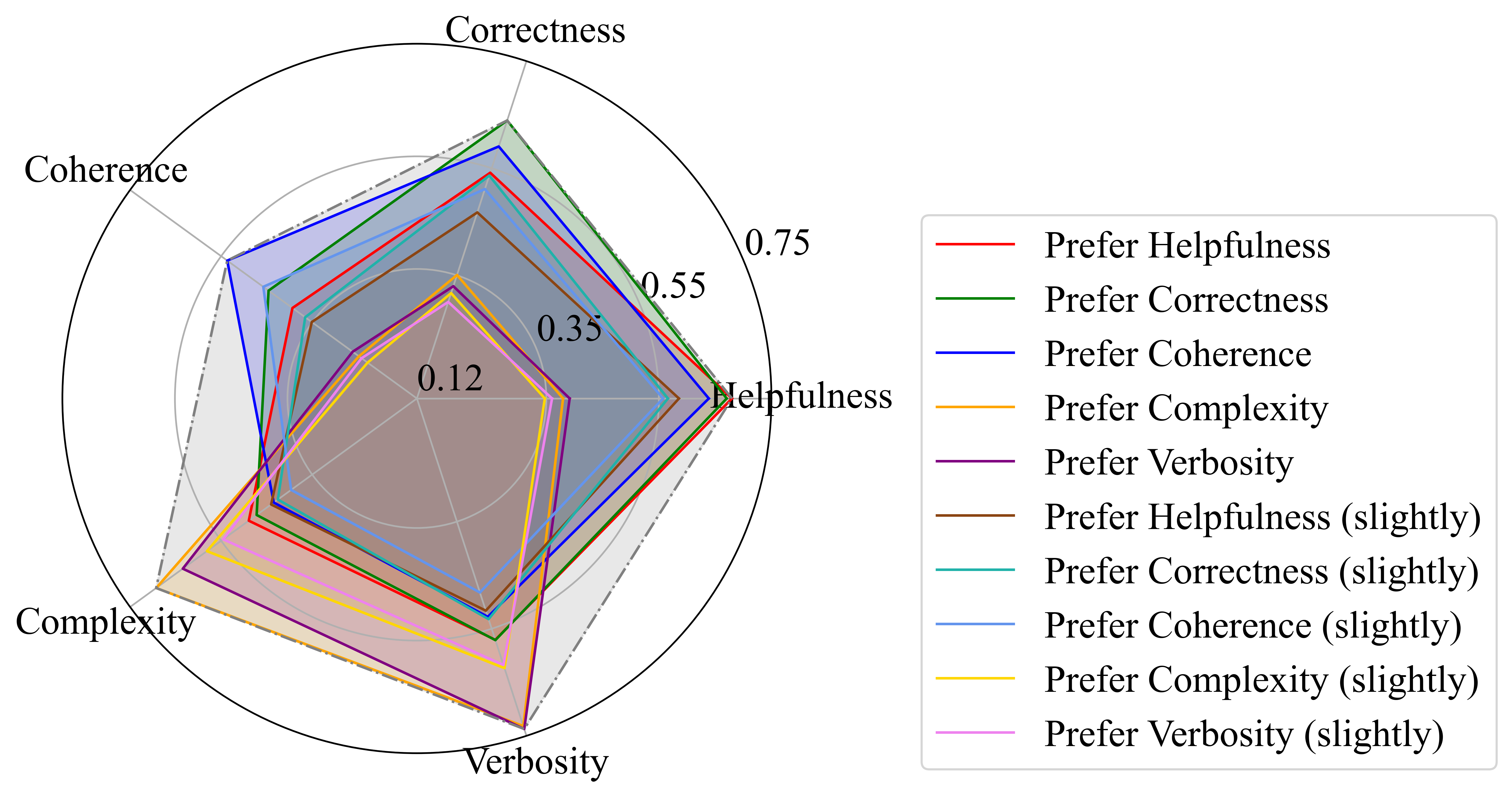}
        \caption{Pareto exploration with more preferences.}\label{fig:RM-morepref}
    \end{subfigure}
    \hfill
    \begin{subfigure}[b]{0.4\textwidth}
        \centering
        \includegraphics[width=\linewidth]{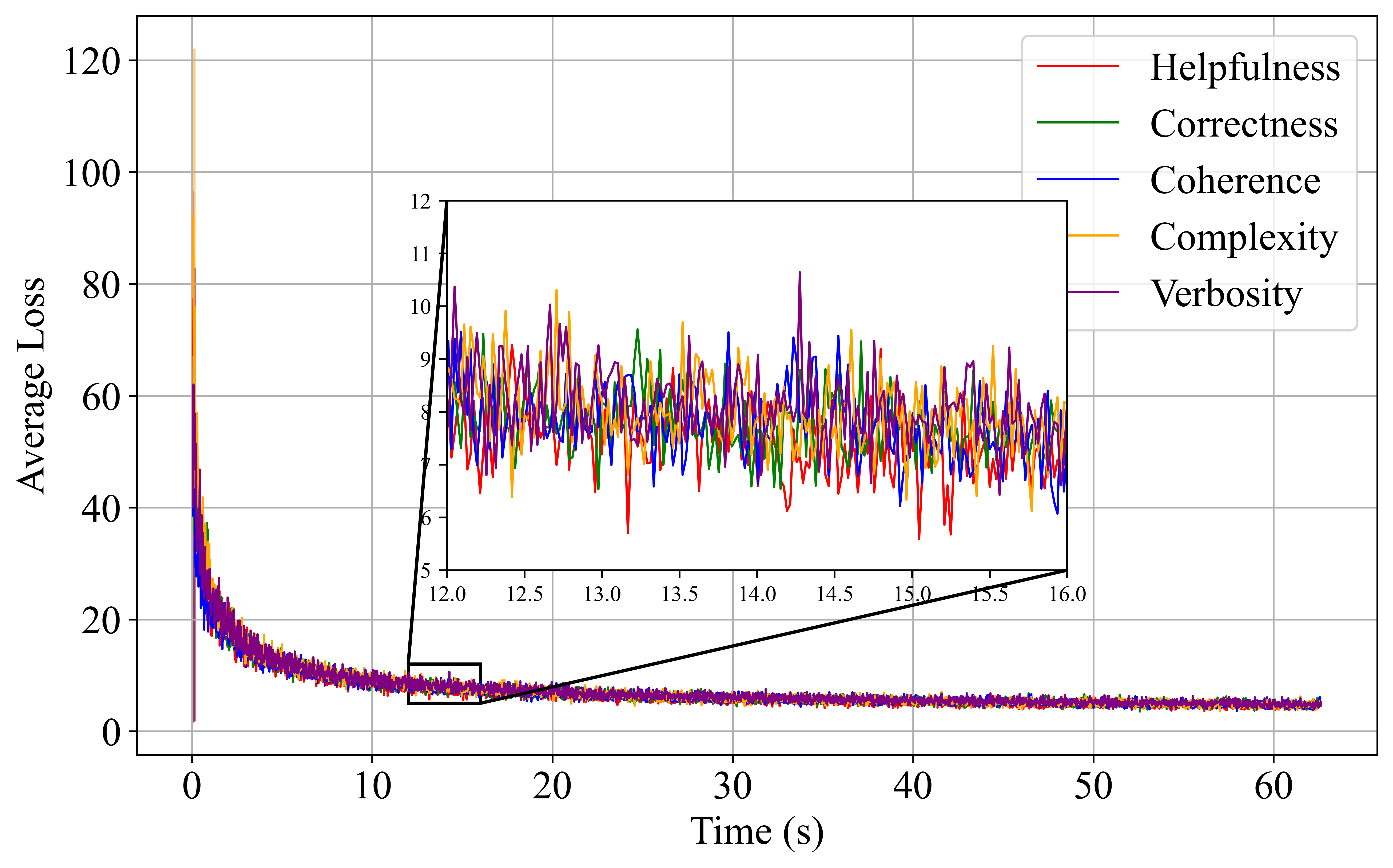}
        \caption{Convergence for different objectives.}\label{fig:RM-objective}
    \end{subfigure}
    \caption{Additional results for Pareto exploration.}\label{fig:RM-app-explore}
\end{figure}

In addition to the results demonstrated in \Cref{sec:MTBL}, we select $5$ more additional preference vectors by setting $\lambda$ as $\lambda_s = 0.84$ for some $s \in [S]$ and $\lambda_{s'} = 0.04$, $\forall s'\ne s$, referring to this as ``slightly prefer'' some objective in \Cref{fig:RM-morepref}. This further verifies the Pareto exploration capability of \Cref{alg:WC-Penalty}. Furthermore, to provide a clearer intuition of how our algorithm converges for each objective, \Cref{fig:RM-objective} illustrates the convergence behavior of each objective based on their loss and standard error over the $5$ trials when the preference vector is set to $\lambda = [0.01, 0.01, 0.01, 0.01, 0.96]^\top$ (i.e., under preference ``prefer verbosity''). We also compute the area ratios $\frac{\text{S}_{\text{ours}}}{\text{S}_{\text{baseline}}}$ for each baseline shown in \Cref{fig:RM-compare}, yielding the following results: $\frac{\text{S}_{\text{ours}}}{\text{S}_{\text{MOML}}} = 1.67$, $\frac{\text{S}_{\text{ours}}}{\text{S}_{\text{FORUM}}} = 1.36$, and $\frac{\text{S}_{\text{ours}}}{\text{S}_{\text{MoCo}}} = 1.56$. Our approach demonstrates at least a $36\%$ improvement on this metric, quantifying the Pareto exploration capability of our algorithm.

\Cref{fig:RM-explore} and \Cref{fig:RM-app-explore} align with our expectations. Intuitively, the loss for each objective converges over time, as our algorithm takes all objectives into account and achieves a convergence rate of $\mathcal{O}(S/T^{\frac{1}{2}})$. What's more, we also find that the performances on \textit{complexity} and \textit{verbosity} are similar, but significantly different from those of the other three metrics. This outcome, while not entirely surprising, is interesting, as it aligns with our expectations as well. These two criteria focus on redundancy and response length, whereas the other metrics are more concerned with the content of the responses. Our algorithm captures this subtle distinction by selecting some specific preferences, while other baselines fail to consider this point. This strength becomes \textbf{\textit{particularly valuable}} in practice when more objectives are introduced. Our approach enables a systematic exploration upfront, allowing the handling of these objectives, regardless of the complexity of their internal relationships.

\texttt{2. Convergence Performance.}

Except for the ability on Pareto exploration, we also highlight the good convergence behavior in \Cref{fig:RM-app-convergence}. Specifically, we compare the running time of our algorithm with that of all baselines over $T=3,000$ steps in \Cref{fig:RM-converge-time}. We average the loss over $5$ trials for each algorithm and include the standard error bars to ensure statistical significance. This result clearly illustrates that our algorithm converges to some weakly Pareto stationary solution in no more than than $70$ seconds, while MoCo, MOML, and FORUM require over $5\times 10^2$, $2\times 10^3$, and $2\times 10^4$ seconds, respectively, to complete this process. Similarly, \Cref{fig:RM-converge-iteration} shows how our algorithm and three baselines behave in $T=3,000$ iterations, with the iteration axis shown on a logarithmic scale. It is evident that the slope of our method is the smallest (or the largest in absolute value sense).

The computational efficiency shown in \Cref{fig:RM-converge-time} can be attributed to the following two key factors. \textbf{First}, while other baselines follow a double-loop scheme to alternately update variables $x$ and $y$, investing significant effort in the inner loop to optimize the LL function $g(x, y)$, our approach uses a simple projected gradient descent, employing a single-loop paradigm to handle the variables as a unified entirety. \textbf{Second}, since the ECMO problem inherently treats $x$ and $y$ as a unified entity, we omit the use of implicit gradient methods \citep{ghadimi2018approximation,ji2021bilevel} to compute the Hessian inverse, significantly reducing computational costs.

The best slope of our approach in \Cref{fig:RM-converge-iteration} further validates its convergence performance. Specifically, as illustrated in \Cref{thm:WC-Penalty}, our WC-Penalty algorithm achieves a convergence rate of $\mathcal{O}(S/T^{\frac{1}{2}})$ for general ECMO problems, which also applies to this MTBL setup. This rate matches the one obtained for MoCo in the context of MTBL problems under their strongest assumption. In contrast, 1) MOML lacks finite-time convergence guarantees, and 2) FORUM provides a rate of $\mathcal{O}(S/T^{\frac{1}{4}})$ (where the parameter $S$ is omitted in the $\mathcal{O}(\cdot)$ notation in their work). In the end, we'd also like to point out that these convergence rates are based on different metrics. All of the baselines' setups require the strongly convex LL function $g(x,y)$ to ensure well-defined metrics. By contrast, our metric, $\|\mathcal{K}(\rho, z, \omega, \nu, \lambda)\|^2$, is more general, as it applies to general ECMO problems.

\begin{figure}[t]
    \centering
    \begin{subfigure}[b]{0.4\textwidth}
        \centering
        \includegraphics[width=\linewidth]{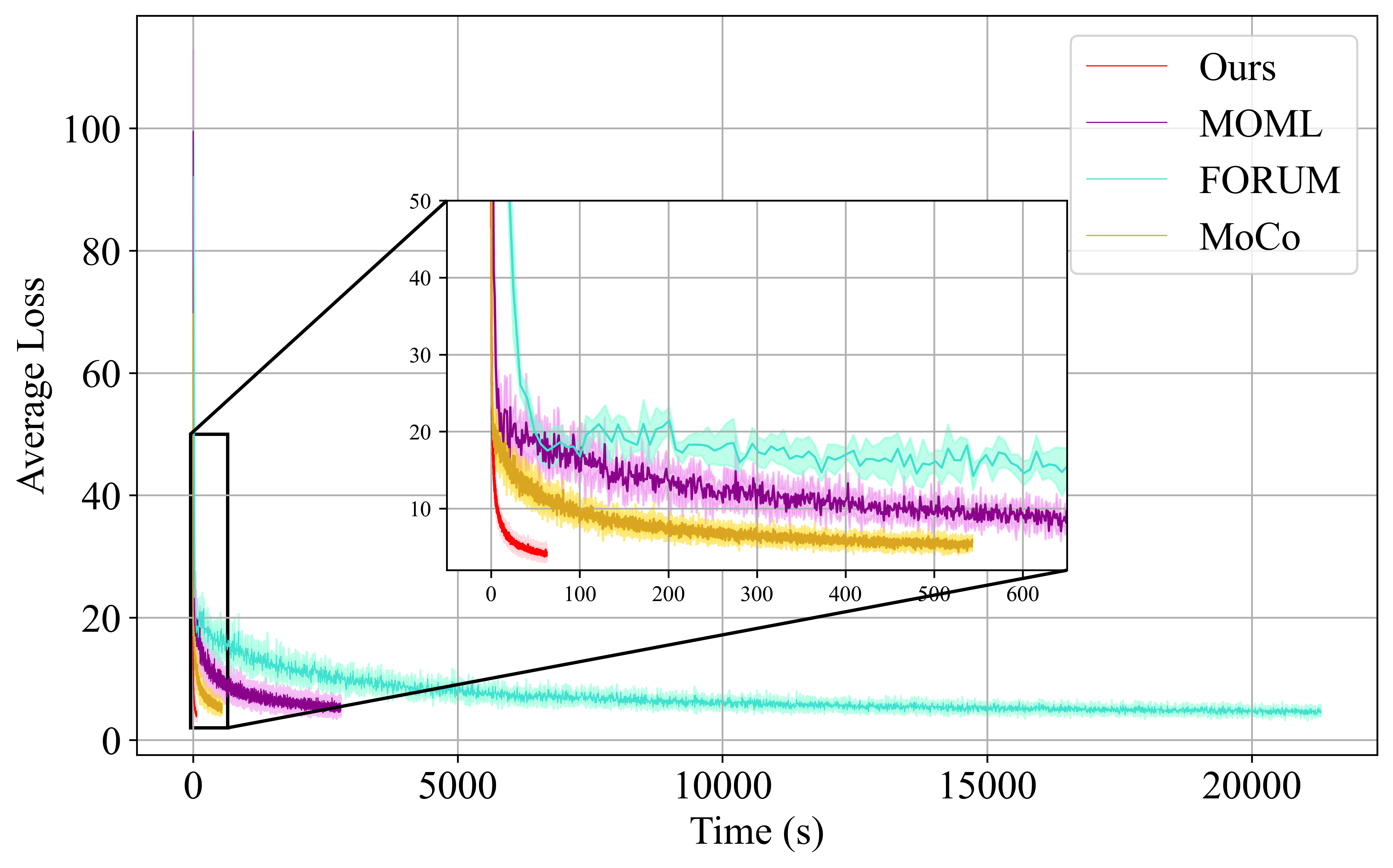}
        \caption{Convergence performance vs running time.}\label{fig:RM-converge-time}
    \end{subfigure}
    \hfill
    \begin{subfigure}[b]{0.4\textwidth}
        \centering
        \includegraphics[width=\linewidth]{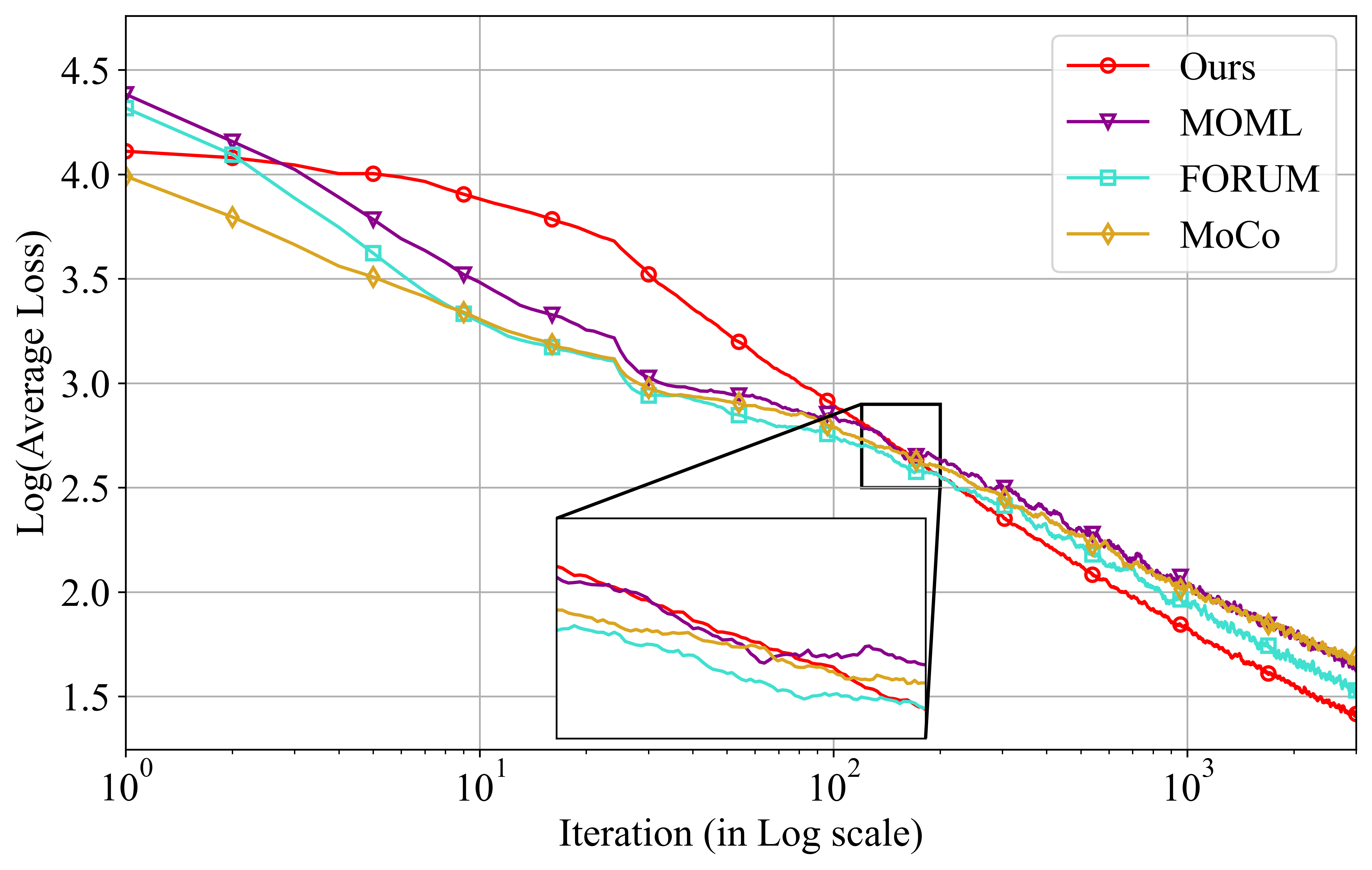}
        \caption{Convergence performance vs iterations.}\label{fig:RM-converge-iteration}
    \end{subfigure}
    \caption{Additional results for Convergence Performance.}\label{fig:RM-app-convergence}
\end{figure}

\begin{figure}[t]
    \centering
    \begin{subfigure}[b]{0.32\textwidth}
        \centering
        \includegraphics[width=\linewidth]{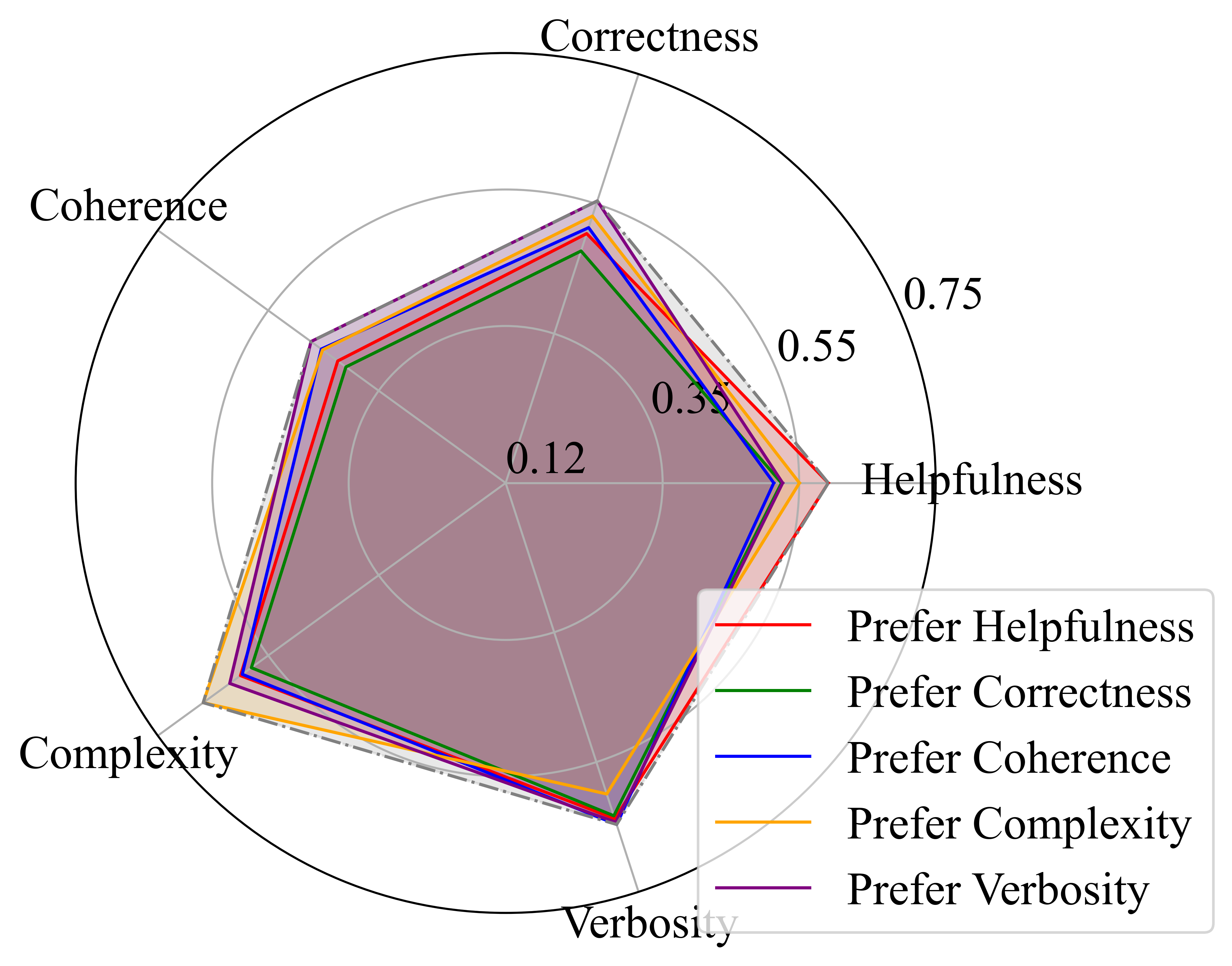}
        \caption{ITD with LS.}\label{fig:RM-ITD}
    \end{subfigure}
    \begin{subfigure}[b]{0.32\textwidth}
        \centering
        \includegraphics[width=\linewidth]{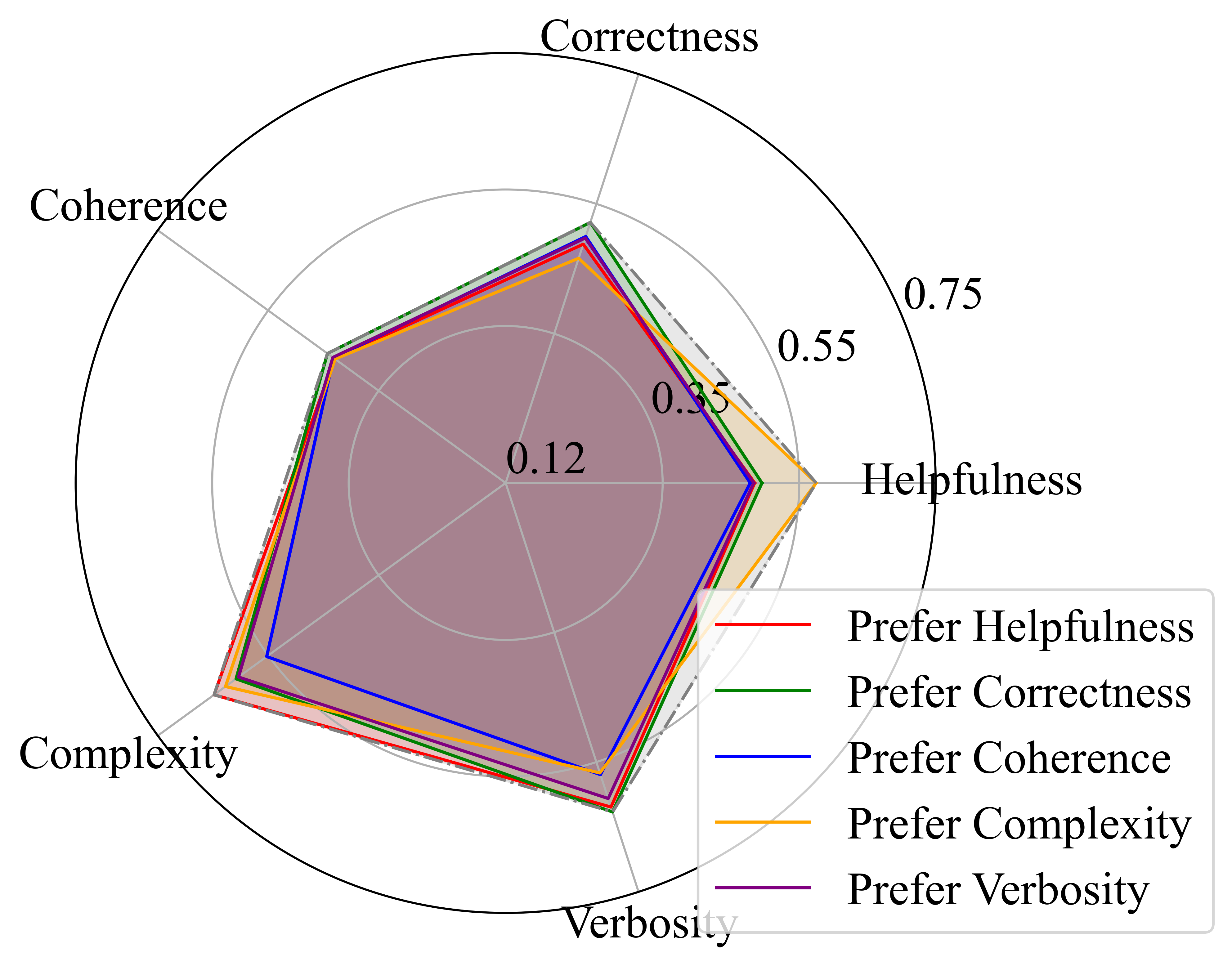}
        \caption{SOBA with LS.}\label{fig:RM-SOBA}
    \end{subfigure}
    \begin{subfigure}[b]{0.32\textwidth}
        \centering
        \includegraphics[width=\linewidth]{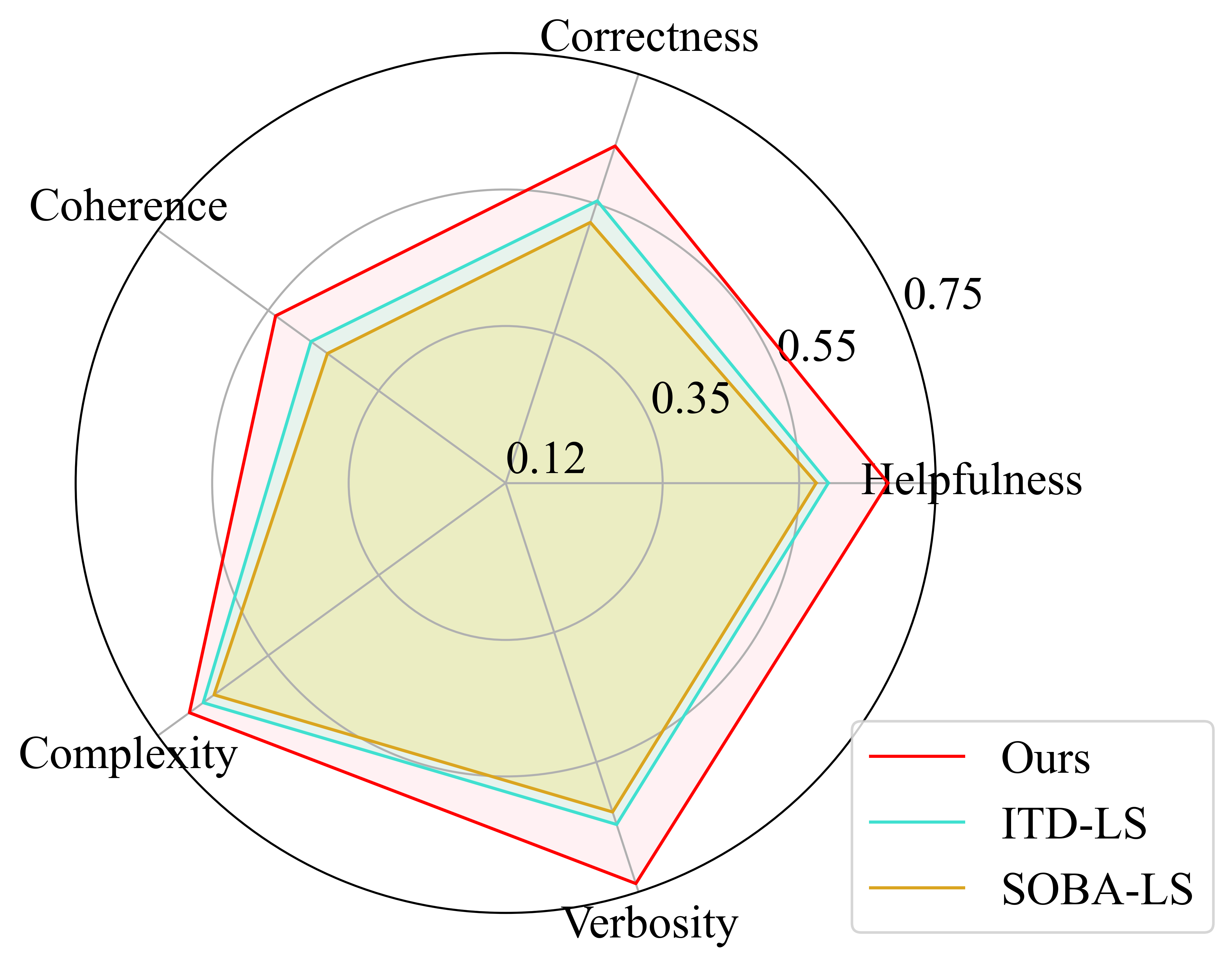}
        \caption{Comparison.}\label{fig:RM-compare-BLO}
    \end{subfigure}
    \vspace{-0.5em}
    \caption{Additional results on bilevel algorithms.}\label{fig:RM-BLO}
\end{figure}

\texttt{3. More Discussions.}

Finally, we provide some additional discussion for this experiment, focusing on three main aspects as follows. \textbf{Dataset}: The dataset we use (HelpSteer, \cite{wang2023helpsteer}) is almost the ``optimal'' to validate our algorithm, as it contains $5$ objectives, whereas most other existing datasets have no more than $3$. This allows a more realistic simulation of how algorithms perform with multiple potentially conflicting objectives. Furthermore, it is well-known and widely adopted within the community, reflecting its high quality. \textbf{Model}: We consider an MLP as our reward model. It not only performs well during the learning process (according to the loss values) but also requires relatively short running time. Therefore, we argue that this MLP model is well-suited for our simulation. \textbf{Baselines}: Finally, the baselines we select are state-of-the-art methods in MTBL, while other approaches lack theoretical convergence guarantees. Having said that, we also compare our algorithm with some bilevel algorithms for completeness. Specifically, we consider ITD \citep{ji2021bilevel} and SOBA \citep{dagreou2022framework} with linear scalarization as our baselines (note that it's nontrivial to extend their approaches with WC method). Again, we set $\lambda$ as $\lambda_s = 0.96$ for some $s \in [S]$ and $\lambda_{s'} = 0.01$, $\forall s'\ne s$ to evaluate their capabilities in exploring the Pareto front. \Cref{fig:RM-BLO} compares our WC-Penalty Algorithm with bilevel algorithms. In particular, we highlight the following two points: 1) The LS method fails to guarantee a full exploration of the Pareto front in this highly nonconvex scenario, while our algorithm excels at covering a larger portion of the Pareto front, further validating our theoretical analysis. 2) The explorations of the two bilevel algorithms are ``irregular'' and do not reveal the relationships between different objectives. In contrast, our algorithm provides rational guidance in exploring the Pareto front, as demonstrated in \Cref{fig:RM-explore,fig:RM-morepref}.

\subsection{Data Weighting in Multi-Objective LLM Alignment}\label{sec:app-exp2}

\textbf{1) Detailed Setup.}

\textbf{Overview.} In the Large Language Model (LLM) Alignment task, our goal is to align a pretrained LLM with human preferences. Instead of relying on a reward model to guide the LLM, we directly utilize the prompt-response data to finetune the language model. In this section, we introduce our data weighting task for multi-objective LLM alignment. Similarly, given that 1) multiplex human preferences necessitate the multi-task formulation, and 2) the data weighting task is commonly framed as a bilevel problem, this problem can naturally be expressed as an MTBL problem.

\textbf{Training.} In the training process, there are $N$ training sets $\mathcal{T}_1, \dotsc, \mathcal{T}_N$, where each $\mathcal{T}_n, n\in[N]$ contains $|\mathcal{T}_n|$ prompt-response pairs $(p_{n,i}, r_{p,i}), i=1, \dotsc, |\mathcal{T}_n|$. Each $\mathcal{T}_n, n\in[N]$ represents the conversation pairs aligned with one human metric, but may perform poorly in other directions. However, the focus of each dataset is typically unknown in practice. Hence, our goal is to assign an appropriate weight for each dataset, ensuring that the LLM performs well across all metrics. To this end, we consider a weight vector $x=(x_1, \dotsc, x_N)^\top$, where each element corresponds to a training set. These weights are normalized using a SoftMax function. We denote the parameter of the base LLM as $y$, then it is a function of the weight $x$.

\textbf{Validation.} The trained weight $x$ is evaluated during the validation process, where $M$ validation sets $\mathcal{V}_1, \dotsc, \mathcal{V}_M$ are considered. Each $\mathcal{V}_m, m\in[M]$ contains $|\mathcal{V}_m|$ prompt-response pairs $\{p_{m,j}, r_{m,j}\}, j=1, \dotsc, |\mathcal{V}_m|$. Similarly, each validation set represents high-quality conversation pairs based on a specific and unique criterion, such as \textit{helpfulness}, \textit{correctness}, or \textit{verbosity}. In real-world scenarios, these metrics may not align with those used in the training sets ($M$ may be not equal to $N$) and can often be inaccessible. The overall goal is to finetune the pre-trained LLM, i.e., $y$, such that the validation loss for all metrics is minimized in Pareto sense.

\textbf{Formulation and Setup.} Based on the previous introduction, we can formally model the problem as follows:
\begin{equation*}
    \begin{aligned}
        & \min_{x,y} \left(\sum_{j=1}^{|\mathcal{V}_1|}\mathcal{L}(\tilde{r}_{1, j}(p_{1, j}; y(x)), r_{1, j}), \dotsc, \sum_{j=1}^{|\mathcal{V}_M|}\mathcal{L}(\tilde{r}_{M, j}(p_{M, j}; y(x)), r_{M, j})\right) \\
        & \text{s.t. } y(x) \in \arg\min_y \sum_{n=1}^N\frac{\exp(x_n)}{\sum_{n'}\exp(x_{n'})} \sum_{i=1}^{|\mathcal{T}_n|}\mathcal{L}(\tilde{r}_{n, i}(p_{n, i}; y), r_{n, i}),
    \end{aligned}
\end{equation*}
where $\mathcal{L}$, set to cross-entropy in this task, measures the difference between the generated response $\tilde{r}$ and the given response $r$. To incorporate more objectives, we continue to use HelpSteer \citep{wang2023helpsteer} as our base dataset. However, HelpSteer does not provide separate datasets for each individual criterion. Hence, we construct the training and validation datasets as follows. For training sets, we calculate the average score $\bar{s}$ across the five metrics for each prompt-response pair, and consider it to construct $\mathcal{T}_1$ for $\bar{s}\ge 2.5$ and $\mathcal{T}_2$ for $\bar{s} \le 2$, respectively. In other words, we set $N=2$ to represent data with different quality levels. For validation sets, we assign a prompt-response pair to a criterion-specific dataset if its corresponding score for that criterion is at least 3 (with scores ranging from $\{0,1,2,3,4\}$). Besides, we consider all $5$ validation sets, i.e., $M=5$, each corresponding to a different evaluation criterion: \textit{helpfulness}, \textit{correctness}, \textit{coherence}, \textit{complexity}, and \textit{verbosity}, respectively.

We use Llama-3.2-1B-Instruct \citep{Llama3.2-1B-Instruct} as our pretrained LLM, and apply the LoRA technique with a rank of $8$. For the parameters, we set batch size to $32$, learning rate to $\eta = 10^{-5}$ and run the algorithm for $T=3,000$ iterations. We also set different preference vectors $\lambda\in\Delta_5^{++}$ to explore the Pareto front. For the baselines with a double-loop structure, the inner-loop iteration is set to $40$. Each experiment is repeated for $5$ times. All numerical experiments were conducted on a cluster of 4 NVIDIA H100 GPUs (94GB each) using PyTorch's DistributedDataParallel.

\textbf{2) Additional Numerical Results.}

Similarly, we provide more numerical results on this data weighting in LLM alignment task along with discussions in this subsection.

\texttt{1. Pareto Exploration.}
\begin{figure}[t]
    \centering
    \begin{subfigure}[b]{0.48\textwidth}
        \centering
        \includegraphics[width=\linewidth]{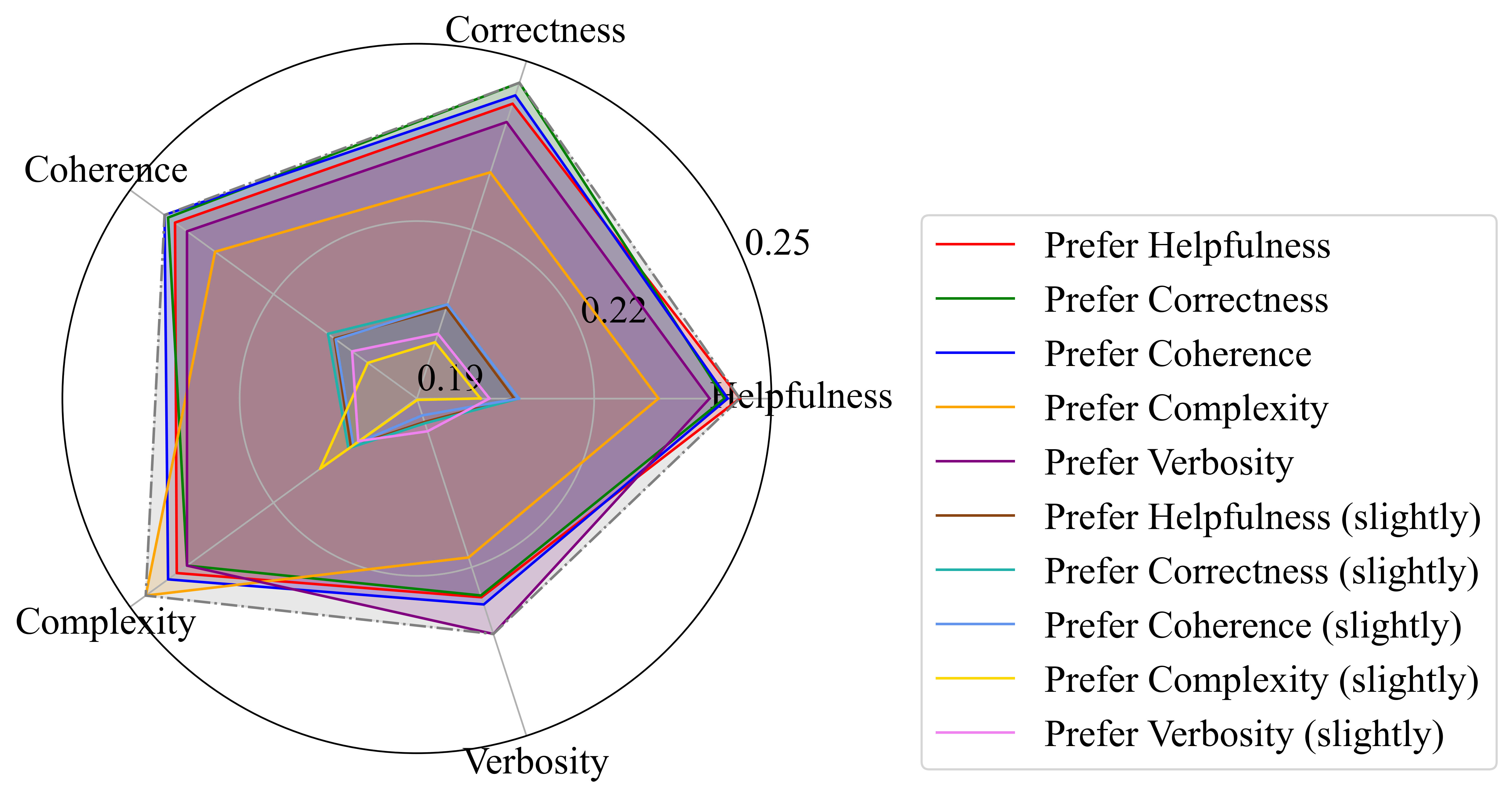}
        \caption{Exploration with more preferences.}\label{fig:LLM-morepref}
    \end{subfigure}
    \hfill
    \begin{subfigure}[b]{0.4\textwidth}
        \centering
        \includegraphics[width=\linewidth]{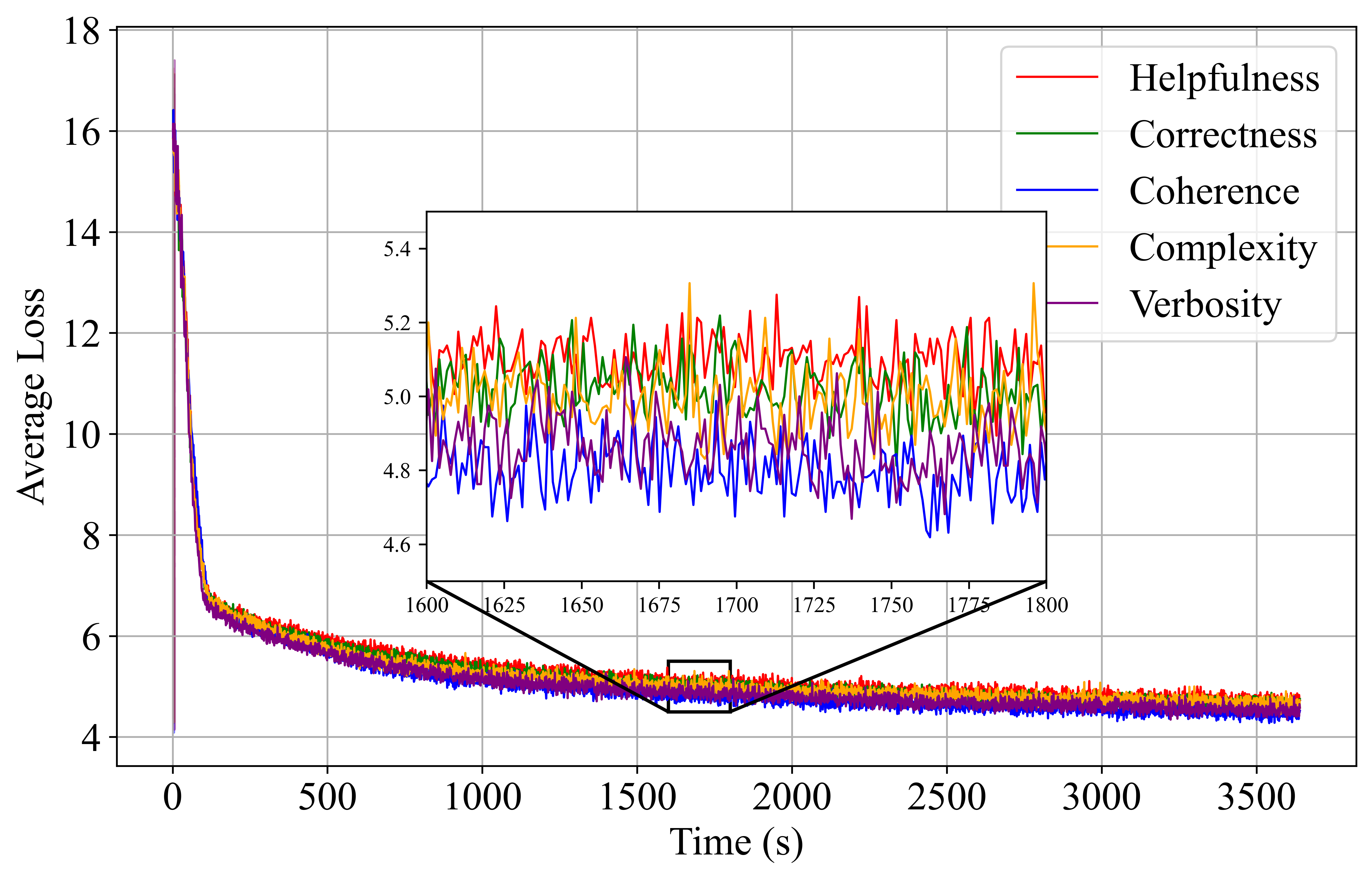}
        \caption{Different objectives in Alg.~\ref{alg:WC-Penalty}.}\label{fig:LLM-convergence}
    \end{subfigure}
    \caption{Additional results for Pareto exploration.}\label{fig:LLM-app-explore}
\end{figure}

\cref{fig:LLM-app-explore} presents additional numerical results on Pareto exploration. In \Cref{fig:LLM-morepref}, ``slightly prefer'' refers to selecting $\lambda_s = 0.84$ for some $s$ and $\lambda_{s'} = 0.04$ for $s'\neq s$. While these preferences do not yield improved performance, they still exhibit regular Pareto exploration behavior, as the loss on the focused objective remains relatively small.

\Cref{fig:LLM-convergence} illustrates the convergence performances of different objectives in \Cref{alg:WC-Penalty} when selecting $\lambda = [0.01, 0.01, 0.01, 0.01, 0.96]^\top$ (i.e., under preference ``prefer verbosity''). The loss is averaged over $5$ trials, and the standard error bars are also included. Notably, the preferred objective, \textit{verbosity}, achieves relatively better performance, which aligns with the results shown in \Cref{fig:LLM-morepref}.

\begin{figure}[t]
    \centering
    \vspace{-1em}
    \begin{subfigure}[b]{0.42\textwidth}
        \centering
        \includegraphics[width=\linewidth]{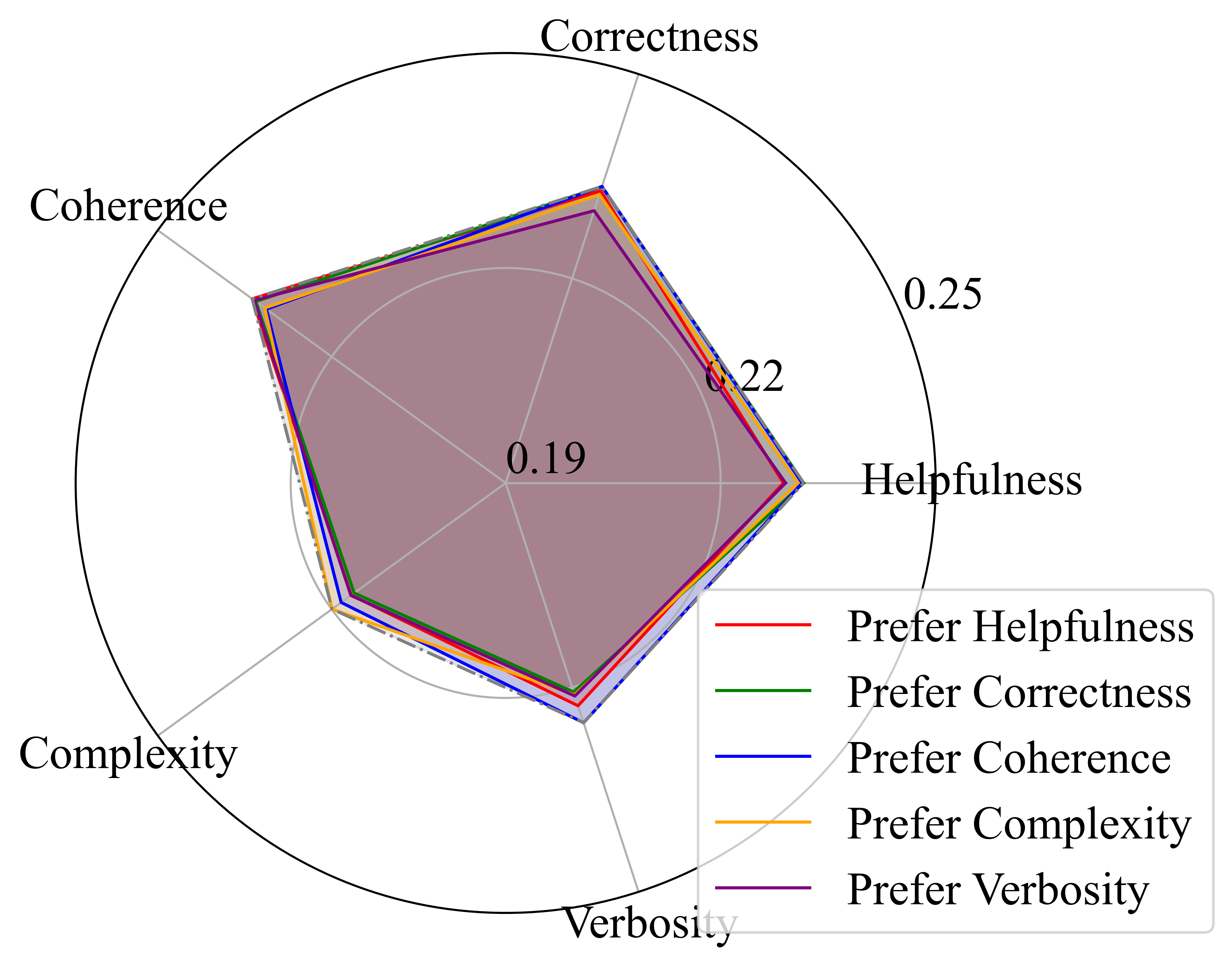}
        \caption{SOBA with LS.}\label{fig:LLM-SOBA}
    \end{subfigure}
    \hfill
    \begin{subfigure}[b]{0.42\textwidth}
        \centering
        \includegraphics[width=\linewidth]{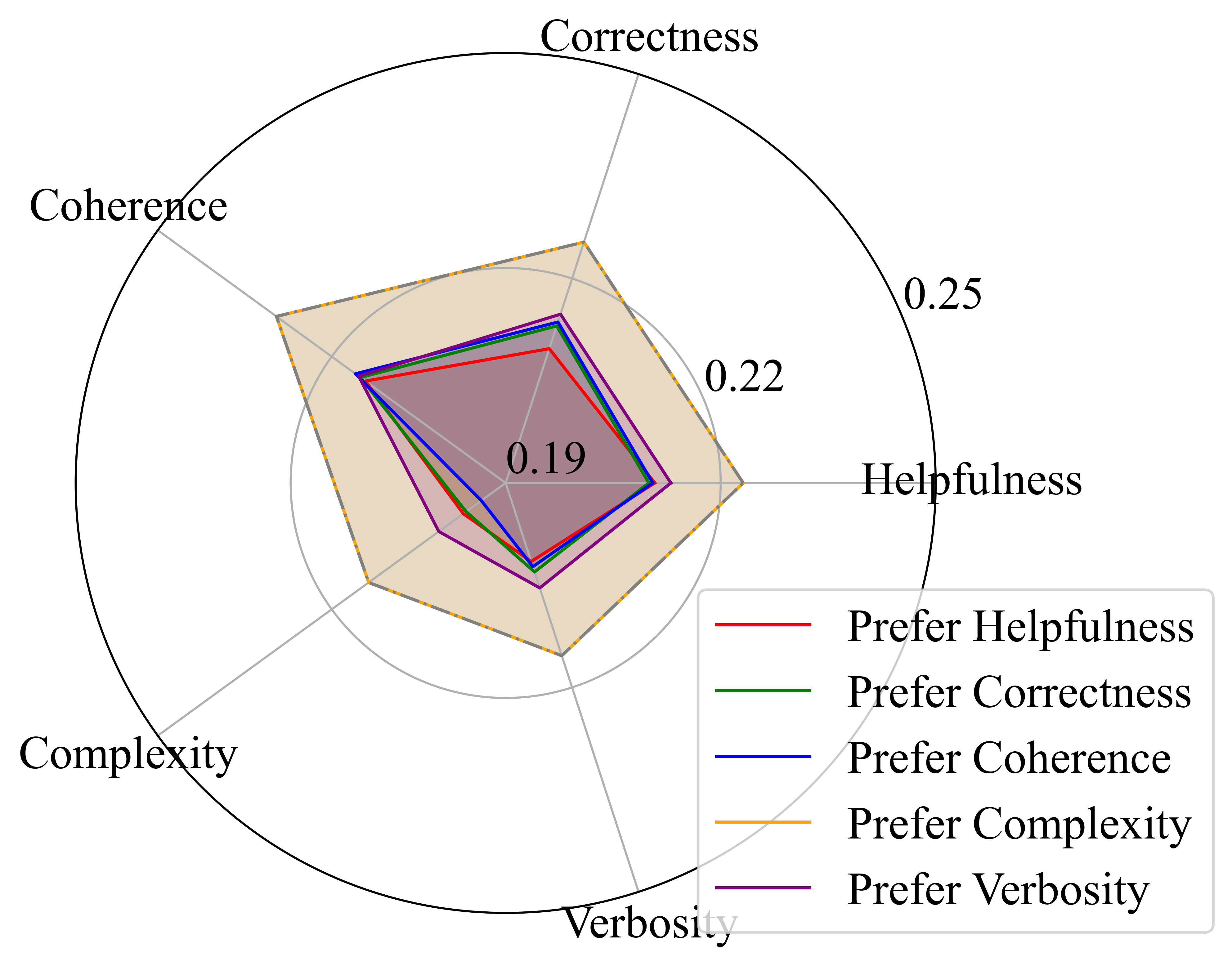}
        \caption{ITD with LS.}\label{fig:LLM-ITD}
    \end{subfigure}
    \caption{Additional results on bilevel algorithms.}\label{fig:LLM-bilevel-LS}
\end{figure}

We also provide how bilevel algorithms \citep{dagreou2022framework,ji2021bilevel} explore the Pareto front using linear scalarization technique in \Cref{fig:LLM-bilevel-LS}. The basic setup remains the same: we set $\lambda_s = 0.96$ for some $s$ and $\lambda_{s'}=0.01$ for all other $s'$. Notably, while both algorithms still exhibit some exploration behaviors with different preference vectors, this exploration is highly irregular. In other words, when certain objectives are preferred, the relative performance may not be dominant. This irregularity stems from the highly nonconvex nature of the LLM alignment problem, where the neural networks, with billions of parameters and highly nonlinear operations, can take unpredictable forms, rendering the linear scalarization method ineffective.

\texttt{2. MTBL Baselines and Discussions.}

\begin{figure}[t]
    \centering
    \begin{subfigure}[b]{0.42\textwidth}
        \centering
        \includegraphics[width=\linewidth]{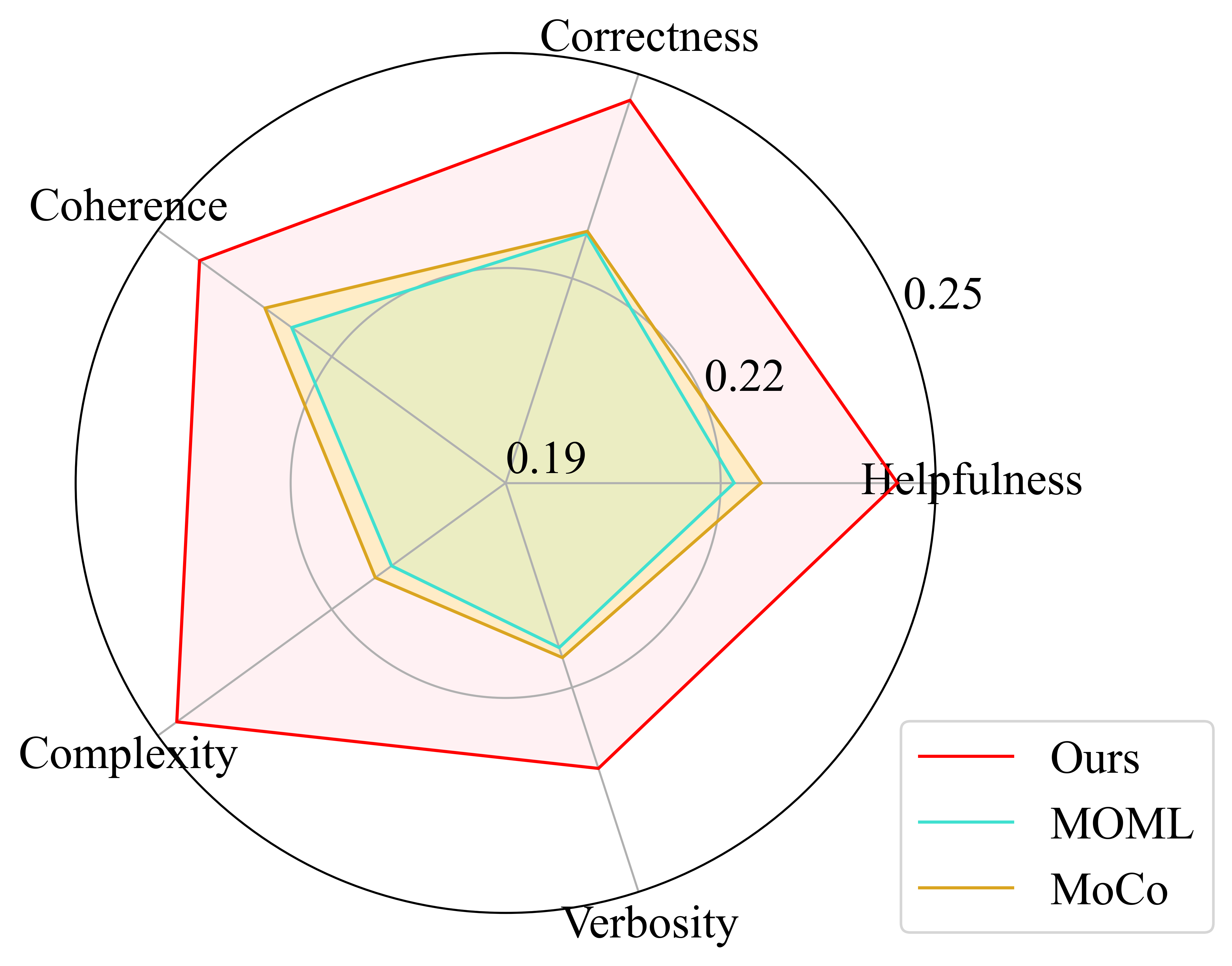}
        \caption{Comparison with MTBL baselines.}\label{fig:LLM-MTBL-compare}
    \end{subfigure}
    \hfill
    \begin{subfigure}[b]{0.46\textwidth}
        \centering
        \includegraphics[width=\linewidth]{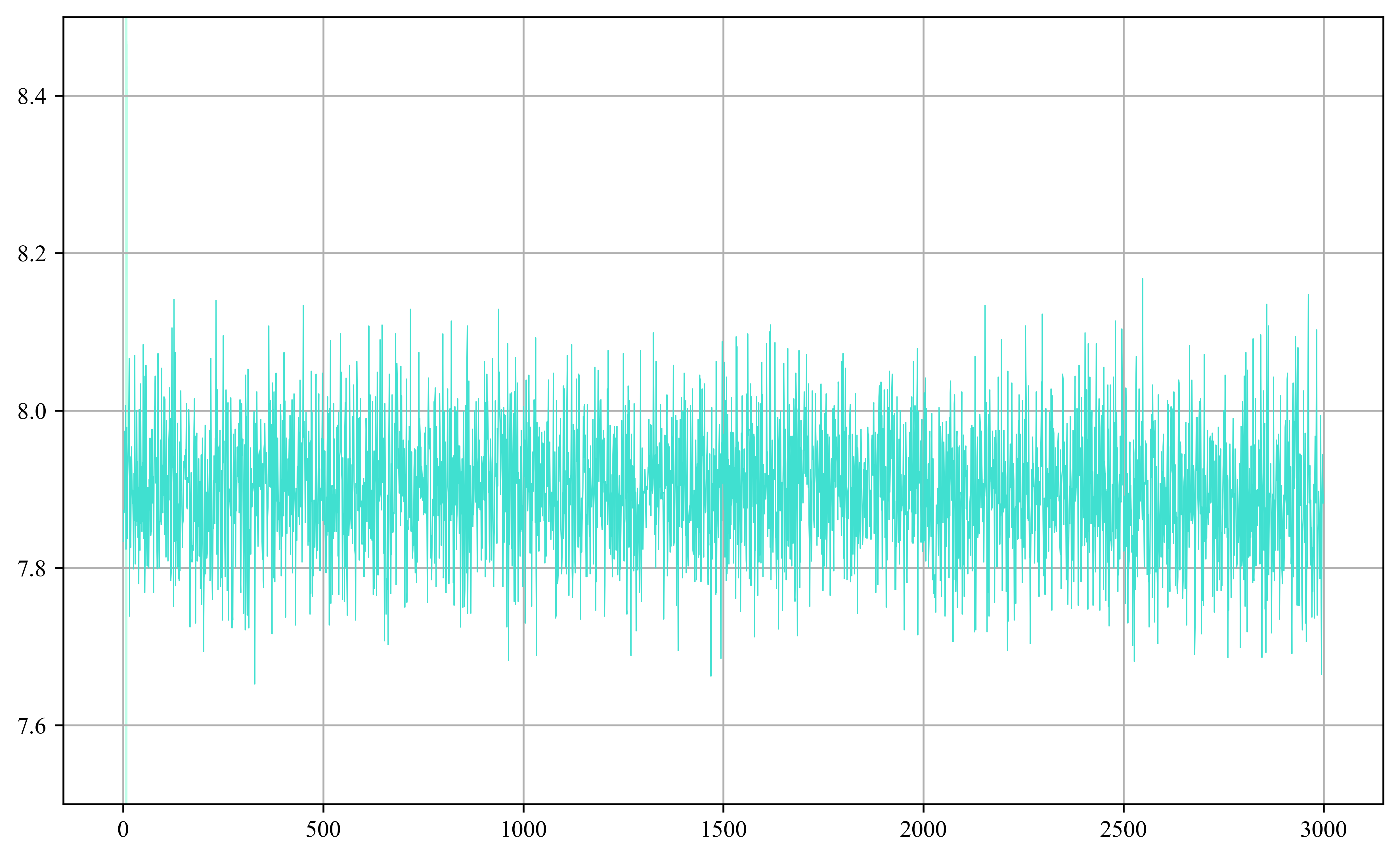}
        \caption{Loss of FORUM algorithm.}\label{fig:LLM-ye24}
    \end{subfigure}
    \caption{Additional results on MTBL algorithms.}\label{fig:LLM-app-MTBL}
\end{figure}

We also consider the aforementioned MTBL algorithms \citep{ye2021multi,fernando2023mitigating,ye2024first} as our baselines in \Cref{fig:LLM-app-MTBL}. Specifically, our algorithm still outperforms in Pareto exploration when compared with MOML and MoCo algorithms, since a larger portion of Pareto front is covered by our approach, as demonstrated in \Cref{fig:LLM-MTBL-compare}. The reason we do not include the FORUM algorithm here lies in its impractical memory cost in large scale problems. As mentioned in the setup, we set the inner-loop iterations (if applicable) as $40$ for every algorithm. Nevertheless, this leads to ``CUDA out of memory'' error when implementing the FORUM algorithm, since 1) its workflows are overly complicated, and 2) its maintained values are extremely memory-consuming. In fact, in our GPUs with 94GB of memory each, the maximum number of inner-loop rounds for FORUM without causing an overflow is $2$, which results in the performance shown in \Cref{fig:LLM-ye24}. Obviously, the validation loss does not decrease over time, thus, we exclude it from \Cref{fig:LLM-MTBL-compare}.

Finally, we also claim the rationale behind this experiment. \textbf{Dataset:} We still use HelpSteer as the basic dataset because it contains $5$ potentially conflicting objectives, allowing us to intuitively demonstrate the performances on Pareto exploration. \textbf{Model:} The LLM model employed here is Llama-3.2-1B-Instruct, which has proven to generate reasonable responses and is relatively efficient to train. \textbf{Baselines:} For completeness, we consider both MTBL algorithms and bilevel algorithms with linear scalarization as baselines.

\vspace{1em}
\texttt{3. Larger-Scale Numerical Experiments and Results.}

In order to further validate the capability of our \Cref{alg:WC-Penalty} in large-scale problems, we enlarge the pretrained LLM model from \textbf{Llama-3.2-1B-Instruct} to \textbf{Llama-3.2-3B-Instruct} and \textbf{Llama-3.1-8B-Instruct} in this subsection.

\begin{figure}[t]
    \centering
    \vspace{-1em}
    \begin{subfigure}[b]{0.32\textwidth}
        \centering
        \includegraphics[width=\linewidth]{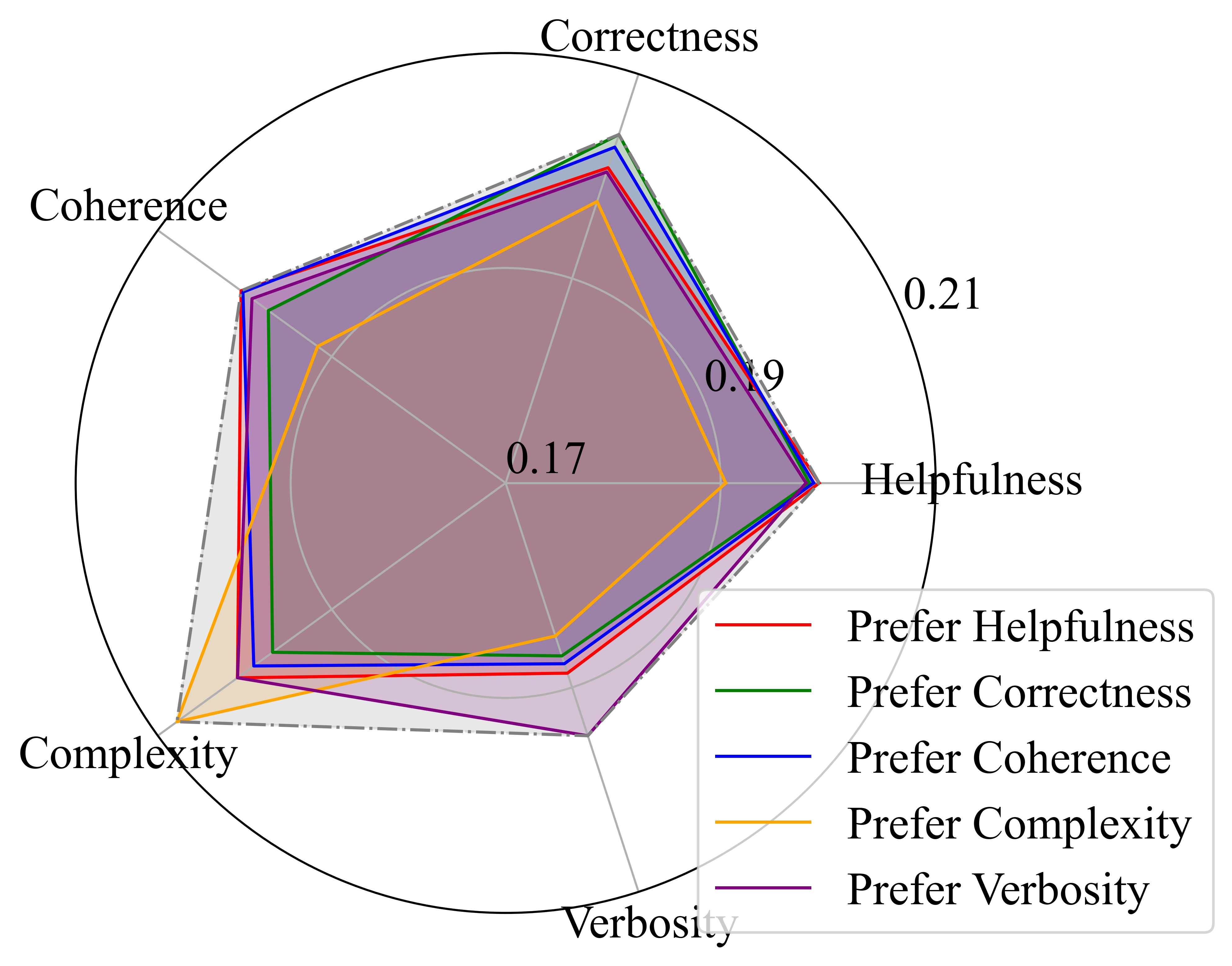}
        \caption{Pareto exploration (3B).}\label{fig:LLM-3B-explore}
    \end{subfigure}
    \begin{subfigure}[b]{0.32\textwidth}
        \centering
        \includegraphics[width=\linewidth]{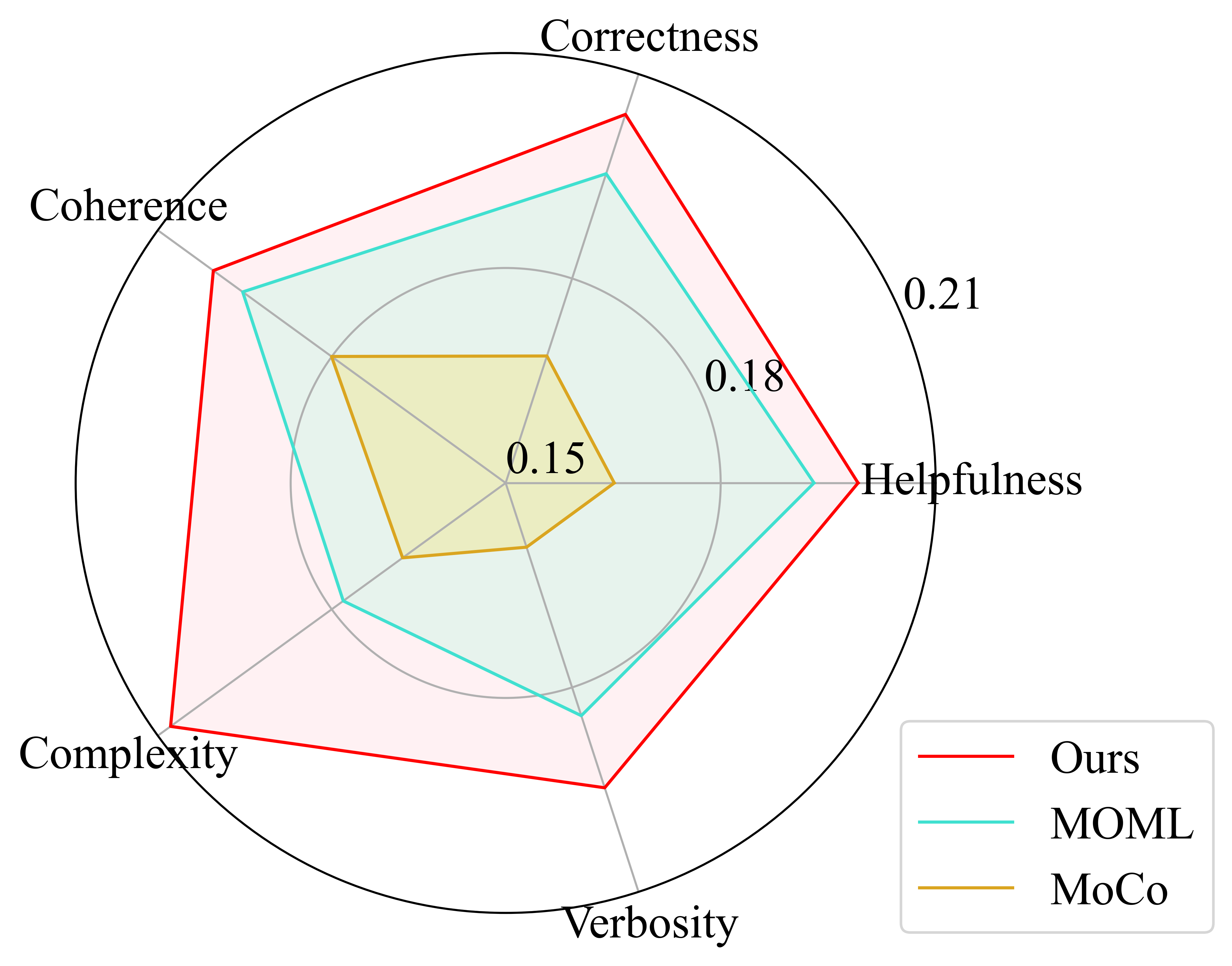}
        \caption{Baseline comparison (3B).}\label{fig:LLM-3B-compare}
    \end{subfigure}
    \begin{subfigure}[b]{0.32\textwidth}
        \centering
        \includegraphics[width=\linewidth]{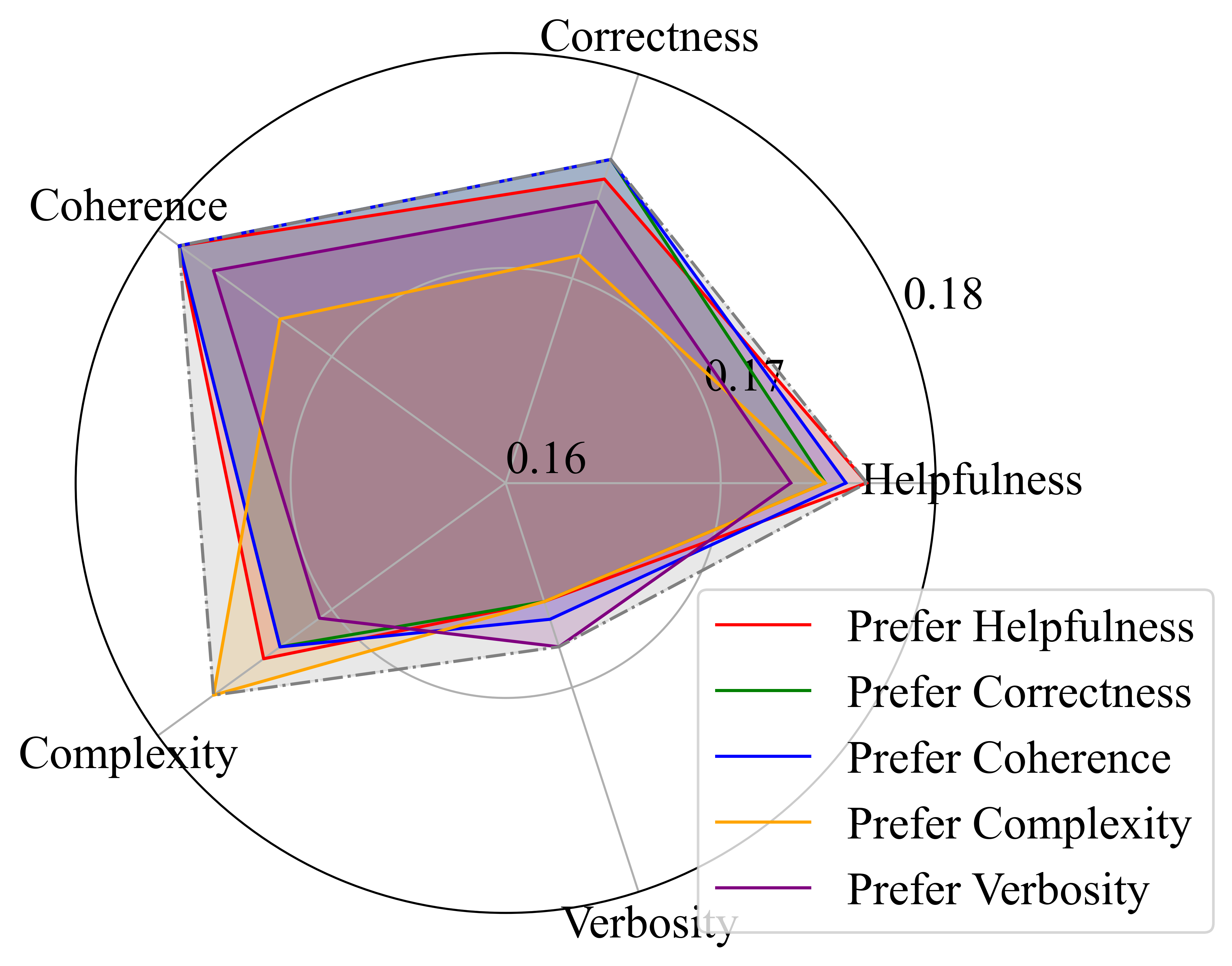}
        \caption{Pareto exploration (8B).}\label{fig:LLM-8B-explore}
    \end{subfigure}
    \caption{Data weighting task in larger-scale (3B \& 8B) LLM alignment.}\label{fig:LLM-3B8B}
    \vspace{-1.5em}
\end{figure}
In \Cref{fig:LLM-3B8B}, we set the preference vector $\lambda$ as $\lambda_s = 0.96$ for some $s \in [S]$ and $\lambda_{s'} = 0.01$, $\forall s'\ne s$, using $1/\text{loss}$ as our metric for each objective.
Specifically, as shown in \Cref{fig:LLM-3B-explore}, by varying the preference vectors, \Cref{alg:WC-Penalty} can efficiently explore a diverse set of Pareto stationary solutions, enabling our algorithm to recover a large portion of the Pareto front.
Also, \Cref{fig:LLM-3B-compare} further demonstrates that our proposed algorithm outperforms existing methods in recovering the Pareto front, highlighting its effectiveness in Pareto front exploration.
Furthermore, in the 8B model, our algorithm consistently demonstrates its ability to perform systematic Pareto exploration, as shown in \Cref{fig:LLM-8B-explore}.
All of these numerical results further confirm the excellent scalability of our developed algorithm.

\begin{table}[h]
    \centering
    \caption{Hypervolume results in larger-scale (3B) LLM alignment.}
    \vspace{-0.75em}
    \label{tab:hv-3B}
    \resizebox{0.8\textwidth}{!}{
    \begin{tabular}{lcccccccc}
        \toprule
        Alg. & \textbf{Ours} & Helpfulness & Correctness & Coherence & Complexity & Verbosity & MOML & MoCo \\
        \midrule
        HV ($\uparrow$) & \textbf{4.87e0} & 3.53e0 & 3.04e0 & 3.44e0 & 2.43e0 & 3.67e0 & 8.48e-1 & 8.49e-4 \\
        \bottomrule
    \end{tabular}
    }
\end{table}

\begin{wrapfigure}{r}{0.5\textwidth}
    \vspace{-0.15in}
    \centering
    \includegraphics[width=1\linewidth]{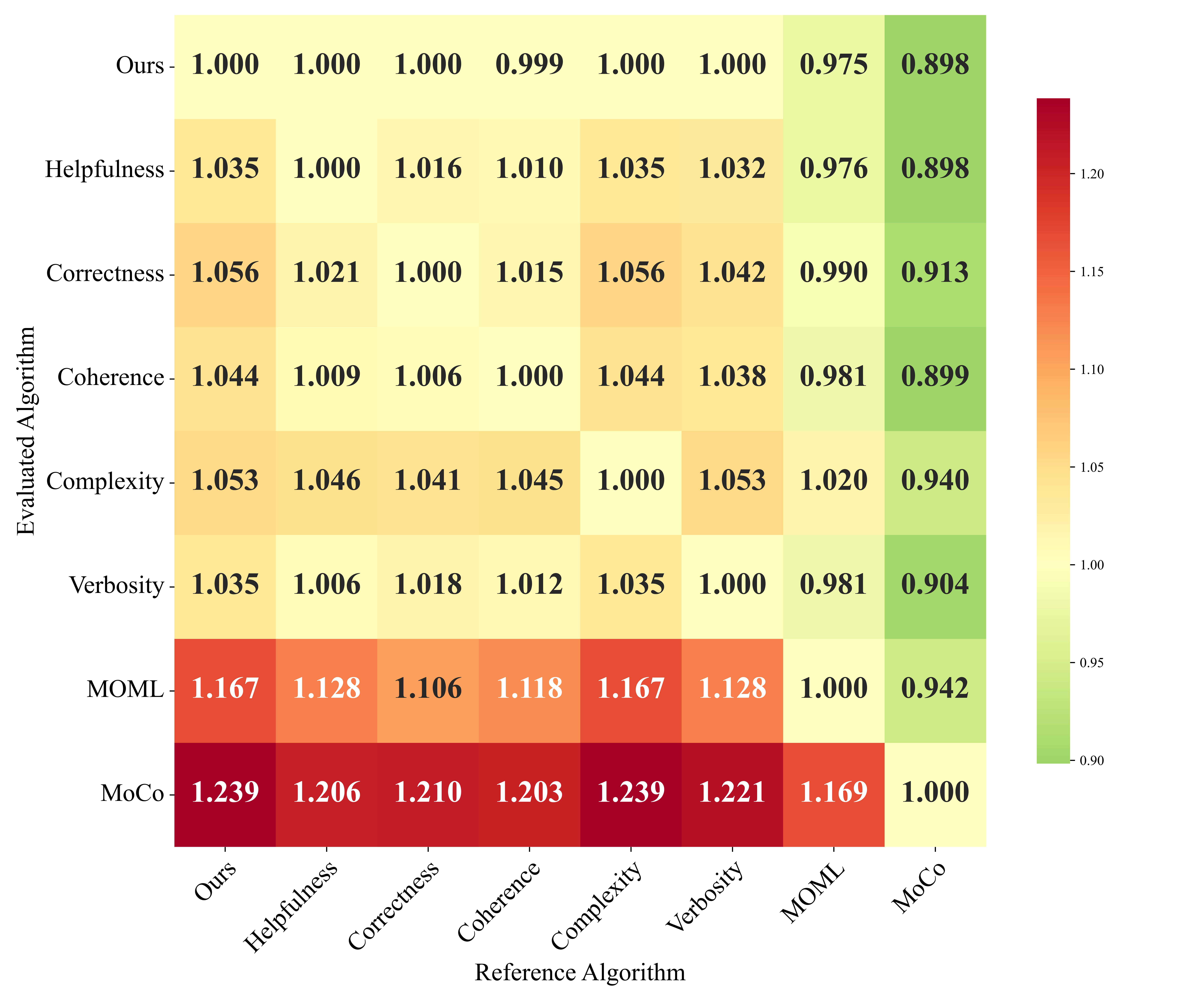}
    \vspace{-.2in}
    \caption{$\epsilon$-metric.}
    \label{fig:epsilon-metric-3B}
    \vspace{-0.5in}
\end{wrapfigure}
Moreover, we also compare our \Cref{alg:WC-Penalty} with MTBL baselines \citep{ye2021multi,fernando2023mitigating} with two important metrics, hypervolume and $\epsilon$-metric. \Cref{tab:hv-3B} demonstrates that our algorithm dramatically outperforms the baselines even before completing full Pareto exploration (labeled as Helpfulness, etc.) in terms of hypervolume, and the Pareto exploration still leads to better performances. Moreover, \Cref{fig:epsilon-metric-3B} further confirms that, in terms of $\epsilon$-metric: 1) our method consistently outperforms the baselines, and 2) with varying preference vectors, our method converges to the desired solutions. This is consistent with our theoretical analysis and the previous numerical results.

\subsection{Multi-Task Meta-Learning Task}\label{sec:app-exp3}

\textbf{1) Experimental Setup.}

\textbf{Overview.} We consider a multi-task meta-learning problem \citep{ye2021multi,ji2021bilevel,qin2025duet}, where the goal is to train a single model capable of addressing multiple tasks within the MTBL framework. This task is particularly useful for handling heterogeneous datasets using a relatively small-scale model. Specifically, the training process corresponds to our lower-level problem, where the model is expected to develop a universal representation capability. Conversely, the validation process corresponds to the upper-level problem, which aims to balance the trade-offs among multiple potentially conflicting tasks. The overall objective is to enable the model to achieve superior performance in the Pareto sense.

\textbf{Detailed formulation.} We construct five heterogeneous MNIST tasks by assigning each task a distinct digit pair, resulting in different class distributions across tasks. In particular, for each task $s\in\{1,\dots,5\}$, we create a mixed training subset $\mathcal{T}_s$ containing $80\%$ samples from its specific digit pair $(0,1), (2,3), (4,5), (6,7),$ or $(8,9)$ and $20\%$ from the remaining digits. The validation subsets $\mathcal{V}_s$ are constructed analogously from the MNIST test split using the same five digit-pair tasks.

Our model consists of a shared multi-layer perceptron (MLP) parameterized by $x$ and a final linear classifier parameterized by $y$. Each image is flattened and passed through two fully connected layers of width $512$ with $\mathsf{ReLU}$ activation functions, followed by a linear layer producing a $256$-dimensional representation. The the classifier maps this representation to $10$ logits. 

The problem is formulated as follows:
\begin{align*}
    &\min_{x,y} F(x, y) = \Big(\sum_{d\in\mathcal{V}_1}\mathcal{L}(\mathcal{NN}(d;x,y),l(d)),\dots,\sum_{d\in\mathcal{V}_5}\mathcal{L}(\mathcal{NN}(d;x,y),l(d))\Big)^\top \\
    \text{s.t. } & y(x)\in\arg\min_y g(x,y)=\sum_{s=1}^5\sum_{d\in\mathcal{T}_s}\mathcal{L}(\mathcal{NN}(d;x,y),l(d)),
\end{align*}
where $\mathcal{L}$ denotes the cross-entropy loss, $d$ denotes the digit in the dataset, $\mathcal{NN}(\cdot)$ denotes the output of our MLP model, and $l(d)$ denotes the label of $d$. Note that the cross-entropy loss is a convex function, and the parameter $y$ represents a linear layer. Therefore, the lower-level function $g(x,y)$ satisfies the LLGC condition with respective to $y$.

\begin{figure}[t]
    \centering
    \begin{minipage}[b]{0.66\textwidth}
        \centering
        \begin{subfigure}[b]{0.54\textwidth}
            \centering
            \includegraphics[width=\linewidth]{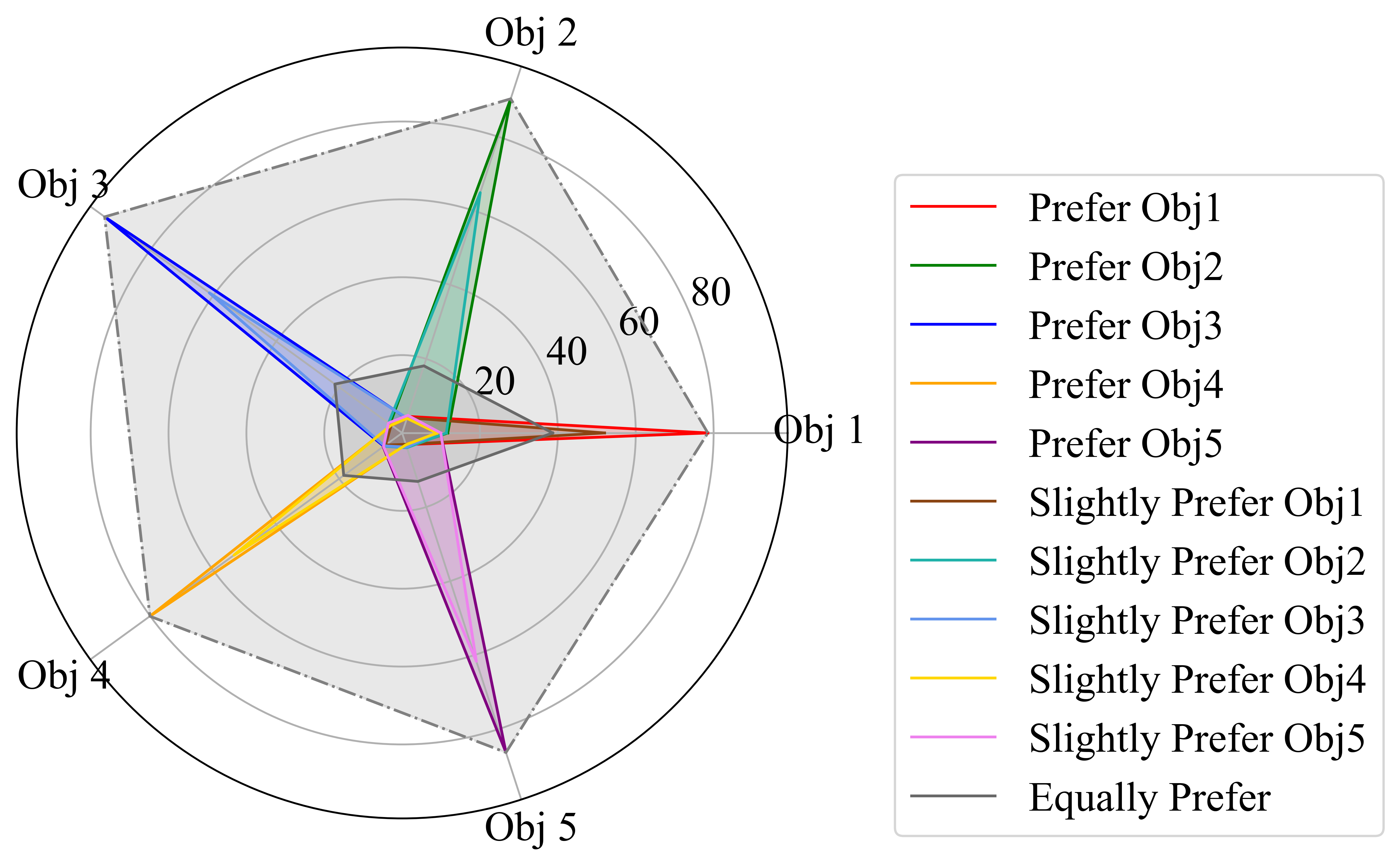}
            \caption{Pareto Exploration.}\label{fig:ml-explore}
        \end{subfigure}
        \hfill
        \begin{subfigure}[b]{0.44\textwidth}
            \centering
            \includegraphics[width=\linewidth]{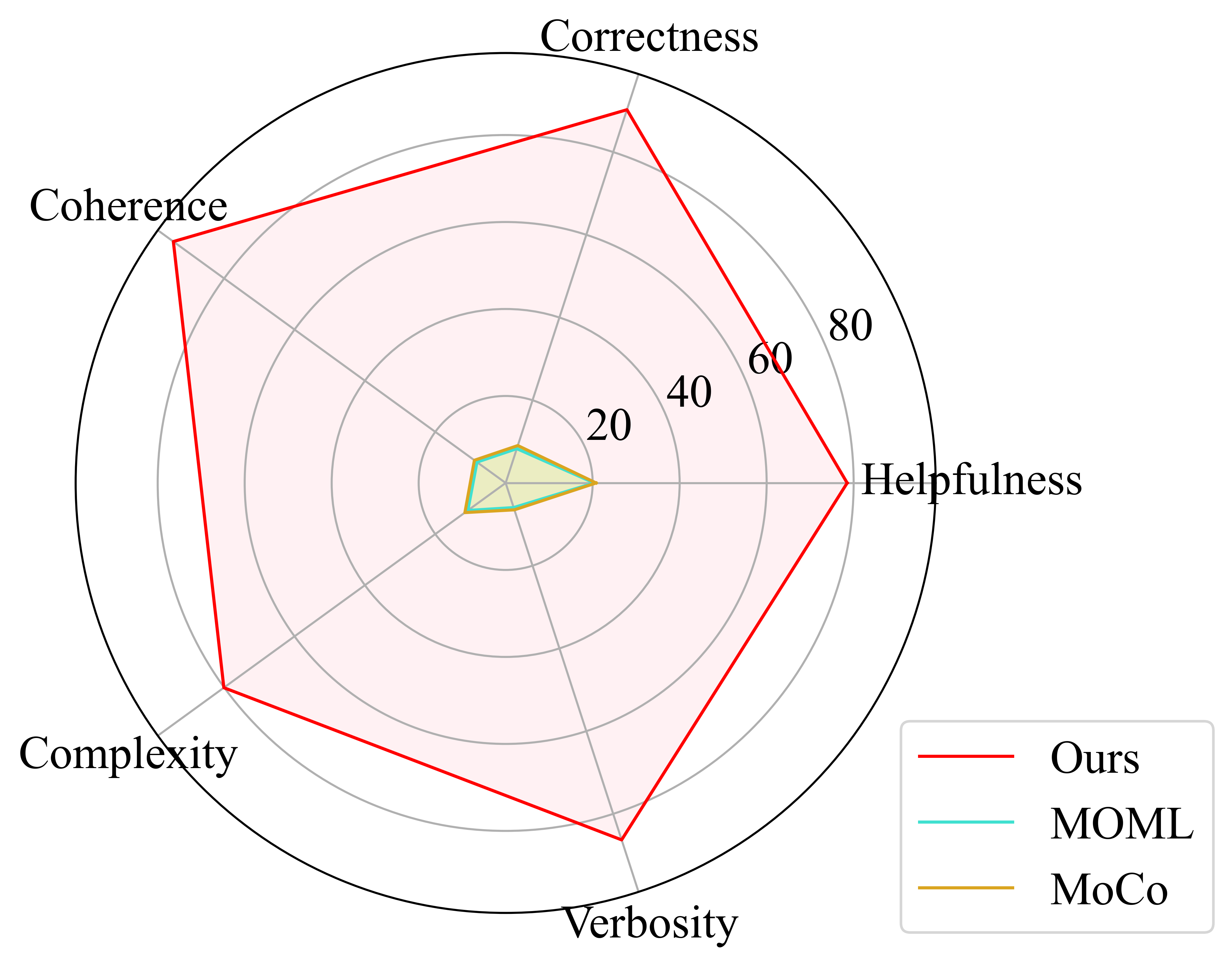}
            \caption{Comparison.}\label{fig:ml-compare}
        \end{subfigure}
        \caption{Results on multi-task meta-learning.}\label{fig:ml}
    \end{minipage}
    \hfill
    \begin{minipage}[b]{0.32\textwidth}
        \centering
        \includegraphics[width=\linewidth]{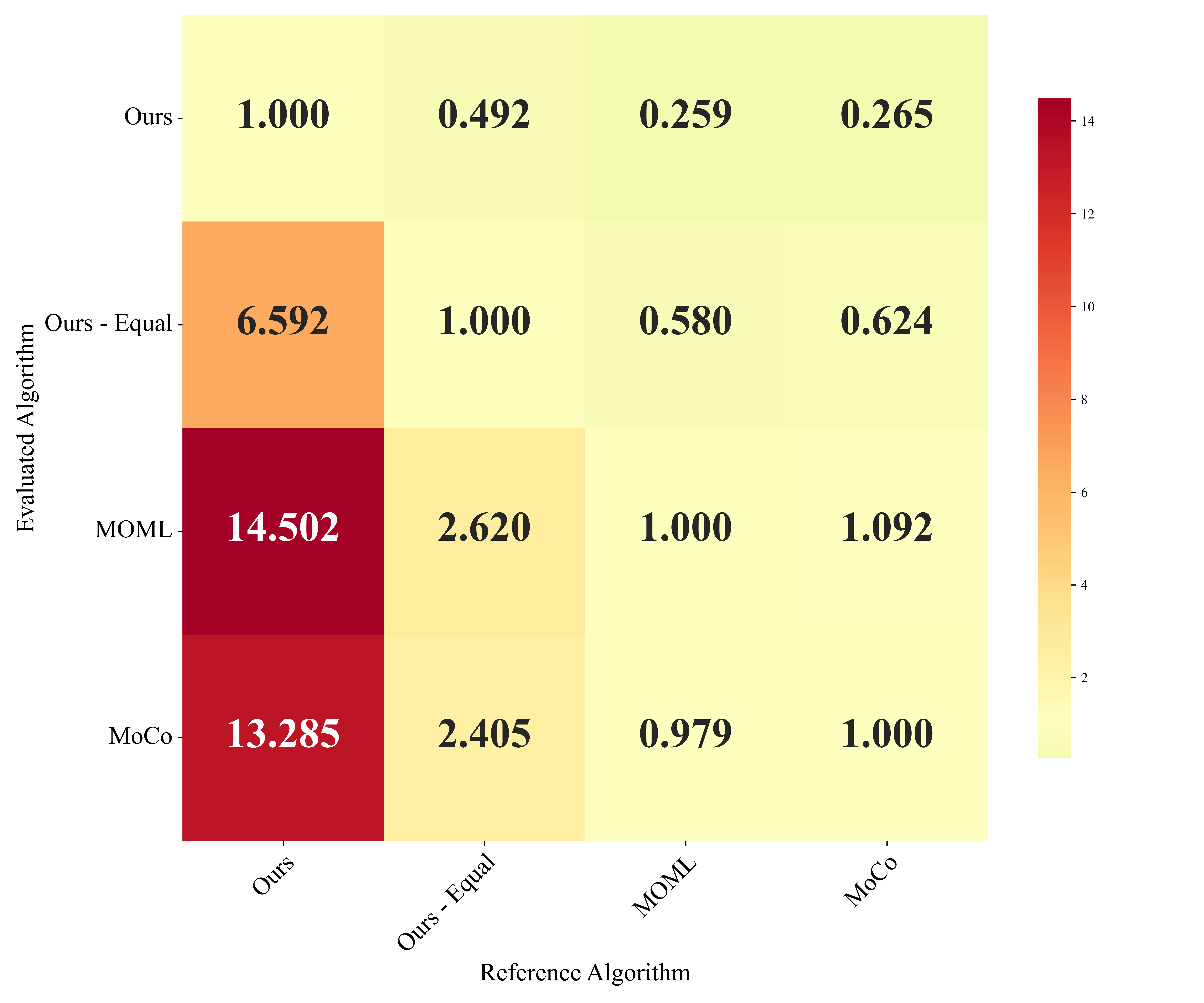}
        \caption{$\epsilon$-metric in meta-learning.}
        \label{fig:epsilon-metric-ml}
    \end{minipage}
\end{figure}

\textbf{2) Numerical Results.}

\Cref{fig:ml} demonstrates the effectiveness of our \Cref{alg:WC-Penalty} in Pareto exploration and its superior performance compared to baselines. Specifically, in \Cref{fig:ml-explore}, in addition to the preference vectors used in the previous subsections, we also include the ``Equally Prefer'' preference, where $\lambda = [0.2, 0.2, 0.2, 0.2, 0.2]^\top$. The numerical results once again confirm the Pareto exploration capability of out approach, allowing it to effectively balance the trade-offs among multiple meta-learning tasks. Additionally, \Cref{fig:ml-compare} shows that our algorithm outperforms the baselines in this environment.

\begin{table}[h]
    \centering
    \caption{Hypervolume results in multi-task meta-learning.}
    \vspace{-0.75em}
    \label{tab:hv-ml}
    \resizebox{0.4\textwidth}{!}{
    \begin{tabular}{lcccc}
        \toprule
        Alg. & \textbf{Ours} & Ours - Equal & MOML & MoCo \\
        \midrule
        HV ($\uparrow$) & \textbf{1.14e-3} & 5.10e-4 & 1.19e-4 & 1.51e-4 \\
        \bottomrule
    \end{tabular}
    }
\end{table}


In addition, we compare our \Cref{alg:WC-Penalty} with MTBL baselines using two important metrics, hypervolume and $\epsilon$-metric as well. \Cref{tab:hv-ml} demonstrates that our method outperforms the baselines even before completing full Pareto exploration (which is the result under the ``Equally Prefer'' preference vector selection, and is labeled as ``Ours - Equal''), and as the preferences vary, the hypervolume (labeled as \textbf{Ours}) is significantly larger than that of the baselines. Moreover, \Cref{fig:epsilon-metric-ml} further confirms that, in terms of $\epsilon$-metric: 1) our method consistently outperforms the baselines, and 2) with varying preference vectors, our method converges to the desired solutions.

\section{More Discussions on KKT System}\label{sec:app-KKT}

\textbf{Part A:} To demonstrate that our KKT system defined in \Cref{def:KKT-system} is rational, we first prove that: the KKT condition introduced in \Cref{sec:app:KKT-condition} holds if and only if $\mathcal{K}(\rho,z,\omega,\nu,\lambda) = 0$.

\begin{proof}
    We prove that both directions are correct.

    $(\implies)$

    We assume that KKT condition holds. Then, the stationary condition with respect to $\rho$ and $z$ exactly implies that the first two terms in $\mathcal{K}(\rho,z,\omega,\nu,\lambda)$ are $0 \in\mathbb{R}$ and $\bf{0}\in\mathbb{R}^k$. Besides, $h(z)=0$ directly leads to the third term in $\mathcal{K}(\rho,z,\omega,\nu,\lambda)$ is $\bf{0}\in\mathbb{R}^q$. Finally, for any $s\in[S]$, according to $\lambda_sf_s(z) - \rho \le 0$, $\omega_s \ge 0$, and $\omega_s(\lambda_sf_s(z) - \rho) = 0$, we have: $\min\{\omega_s, \rho - \lambda_sf_s(z)\} = 0$. Combining these arguments, we know $\mathcal{K}(\rho,z,\omega,\nu,\lambda) = 0$.

    $(\impliedby)$

    We assume $\mathcal{K}(\rho,z,\omega,\nu,\lambda) = 0$, then it's obvious that the stationary condition and the primal feasible condition for equality constraints are satisfied. We mainly focus on the last term in the KKT system.
    
    For any $s\in[S]$, we have $\min\{\omega_s, \rho - \lambda_sf_s(z)\} = 0$. If $\omega_s \le \rho - \lambda_sf_s(z)$, then we can get: $\omega_s=0$ and $\rho \ge \lambda_sf_s(z)$. If $\omega_s > \rho - \lambda_sf_s(z)$, then we can get: $\rho - \lambda_sf_s(z) = 0$ and $\omega_s>0$. Both scenario guarantee that (1) primal feasible condition for inequality constraints, (2) dual feasible condition, and (3) complementary slackness condition are satisfied.
\end{proof}

\textbf{Part B:} Recall that, $\tilde{z}$ is an $\epsilon$-Pareto stationary solution of \ref{eq:ECMO} if and only if there exist some $\rho\in\mathbb{R},\omega\in\mathbb{R}^S, \nu\in\mathbb{R}^q,\lambda\in\Delta_S^{++}$ such that $\|\mathcal{K}(\rho, z, \omega, \nu, \lambda)\|_2^2\le \epsilon$. According to this definition, when $\tilde{z}$ is $\epsilon$-Pareto stationary, we have:
\begin{equation*}
    |\min\{\omega_s, \rho - \lambda_sf_s(z)\}| \le \|\mathcal{K}(\rho, z, \omega, \nu, \lambda)\|_2^2\le \epsilon,
\end{equation*}
for any $s\in[S]$.

From Part A, the primal difficulty of the understanding stems from the term $\min\{\omega_s, \rho - \lambda_sf_s(z)\}$, which is distinct from the counterparts of original KKT conditions, while the correspondences of other parts are trivial. This raises a natural but nontrivial question: \textit{Can $|\min\{\omega_s, \rho - \lambda_sf_s(z)\}|\le \epsilon$ really imply the complementary slackness condition $\omega_s(\lambda_sf_s(z) - \rho)\approx 0$?}

Even if the ``accurate'' scenario is already proved in Part A, the answer to this question still remains unclear. The primary issue is that: even though the minimum one is sufficiently close to $0$, if the other one goes to infinity, then the multiplication of them cannot be directly concluded. To affirmatively answer this question, we prove the following proposition: In our \Cref{thm:WC-Penalty}, if $|\min\{\omega_{t,s}, \rho_t - \lambda_{s}f_s(z_t)\}|\le \epsilon$ holds for any $s\in[S]$, then both $\omega_{t,s}$ and $\lambda_{s}f_s(z_t) - \rho_t$ are bounded for any $s\in[S]$.

\begin{proof}
    The following arguments hold for any $s\in[S]$. If $\omega_{t,s} \le \rho_t - \lambda_{s}f_s(z_t)$, then $|\omega_{t,s}| \le \epsilon$. According to the selection of dual variables, we have:
    \begin{align*}
        & \omega_{t,s} = v(\lambda_{s}f_s(z_t) + \delta_{t,s} - \rho_t) \\
        \implies & |\rho_t - \lambda_{s}f_s(z_t)| \le |\lambda_{s}f_s(z_t) + \delta_{t,s} - \rho_t| + |\delta_{t,s}| \le \frac{\epsilon}{v} + |\delta_{t,s}|.
    \end{align*}
    If $\omega_{t,s} > \rho_t - \lambda_{s}f_s(z_t)$, then $|\rho_t - \lambda_{s}f_s(z_t)| \le \epsilon$. According to the selection of dual variables, we have:
    \begin{align*}
        & \omega_{t,s} = v(\lambda_{s}f_s(z_t) + \delta_{t,s} - \rho_t) \\
        \implies & |\omega_{t,s}| \le v|\lambda_{s}f_s(z_t) + \delta_{t,s} - \rho_t| + v|\delta_{t,s}| \le v\epsilon + v|\delta_{t,s}|.
    \end{align*}
    
    Therefore, in order to show the desired boundedness, we only need to show that $\delta_{t,s}$ is bounded.
    Since $\delta_{0,s}$ is fixed, we prove the boundedness of $\delta_{t,s}$ by induction. First, we suppose $\delta_{t-1,s}$ is bounded.

     According to our WC-Penalty Algorithm, we know that $\delta_{t,s} \ge 0$ for any $t=0, \dotsc, T-1$. Besides, for any $t>0$, we have:
    \begin{align*}
        \delta_{t,s} = \mathcal{P}_{\mathbb{R}_+}\Big(\delta_{t-1,s} - \eta v(\lambda_{s}f_s(z_{t-1}) + \delta_{t-1,s} - \rho_{t-1})\Big).
    \end{align*}
    
    If $\lambda_{s}f_s(z_{t-1}) + \delta_{t-1,s} - \rho_{t-1} \ge 0$, then $0 \le \delta_{t,s} \le \delta_{t-1,s}$. Thus, the boundedness of $\delta_{t,s}$ can be derived from the boundedness of $\delta_{t-1,s}$.
    
    If $\lambda_{s}f_s(z_{t-1}) + \delta_{t-1,s} - \rho_{t-1} < 0$, then we have:
    \begin{equation*}
        0 \le \delta_{t,s} \le \delta_{t-1,s} + \eta v\Big(\rho_{t-1} - \lambda_{s}f_s(z_{t-1}) - \delta_{t-1,s}\Big) \le \delta_{t-1,s} + \eta v \rho_0,
    \end{equation*}
    where the last inequality is due to the non-increasing property of the sequence $\{\rho_t\}_{t=0}^{T-1}$ demonstrated in \textbf{Step C} of the analysis of \Cref{thm:WC-Penalty}. This ends our proof.
\end{proof}
Therefore, by the boundedness, we can argue that $|\min\{\omega_s, \rho - \lambda_sf_s(z)\}|\le \epsilon$ indeed implies the complementary slackness condition $\omega_s(\lambda_sf_s(z) - \rho)\approx 0$.

\textbf{Part C:} From these analyses, we observe that (1) the KKT system defined in \Cref{def:KKT-system} is not strictly equivalent to the conventional KKT condition, but (2) in our context, $\|\mathcal{K}(\rho, z, \omega, \nu, \lambda)\|_2^2 \le \epsilon$ still ensures that all of the four kinds of original KKT conditions are only $\epsilon$-violated. The reason why we control this surrogate system primally lies in the difficulty of handling the inequality terms in the original KKT conditions, which are challenging to be quantified. Our newly defined KKT system not only forms the basis for the subsequent algorithmic design and theoretical analysis, but also constitutes a novel contribution in its own right.

\section{Synthetic Examples}

To illustratively validate our theoretical results, we further provide three synthetic examples demonstrating that: (i) our \Cref{ass:regular} does \textit{not} imply stronger conditions; (ii) our \Cref{alg:WC-Penalty} performs well in ECMO problems; and (iii) our \Cref{alg:WC-Penalty} performs well in LLGC-MTBL problems.

\textbf{(i)} Consider an LLGC-MTBL problem with $x,y\in\mathbb{R}$:
\begin{equation*}
    \min_{x,y} \big(y, y+1\big)^\top, \,\,\, \text{s.t. }y\in\mathcal{M}(x):= \arg\min_y g(x,y) = xy+\frac{1}{4}y^4.
\end{equation*}
Clearly, we have: $h(x,y) = \nabla_y g(x,y) = x+y^3$ and $\nabla_{yy}^2 g(x,y)=3y^2 \ge 0$. This implies that $g(x,\cdot)$ is \textit{convex but not strongly convex} for any $x$. To validate \Cref{ass:regular}, we note that the associated gradient matrix is:
\begin{equation*}
    \begin{pmatrix}
        1 & 0 \\ 3y^2 & 1
    \end{pmatrix},
\end{equation*}
which always has a determinant of $1$, meaning it is full column rank, thus perfectly satisfying \Cref{ass:regular} but not requiring strong convexity.

\begin{figure}[t]
    \centering
    \begin{subfigure}[b]{0.4\textwidth}
        \centering
        \includegraphics[width=\linewidth]{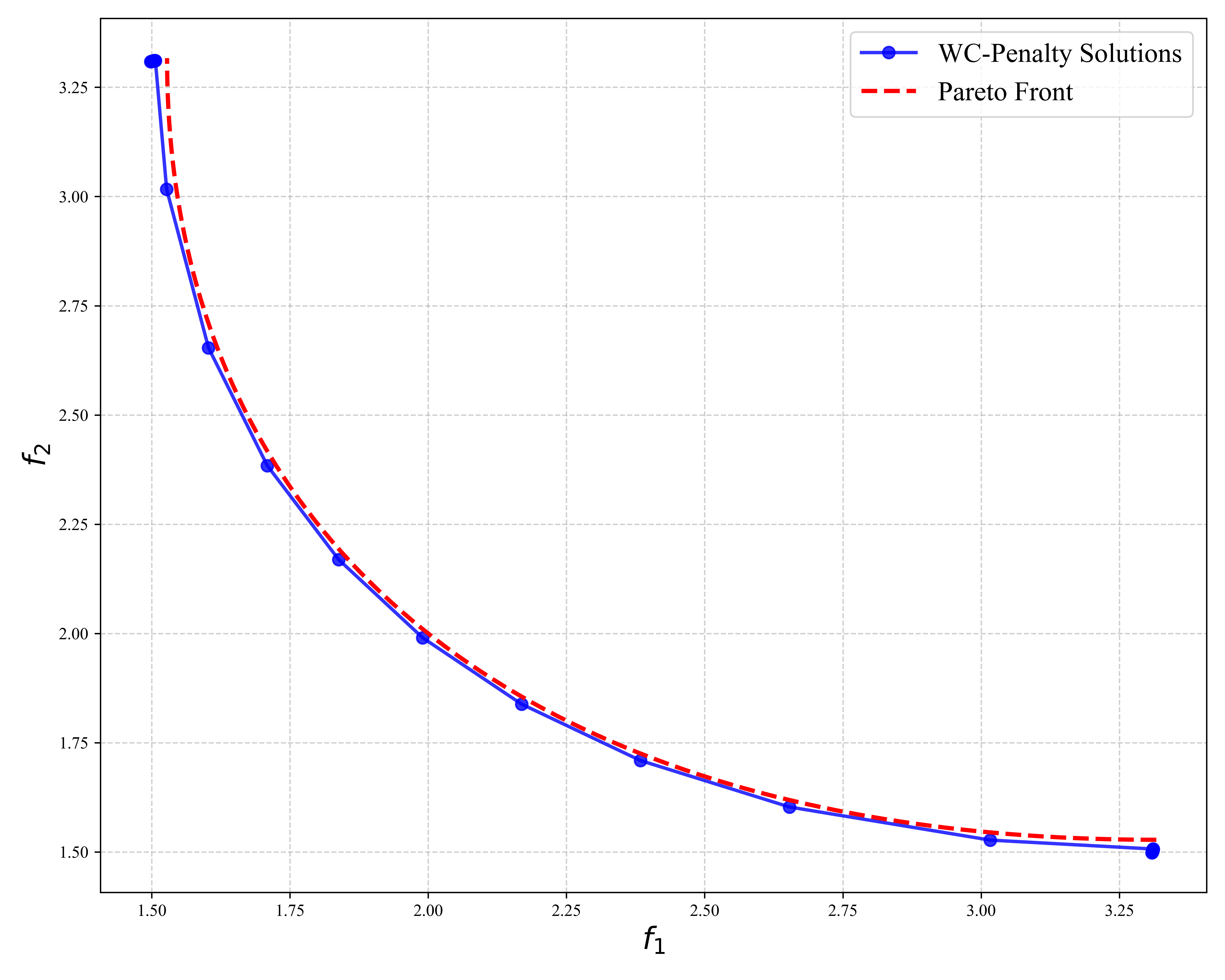}
        \caption{Synthetic example for ECMO.}\label{fig:synthetic_ECMO}
    \end{subfigure}
    \hfill
    \begin{subfigure}[b]{0.4\textwidth}
        \centering
        \includegraphics[width=\linewidth]{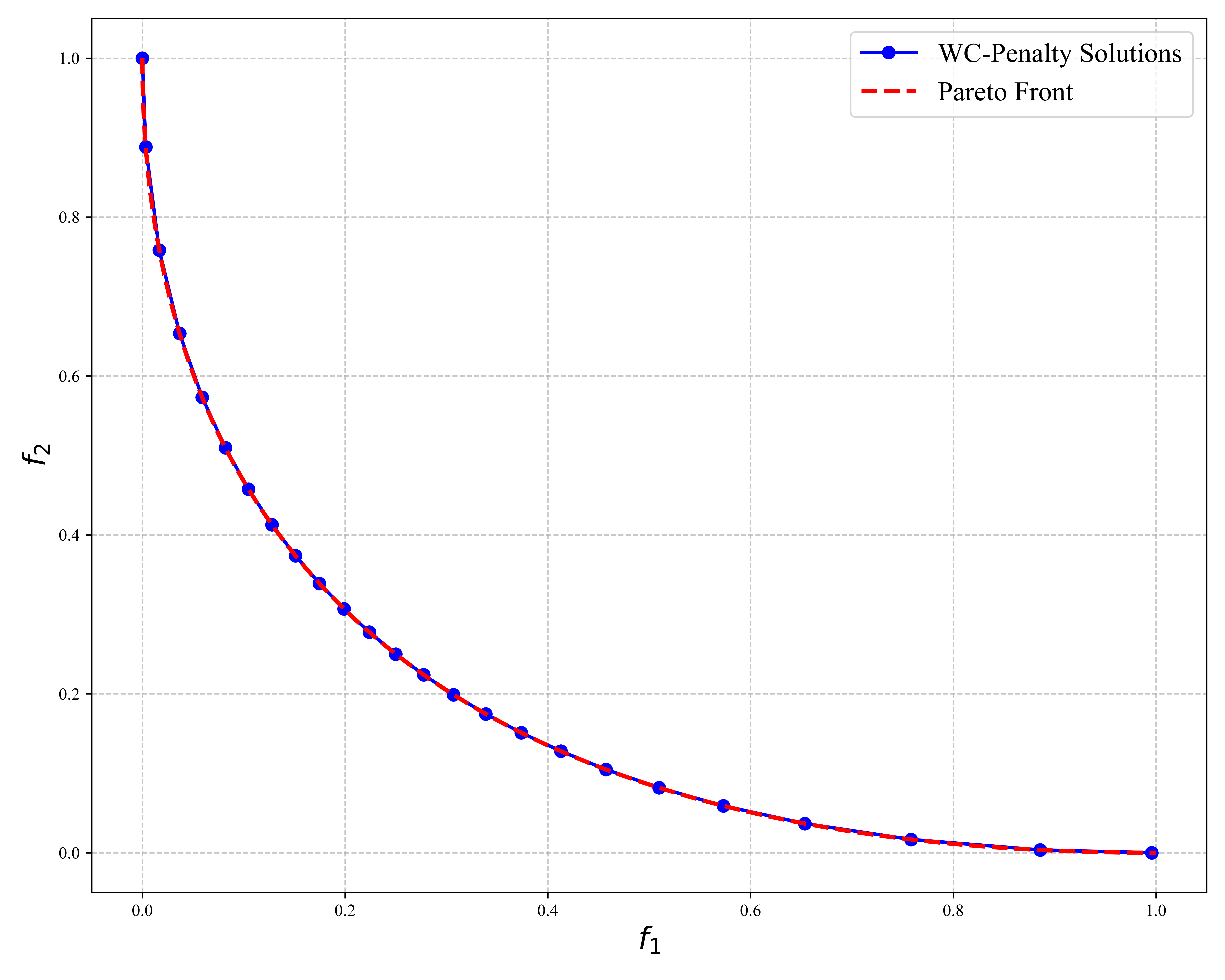}
        \caption{Synthetic example for MTBL.}\label{fig:synthetic_MTBL}
    \end{subfigure}
    \caption{Additional synthetic examples.}\label{fig:synthetic}
\end{figure}

\textbf{(ii)} Following the synthetic example in \cite{gebken2017descent}, we consider an ECMO problem as follows:
\begin{equation*}
    \min_{z\in\mathbb{R}^2}F(z)=\big((z_1-2)^2+(z_2-1)^2,(z_1-2)^2+(z_2+1)^2\big)^\top \,\,\, \text{s.t. }h(z)=-z_1^2-z_2^2+1=0.
\end{equation*}
As shown in \Cref{fig:synthetic_ECMO}, we uniformly select $11$ preference vectors across $\Delta_2^{+}$, and our WC-Penalty algorithm recovers the entire Pareto front.

\textbf{(iii)} Following the synthetic example in \cite{ye2024first}, we consider an LLGC-MTBL problem as follows:
\begin{equation*}
    \min_{x,y}\big((y_1-1)^2+(y_2-x)^2+y_3^2,(y_1-2)^2+(y_2-x)^2+y_3^2\big)^\top \,\,\, \text{s.t. }y\in\arg\min_yg(x,y):=\frac{1}{2}(y_1-x)^2+\frac{1}{2}(y_2-x)^2+\frac{1}{4}y_3^4.
\end{equation*}
It is easy to verify that $\nabla_{yy}^2g(x,y)=\mathrm{Diag}(1,1,3y_3^2)$, which implies that $g(x,\cdot)$ is convex but not strongly convex for any given $x$. As shown in \Cref{fig:synthetic_MTBL}, we uniformly select $26$ preference vectors across $\Delta_2^{+}$, and our WC-Penalty algorithm, again, explores the entire Pareto front.

\end{document}